%% file: main.tex
\documentclass[journal]{IEEEtran} 

\IEEEoverridecommandlockouts                              

\usepackage{cite}
\usepackage{amsmath,amssymb,amsfonts,enumitem}
\usepackage{algorithmic}
\usepackage{graphicx}
\usepackage{textcomp}
\usepackage{xcolor}
\usepackage{siunitx} 
\usepackage{amsthm}
\usepackage{mathtools}
\usepackage{cleveref}
\usepackage{placeins}
\usepackage{dblfloatfix}
\usepackage[acronym]{glossaries}
\usepackage{tikz}
\usepackage[normalem]{ulem}

\graphicspath{{./pictures/}}

\newcommand{\T}[1]{\boldsymbol{\mathbf{#1}}}
\newcommand{\inv}[1]{#1^{-1}}
\newcommand{\transp}[1]{#1^{T}}

\newcommand{\q}{\T{q}}
\newcommand{\dq}{\dot{\T{q}}}
\newcommand{\ddq}{\ddot{\T{q}}}

\newcommand{\x}{\T{x}}
\newcommand{\dx}{\dot{\T{x}}}
\newcommand{\ddx}{\ddot{\T{x}}}

\newcommand{\dqminus}{\dq^-}
\newcommand{\dqplus}{\dq^+}
\newcommand{\dxminus}{\dx^-}
\newcommand{\dxplus}{\dx^+}

\newcommand{\imp}{_\ast}
\newcommand{\iv}{_\textup{iv}}

\newcommand{\J}{\T{J}}
\newcommand{\M}{\T{M}}
\newcommand{\A}{\T{A}}

\newcommand{\Real}{\mathbb{R}}

\newcommand{\Pma}{\T{P}_{q}}
\newcommand{\elMap}{\T{Q}}
\newcommand{\Px}{\T{P}_{x}}
\newcommand{\taskMap}{\T{X}}
\newcommand{\range}{\textup{range}}

\newcommand{\Axfr}{\T{A}_{x,t}}
\newcommand{\lfr}{\T{\lambda}_t}
\newcommand{\Lfr}{\T{\Lambda}_t}
\newcommand{\Atotal}{\bar{\T{A}}}
\newcommand{\Axtotal}{\bar{\T{A}}_{x}}
\newcommand{\ltotal}{\bar{\T{\lambda}}}

\newcommand{\eq}{_{c}}
\newcommand{\Peq}{\left(\T I - \T P\eq^T\right)}

\newcommand{\Ml}{\T{M}_l}
\newcommand{\Mm}{\T{M}_m}
\newcommand{\ql}{\q_l}

\newcommand{\qm}{\q_m}

\newcommand{\pseudoInv}[2]{\T{#1}^{#2^\dagger}}
\newcommand{\Ax}{\A_x}

\newcommand{\Jmp}[1]{\inv{\M}\transp{\T{#1}}\inv{\left(\T{#1}\inv{\M}\transp{\T{#1}}\right)}}

\newcommand{\note}[1]{\textcolor{black}{#1}}

\newtheorem{theorem}{Theorem}
\newtheorem{lemma}{Lemma}
\newtheorem{assumption}{Assumption}
\newtheorem{corollary}{Corollary}

\setacronymstyle{long-short}
\newacronym{CoM}{CoM}{center of mass}

\newcommand*\circled[1]{\tikz[baseline=(char.base)]{
		\node[shape=circle,draw,inner sep=1pt] (char) {\scriptsize{#1}};}}

\usepackage{setspace}
\definecolor{sidebox_bg}{rgb}{1,1,0.4}

\title{The Nonsmooth Impact Direction (NSID) \note{of} Robotic \note{Systems} 
	\thanks{This project has received funding from the European Research Council (ERC) under the European Union’s Horizon 2020 research and innovation programme (Grant agreement No. 101248099).}	
}
\author{
		Annika Kirner and Christian Ott \thanks{Annika Kirner and Christian Ott are with the Automation and Control Institute, TU Wien, 1040
		Vienna, Austria (e-mail: annika.kirner@tuwien.ac.at; christian.ott@tuwien.ac.at). Christian Ott is also with the Institute of Robotics and Mechatronics, German Aerospace Center (DLR), 82234 Wessling, Germany.}}

\begin{document}

\maketitle
\thispagestyle{empty}
\pagestyle{empty}

\begin{abstract}
Collisions of rigid-link robots and rigid environments are often modeled as instantaneous events. Under this idealization, the impact forces become impulsive and the system velocities nonsmooth. In this work, we systematically analyze pre- and post-impact velocities focusing on what we refer to as the "nonsmooth impact direction" (NSID). \note{We show that it is a characteristic direction of a robotic impact and largely independent of contact properties.
} The results are directly applicable to large classes of backdrivable robotic systems with rigid links. \note{ We address particularities of systems with non-elastic and flexible joints, unconstrained as well as constrained systems. Further, we show that the approach direction w.r.t the NSID sets the direction of the impulsive force in frictional, inelastic impacts. The comprehensive theoretical analysis of this work supported by an experimental validation may serve as a foundation for future planning and control algorithms for various robotic impact applications. These can include humanoid locomotion on a slippery surface or repetitive hammering.}
\end{abstract}

\begin{IEEEkeywords}
contact modeling, dynamics, motion and path planning, impact-aware robotics
\end{IEEEkeywords}

\input{introduction}
\input{background}
\input{model2}
\input{interpretations}
\input{friction_analysis_NEW2}
\input{extended_models_V2}

\input{experiments}

\appendix
\subsection{\note{Projectors}}\label{app:projectors}
\note{A matrix $ \T P \in \Real^{a\times a} $ is a projector, iff it is idempotent, i.e. if it holds $ \T P^2 = \T P $. By this definition, all eigenvalues of a projector can be shown to be either $ 0 $ or $ 1 $. 
In this work, mainly projectors expressed in the form \eqref{eq:projector}, i.e. $ \T P = \pseudoInv{U}{\T W}\T{U} $ 
are encountered. The range of the projector \eqref{eq:projector} and thus the eigenspace corresponding to the $ b $-fold eigenvalue  of $ 1 $ can be identified to be $ {\range(\T P) = \range(\inv{\T{W}}\T{U}^T)} $. For the $ {(a-b)} $-dimensional kernel it holds $ {\ker(\T P) = \ker(\T U)} $, where $ {\ker({\T U}) = \range(\T U^T)^\perp}$. Note that $ \ker(\T P) $ can be interpreted as the direction of the projection onto the $ (a-b) $-dimensional subspace of $ \Real^a $ defined by $ \range(\inv{\T{W}}\T{U}^T) $. The eigenspaces of $ \T P $ are summarized in \Cref{tab:eig P}. Besides $ \T P $, also $ \T P_2 = \T{I} - \T P $ is a projector. It can be intuitively understood that it holds $ {\range(\T P_2) = \ker(\T P)} $ and $ {\ker(\T P_2) = \range(\T P)} $.
\begin{table}
	\centering
	\setlength\extrarowheight{3pt}
	\begin{tabular}{ c | c c }
		\hline
		eigenvalue & dimension of eigenspace & eigenspace \\ 
		\hline
		0 & a-b &$ \ker(\T U) $\\  
		1 & b & $ \range(\inv{\T W}\T U^T) $ \\
		\hline  
	\end{tabular}
	\vspace{1mm}
	\caption{Eigenspaces of a projector $ \T P $ defined as $ \T P = \pseudoInv{U}{\T W}\T{U} $ with matrix $ {\T{U}\in\Real^{a\times b}} $ of rank $ b $}
	\label{tab:eig P}
\end{table}}

\subsection{\note{Proof of \Cref{thm:constrained model}}}\label{app:constrained_model}
\note{Consider the constrained robotic system \eqref{eq:RD rigid} with \eqref{eq:eq constraint tau contact} as described in Sec.~\ref{sec:constraint}. In order for the equality constraint to remain fulfilled, it needs to hold $ \T\phi\eq = \T 0 $, $ \A\eq\dq = \T 0 $, and $ \dot{\A}\eq \dq + \A\eq\ddq = \T 0 $. With this 
we have 
\begin{align}\label{eq:lambda equality}
	\nonumber
	\T\lambda\eq = &- \left(\pseudoInv{A}{\M}\eq\right)^T \left(\T\tau + \A^T\lambda - \T h(\q,\dq)\right)\\
	& - \inv{\left(\A\eq\inv{\M}\A\eq^T\right)}\dot{\A}\eq \dq
\end{align}
for the generalized equality constraint force.}

\note{Consider an impact such that the contact and configuration satisfy Assumptions~\ref{a:non singular basic}-\ref{a:constraint}.
Integrating the dynamics \eqref{eq:RD rigid} with \eqref{eq:eq constraint tau contact} over the infinitesimal impact duration (\Cref{a:instantaneous}), we obtain
\begin{align}\label{eq: impact equ constrained}
	\M\left(\dqplus - \dqminus\right) &= \A^T\Lambda + \A\eq^T\underbrace{\left(\lim_{\Delta t \rightarrow 0}\int_{t}^{t+\Delta t} \T\lambda\eq d\tau\right)}_{\T\Lambda\eq}.
\end{align}
The impulsive, generalized equality constraint force $ {\T\Lambda\eq \in \Real^p} $ therein, is obtained as 
\begin{align}\label{eq: impulsive constraint force}
	\T\Lambda\eq &= -\left(\pseudoInv{A}{\M}\eq\right)^T\A^T\Lambda 
\end{align}
by integration of \eqref{eq:lambda equality} over the impact duration.
Combining \eqref{eq: impact equ constrained} and \eqref{eq: impulsive constraint force} yields
\begin{align}\label{eq: impact equ constrained full}
	\M\left(\dqplus - \dqminus\right) &= \left(\T I - \underbrace{\A\eq^T\left(\pseudoInv{A}{\M}\eq\right)^T }_{\coloneq\T P\eq^T}\right)\A^T\Lambda.
\end{align}
We identify the dynamically consistent projector \cite{Khatib1995} $ \T I - \T P\eq^T $, mapping generalized torques on the constraint. Note that it holds
\begin{equation}\label{eq: MP}
	\inv{\M}\Peq = (\T I -\T P\eq) \inv{\M}.
\end{equation} The transposed projector $ \T I - \T P\eq $, projects generalized velocities on the null space of the equality constraint. All generalized velocities $ \dq $, and thus also $ \dqminus $ and $ \dqplus $, have to be compatible with the equality constraint at all times. Thus, in particular, it always holds $ \A\eq\dq = \T 0 $ and 
\begin{equation}\label{eq:compatible vel}
	\dq = (\T I - \T P\eq)\dq.
\end{equation}}

\note{With \eqref{eq: impact equ constrained full} and \Cref{a:restitution} we obtain the map \eqref{eq:impact map constrained} 
\begin{align}\label{eq:impact map constrained_app}
	\dqplus =  \left( \T I - \left(1+e\right)\T P_{q,c}\right)  \dqminus
\end{align}
utilized in \ref{sec:constraint}. 
The matrix $ \T P_{q,c} $ can be shown to be a projector and is first derived as
\begin{align}\label{eq:Pqc}
	\T P_{q,c} &= \inv{\M}\left(\T I - \T P\eq^T\right)\A^T\underbrace{\inv{\left(\A\inv{\M}\left(\T I - \T P\eq^T\right)\A^T\right)}}_{\circled 1}\A.
\end{align}
Note that the inverse marked with \circled{1} always exists due to \Cref{a:constraint}. 
Exploiting idempotence of $ {(\T I -\T P_c^T)} $, and \eqref{eq: MP} we can reformulate \eqref{eq:Pqc} in the form \eqref{eq:Pqc 2}, i.e.
\begin{align}
	\T P_{q,c} & = \T W_{q,c} \A^T\inv{\left(\A\T W_{q,c} \A^T\right)}\A,
\end{align}
with $ \T{W}_{q,c} = (\T I -\T P_c)\inv{\M}(\T I -\T P_c^T) \in \Real^{n\times n}$, which concludes the proof.
}

\bibliographystyle{IEEEtran}
\bibliography{impacts_bib_NEW}

\end{document}

%% file: introduction.tex
\section{Introduction}\label{sec:Intro}
\note{Humans routinely perform physical impacts in everyday activities.} 
Examples range from placing objects down energetically, hammering, and assembling furniture components, to running, jumping, and catching, kicking, or batting an object. In all these cases, we benefit from the large, impulsive forces,  
and accelerate or decelerate our own motion or that of objects we manipulate within a very short time.

With cobots and humanoids being on the rise, robots are no longer only used in industry and for clearly defined, repetitive tasks. Instead, they are safely collaborating with humans side by side and are taking over human tasks as needed. However, a lot of these robotic systems are not built to resist impacts. Especially the gears and torque sensors of cobots are prone to be damaged by the impulsive forces. While slight impacts have still been demonstrated with common cobots \cite{Aouaj2020, Yan2024, Dehio2021}, the range of feasible contact velocities is strongly restricted. 

Besides challenges for the hardware, dealing with impacts also is not straightforward for planning and control of the robotic system. 
At an impact, velocity changes have been observed to build up in a matter of few milliseconds \cite{Wang2022, Aouaj2020}. Thus, they cannot be shaped desirably during their occurrence by state-of-the-art control architectures operating at frequencies of few $ \SI{}{\kilo\hertz} $. Hence, a planning and control framework for impact applications needs to be able to deal with sudden velocity changes and large external forces whose exact timing and location might be hard to predict.

\note{To address these problems, and to enable robots to also take on impact tasks, the field of impact-aware robotics has gained increasing attention. Instead of treating impacts as a disturbance to be avoided, they are considered as part of the task. Besides creating robust hardware \cite{Shu2024, Liu2008, Ostyn2024}, impact models are derived and validated \cite{Aouaj2020, Arias2024, Wang2022, Wang2021, Dehio2021}, and planning and control methods \cite{Salehian2018, Stouraitis2020, Khurana2024, Yang2021, Yan2024, Steen2024} are designed.} 

\begin{figure}[tb]
	\vspace{1mm}
	\centering
	\includegraphics[width=0.49\textwidth]{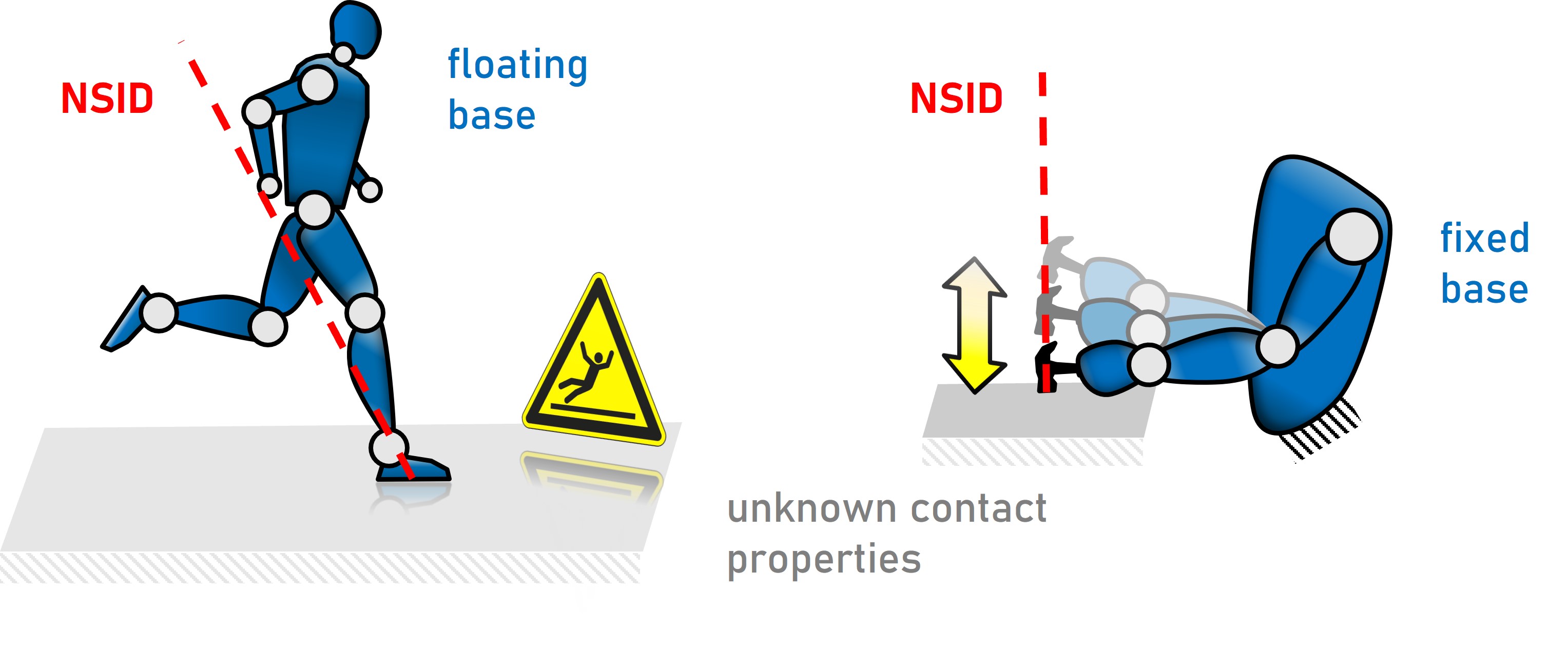}
	\caption{The nonsmooth impact direction is characteristic to impact scenarios of robotic systems and \note{largely independent of contact properties}. It can be \note{beneficial} for various applications, like impacts without slippage and to align pre- and post-impact velocities for repetitive tasks.}
	\label{fig:intro}
\end{figure}

\note{For the development of model-based impact planners and controllers, reliable impact models are required. The impact dynamics of two colliding bodies can be divided into a compression phase and a restitution phase, corresponding to the reduction of the bodies' relative speed to zero and the buildup of a reverse velocity, respectively \cite{Stronge2018}. This comes with material deformations, which are relatively small and short lived for almost rigid bodies. Contact models describing the dynamics of the deformation exist \cite{Gilardi2002} and have been utilized for robotic applications \cite{Yan2024}. However, as the impact dynamics of robots with rigid links colliding with rigid surfaces have been observed to be significantly faster than the (controlled) robot dynamics, impacts are often treated as instantaneous events in robotics \cite{Walker1994, Wang2022, Arias2024}. Under this idealization, forces acting at an impact become impulsive and system velocities nonsmooth. Then, two velocities are related to the time instant of the impact: a pre-impact and a post-impact one. The common model used to map pre- to post-impact generalized velocities of a robotic system considers the robot as a jointed rigid-link system without actuation at the impact. It is founded in nonsmooth mechanics \cite{Brogliato1999} and has been applied in robotics for more then thirty years \cite{Walker1994, Lee2000} for fixed-base manipulation \cite{Proper2023a} as well as locomotion \cite{Westervelt2007,Yang2021}. Recently, it has been experimentally validated for torque-controlled robots \cite{Aouaj2020, Arias2024}. Moreover, extensions and modifications exist, e.g., for kinematic-controlled, non-backdrivable robots \cite{Wang2022, Wang2021}. 
}

\note{While impacts clearly modify system velocities of robots, not all directions of the configuration space or task space are equally affected \cite{Brogliato1999}. Recent works demonstrate that knowledge about impact-invariant and nonsmooth directions of an robotic impact can benefit planning and control \cite{Yang2021, Kirner2024}.}
\note{However, to the best of our knowledge, there is no comprehensive study of such directions in robotic impacts, which analyzes their properties and makes them accessible for further applications.}

\subsection{Contribution}\label{sec:contribution}
\note{In this work, we present a systematic, projection-based analysis of the commonly used, instantaneous, frictionless robotic impact model, which incorporates contact elasticity via Newton’s restitution law \cite{Walker1994}.
Inspired by \cite{Yang2021, Kirner2024}, we separate first the configuration space, and then the task space in an impact invariant subspace as well as a nonsmooth impact direction (NSID). This decomposition is shown to not depend on the restitution coefficient and provides geometrically intuitive insights into effects of the approach direction on the rebound direction.} 

\note{Our work focuses on the NSID, which we identify as a characteristic direction of a robotic impact scenario. Foremost, it is the unique approach direction, which is reversed upon an arbitrarily elastic impact. Only the scaling of the post-impact velocity along the reversed NSID depends on the impact elasticity.} 

\note{Moreover, we discuss the NSID's relation to the inertia ellipsoid showing that the NSID aligns with a principal axis of the ellipsoid if and only if it is perpendicular to the contact normal.}

\note{For the special case of fully inelastic impacts, we show that the NSID can be interpreted as the "non-slippage impact direction" derived in \cite{Kirner2024}. We extend the analysis of \cite{Kirner2024} by explicitly taking contact friction into account. Notably, we prove that an approach  along the NSID generates an impulsive force that is normal to the constraint surface, independent of the frictional properties. Consequently, such an approach prevents post-impact slippage for arbitrarily frictional, fully inelastic impacts.}

\note{The analysis of this paper is directly applicable to backdrivable, rigid-link robotic systems with fixed and floating bases. We address particularities of robots with flexible joints and constrained systems separately.}

\note{Experiments with a passive multi-body system and a torque-controlled robot under various control approaches validate our core statements and their practical relevance.}

\note{Our results shall inform future impact-aware planning or control algorithms and applications ranging from humanoid running to hammering. }

\subsection{Structure of the Work}
\note{We discuss related works in Sec.~\ref{sec:rel_works} and provide the necessary mathematical fundamentals as well as the utilized state-of-the-art impact model in Sec.~\ref{sec:fundamentals}. In Sec.~\ref{sec:model}, the considered impact model is systematically analyzed and the NSID is introduced in configuration and task space. In Sec.~\ref{sec:NSID discussion}, we discuss different interpretations of the NSID in detail. Section~\ref{sec:friction} addresses the special case of fully inelastic impacts. In the following, we extend our analysis to flexible joint robots in Sec.~\ref{sec:el joints}, and to constrained robots in Sec.~\ref{sec:constraint}. Experiments with a passive system and a torque-controlled manipulator are presented in Sections~\ref{sec:experiments} and~\ref{sec:franka_exp}, respectively.
Finally, we conclude the work in Sec.~\ref{sec: discussion}.}

%% file: background.tex
\section{\note{Related Works}}\label{sec:rel_works}

\subsection{\note{Nonsmooth Mechanics and Impact Mechanics}}\label{sec:rel_works:nonsmooth}
\note{Instantaneous impacts in multibody systems have been extensively studied in the mechanics literature, e.g., \cite{Pfeiffer1996, Stronge2018}. Within this context, the field of nonsmooth mechanics is particularly relevant, where seminal contributions include \cite{Moreau1988a, Brogliato1999, Glocker2001}. Although our analysis is not based on the same set-valued or measure-theoretic formalism traditionally used in nonsmooth mechanics, the underlying physical principles 
are identical. Consequently, the direction that we refer to as the NSID naturally appears in these works. In particular, it is expressed as the normal direction to the constraint surface under the kinetic metric in \cite{Brogliato1999}. To the best of our knowledge, however, a dedicated and comprehensive analysis of this direction with robotics applications in mind has not yet been presented. Moreover, our projection-based derivation provides a geometric interpretation that, we believe, makes the concept more accessible to the robotics community.}

\subsection{\note{Impact-Aware Robotics}}\label{sec:rel:impact-aware}
\note{Recently, impact applications have been successfully demonstrated with robotic systems such as pushing \cite{Khurana2024}, fast grasping \cite{Steen2024, Wang2023}, catching \cite{Yan2024}, or jumping \cite{Yang2021}. For that purpose, control frameworks like reference-spreading \cite{Steen2024,Saccon2014} or impact-invariant control \cite{Yang2021} have been proposed, which are capable of dealing with discontinuous control errors appearing at uncertain timing. To exploit the impact dynamics for tasks like pushing or hammering, a dedicated planning of the robot's pre-impact state has been discussed. For example, \cite{Kirner2024} achieves impacts without post-impact slippage by approaching the contact along a configuration-dependent direction, while \cite{Khurana2025} selects the impacting link based on the desired reflected inertia.}
\note{The impulsive force at impact depends on the reflected inertia in the direction of the contact normal \cite{Walker1994}, which can be characterized by the generalized impact ellipsoid, dual to the generalized inertia ellipsoid \cite{Asada1983}. While minimizing reflected inertia is often pursued for safety in human–robot interaction \cite{Gertz1991, Mansfeld2017}, intentional impacts may instead benefit from a large or optimized reflected inertia in the contact direction \cite{Khurana2024,Khurana2025}.}

\note{The work at hand aims to further inform model-based impact-aware planning and control frameworks by providing a systematic analysis of impact-induced discontinuities. The works \cite{Yang2021} and \cite{Kirner2024} are closely related and will be situated with respect to our contribution in the following.}

\subsubsection{\note{Impact Invariant Control}}
\note{The recent work \cite{Yang2021} identifies \emph{impact-invariant} velocity directions in task and configuration space for inelastic impacts based on the nonsmooth impact model \cite{Walker1994}. Their impact-aware tracking controller projects the velocity error onto this subspace in a short time window around the expected impact. This strategy effectively shields the controller from inaccuracies in the estimated impact time, which is demonstrated in bipedal jumping experiments. Although the NSID is not derived in \cite{Yang2021}, the controller rejects all components of the velocity error parallel to the NSID around the impact event.}
\note{While \cite{Yang2021} focuses on the control strategy and its experimental validation, we provide an extensive analysis of the underlying impact dynamics and the associated invariant and nonsmooth subspaces. As opposed to \cite{Yang2021}, we consider the general case of arbitrarily elastic impacts and provide extensions to robots with flexible joints and constrained systems.}

\subsubsection{\note{Targeted Nonslippage Impacts}}
\note{The work \cite{Kirner2024} shows that post-impact slippage can be prevented by performing the impact along a unique, target-dependent approach direction, termed the \textit{"non-slippage impact direction"}. Experiments with a torque-controlled manipulator confirm the theoretic results. The analysis in \cite{Kirner2024} is based on the impact model of \cite{Walker1994} but restricted to systems with non-elastic joints and fully inelastic, frictionless contacts.}

\note{After deriving and interpreting the NSID for impacts with arbitrary coefficient of restitution in the following, we will show that it can be interpreted as the \textit{"non-slippage impact direction"} from \cite{Kirner2024} for the special case of fully inelastic impacts. The shared acronym was deliberately chosen to reflect this connection. Consequently, for inelastic impacts, all results obtained in the more general analysis presented in this work can be leveraged to realize impacts without slippage, for example in flexible-joint robots or constrained systems. Moreover, we extend the results of \cite{Kirner2024} by incorporating contact friction into both the theoretical analysis and the experimental validation for inelastic contacts.}

\section{Fundamentals}\label{sec:fundamentals}

\subsection{Notation and Mathematical Background}\label{sec:fundamentals:math} 
Consider an arbitrary matrix $ {\T{U}\in\Real^{b\times a}} $ with $ {a \geq b > 0} $ of rank $ b $. \note{In the following we will denote the right generalized inverse
\begin{equation}\label{eq:pseudoInv}
	\pseudoInv{U}{\T W} = \inv{\T{W}}\T{U}^T\inv{\left(\T U \inv{\T{W}}\T{U}^T\right)}
\end{equation}
of $ \T U $ as $ \pseudoInv{U}{\T W} $. Therein $ \T{W} \in \Real^{a\times a} $ is a positive definite and symmetric weighting matrix. It holds $ \T U \pseudoInv{U}{\T W} = \T I$, with identity matrix $ \T I $.  In this work, we will mainly encounter generalized inverses when identifying projectors, e.g. of the form
\begin{equation}\label{eq:projector}
	\T P = \pseudoInv{U}{\T W}\T{U}.
\end{equation}
Our analysis strongly relies on properties of projectors which we summarize in \Cref{app:projectors}.}

\subsection{Dynamics and Impact Model}
Consider a robotic system with dynamics modeled by
\begin{equation}\label{eq:RD rigid}
	\M(\q)\ddq + \T{h}(\q,\dq) = \T{\tau} + \T\tau_{\textup{contact}}.
\end{equation}
The $ n $ generalized coordinates are stacked in $ \q\in\Real^n $. We denote the inertia matrix\footnote{Note that, for readability, we will omit arguments of functions after the first introduction, unless they are required for clarity.} as $ \M\in\Real^{n\times n} $.  Coriolis-,  centrifugal, and gravitational torques are contained in $ \T{h} \in \Real^n $. 
\note{The generalized torques are contained in $ \T{\tau} \in \Real^n $. For fully actuated systems, $\T{\tau}$ contains the actuator torques, while in the underactuated case it can be written as $\T{\tau} = \T{B}\T{u}$, with input matrix $\T{B} \in \Real^{n \times u}$ and control input $\T{u} \in \Real^{u}$, $u < n$. The following analysis is independent of the degree of underactuation and we will not have to distinguish between the two cases. Finally, $ \T{\tau}_{\textup{contact}} \in \Real^n $ denotes generalized torques introduced due to interaction with the environment.}

\subsubsection{Task Space Dynamics}
We consider a Cartesian task coordinate $ \x\in\Real^m $ with $ m \leq n $ obtained from the configuration via the forward kinematics $ \x = \T{f}(\q) $. Via the task Jacobian $ {\T{J}(\q) = \frac{\partial \T f(\q)}{\partial \q} \in {\Real^{m\times n}}} $, the task velocities can be obtained as $ \dx = \J\dq $. Differentiating this expression and substituting \eqref{eq:RD rigid}, the task space dynamics model
\begin{equation}\label{eq:task dynamics}
	\M_x(\q) \ddx + \T{h}_x(\q,\dq) = \note{\M_x\J\inv{\M}(\T{\tau} + \T{\tau}_{x,\textup{contact}})}
\end{equation}
is derived. \note{Therein, $ {\M_x = \inv{(\J\inv{\M}\J^T)}  \in \Real^{m\times m}}$ denotes the task space inertia matrix, and the Coriolis, centrifugal and gravitational wrench is expressed as $ {\T{h}_x(\q,\dq) = \M_x\left(\J\inv{\M}\T{h}\left(\q,\dq\right)-\dot{\J}\dq\right)} \in \Real^m$.} 

\subsubsection{Inequality Constraint}
In this work, we consider impacts in the task space of the robot. Thus, the task coordinate is assumed to be subject to a onedimensional inequality constraint $ \varphi(\x) \geq \T 0 $ \note{(cf. Fig.~\ref{fig:A_phi} for an illustrative example)}. Free motion corresponds to an inactive constraint, i.e. $ \varphi(\x) > 0 $, with $ \T\tau_{\textup{contact}} = \T 0 $. For $ \varphi(\x) = 0 $, the robotic system is in contact with the environment. Wrenches $ \T{F}_{x,\textup{contact}} = \Ax^T\lambda $ on the task coordinate result from the contact force $ \lambda > 0 $. Therein, $ \Ax(\x) = \frac{\partial \varphi(\x)}{\partial \x } \in \Real^{1\times m}$ denotes the constraint Jacobian in task space. 

In configuration space, the constraint can be expressed as
\begin{equation}\label{eq:constraint}
	\phi(\q) \coloneq (\varphi \circ \T f)(\q) \ \geq 0.
\end{equation}
The constraint Jacobian $ \A(\q)\in\Real^{1\times n} $ is given by $ {\A(\q) = \frac{\partial\phi(\q)}{\partial\q}} $.
Note that it holds $ \A(\q) = \Ax(\T{f}(\q))\J(\q) $. For an active constraint the generalized contact torque $ {\T\tau_{\textup{contact}} = \A^T\lambda} $ is obtained.

\begin{figure}[tb]
	\centering
	\includegraphics[width=0.49\textwidth]{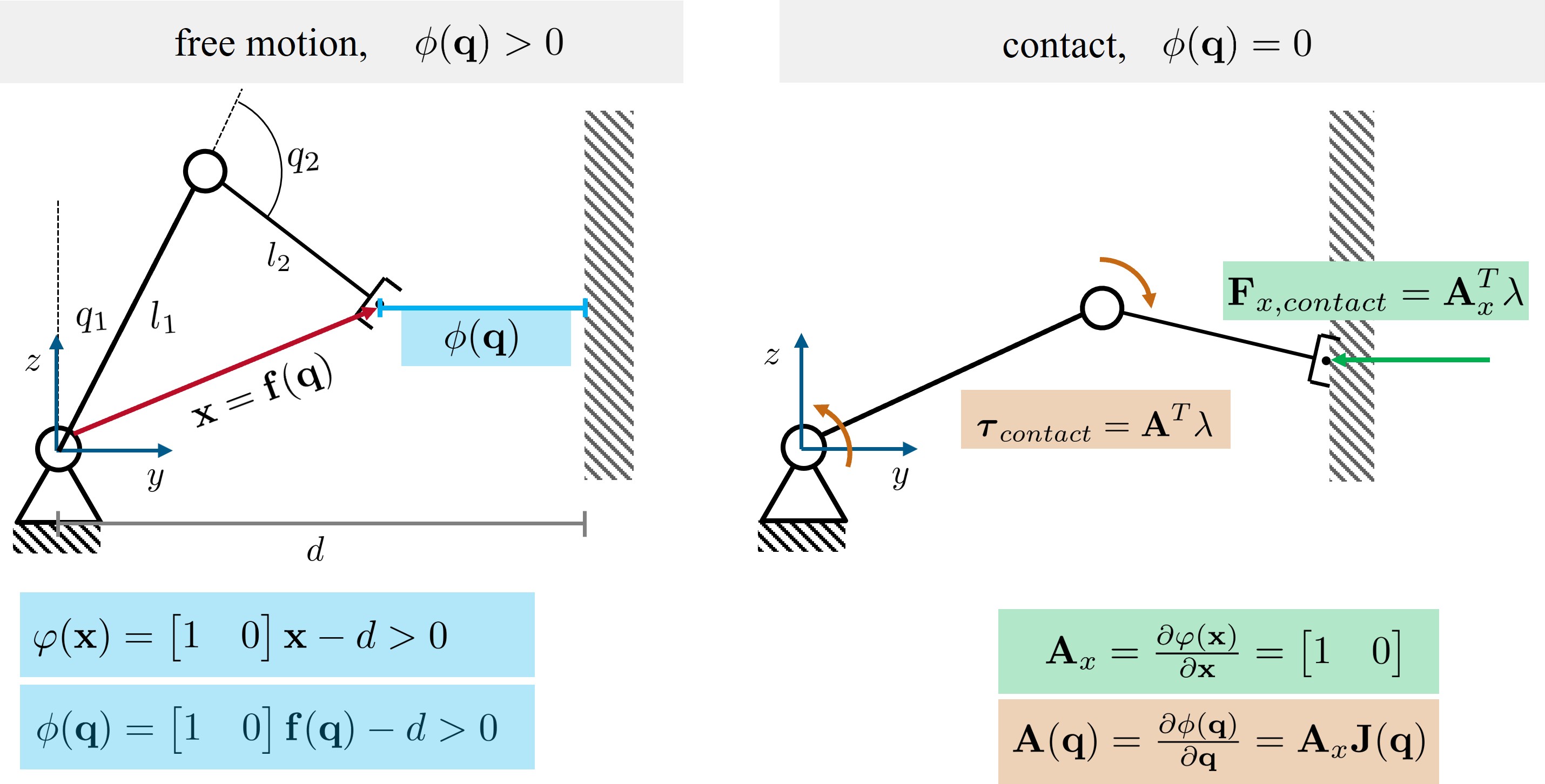}
	\caption{\note{Sketch of a simple robot, whose endeffector is constrained in $ y $-direction. In free motion, $ \phi(\q) > 0$ expresses the minimum distance to the surface. A contact force $ \lambda $ is feasible for $ \phi = 0 $. It becomes impulsive under an impact.}}
	\label{fig:A_phi}
\end{figure}

In the following, we study impacts satisfying the following assumption:
\begin{assumption}\label{a:non singular basic}
	At the impact, the robotic system is in a configuration that is not singular in direction of the inequality constraint. Thus, it holds $ \Ax \neq \T 0 $ and $ \A \neq \T 0 $.
\end{assumption}

For a frictional contact, friction forces can be yielded tangential to the surface $ \varphi(\x) = 0 $. However, we will further use the following assumption for elastic contacts:
\begin{assumption}\label{a:frictionless}
	The contact is frictionless.
\end{assumption}

\subsubsection{Instantaneous Contact Transitions}
At an impact, the robotic system transits from a free-motion to a contact phase at a non-vanishing contact velocity. 
\begin{assumption}\label{a:instantaneous}
	The impact is instantaneous. 
\end{assumption}
This is motivated by the fact that the contact dynamics are often significantly faster than the (link-side) robot dynamics. When they are either in or below the order of magnitude of the sampling time, 
effects of the impact cannot be mitigated by a controller \textit{during} the impact. Instead, either precautions must be taken to achieve desired post-impact behavior or possibly undesired post-impact behavior needs to be corrected afterwards.

Assuming that the configuration is constant over an impact, it becomes immediately clear that the inequality constraint \eqref{eq:constraint} can only remain satisfied if the system velocities become discontinuous at the moment of the impact. The contact force $ \lambda $ then becomes infinitely large.
Integrating the robot dynamics \eqref{eq:RD rigid} over the infinitesimal impact duration $ \Delta t \rightarrow 0 $, one obtains the impact equation \cite{Walker1994, Brogliato1999} 
\begin{equation}\label{eq:impact_eq rigid}
	\M(\q)(\dqplus - \dqminus) = \A^T(\q)\Lambda 
\end{equation}
relating system velocity jumps introduced by the impact to an impulsive force $ \Lambda = \lim_{\Delta t \rightarrow 0}\int_{0}^{\Delta t} \lambda dt $ with $ \Lambda \in \Real $.
Therein, $ \dqminus $ denotes the pre-impact generalized velocity, whereas $ \dqplus $ refers to the post-impact one. We will utilize the superscripts $ - $ and $ + $ to distinguish pre- and post-impact values of system velocities, respectively, throughout the paper.

\subsubsection{Impact Elasticity}
Depending on the properties of the contact, the kinetic energy of the robotic system 
can either be dissipated or, at least partially, be preserved. Several contact models exist describing this relation. 
In robotics, Newton's restitution law \cite{Gilardi2002} has commonly been used \cite{Walker1994, Lee2000, Khurana2024}\note{, which often allows closed-form solutions.}
\begin{assumption}[\textbf{Newton's restitution law}]\label{a:restitution}
	For the post-impact contact velocity it holds
	\begin{equation}\label{eq:el assumption}
		\dot\phi^+= \A(\q)\dqplus = -e\A(\q)\dqminus = -e \dot\phi^-,
	\end{equation}
	with restitution factor $ e \in [0,1] $.
\end{assumption}
 For a fully inelastic impact, i.e. $ e = 0 $, post-impact velocities $ \dot\phi^+ $ in constraint direction vanish. Conversely, the pre-impact value $ \dot\phi^- $ is reversed for a fully elastic impact, i.e. $ e = 1 $. Note that, for the particular case of frictionless impacts, which we consider here, the squared restitution factor directly relates pre- to post-impact kinetic \note{energy.} In particular, for $ e = 1 $, the kinetic energy of the robot is conserved over the impact.

Combining \eqref{eq:el assumption} and \eqref{eq:impact_eq rigid}, the impulsive force is obtained as
\begin{equation}\label{eq:Lambda}
	\Lambda = -(1+e)\inv{\left(\A\inv{\M}\A^T\right)}\A\dqminus.
\end{equation} 
Finally, the map
	\begin{subequations}\label{eq: el impact model rigid}
		\begin{align}
		\dqplus &= \left(\T I - \left(1+e\right)\underbrace{\Jmp{A}\A}_{\pseudoInv{A}{\T M}\A}\right) \dqminus \\
		\label{eq: Q}
		&= \underbrace{\left(\T I - \left(1+e\right)\T P_q\right)}_{\coloneq\elMap(\q)} \dqminus
	\end{align}
	\end{subequations}
between pre- and post-impact velocities follows from \eqref{eq:impact_eq rigid} and \eqref{eq:Lambda} without any further assumptions. Therein, we identify $ \T{P}_q = \pseudoInv{A}{\T M}\A $ to be a projector on $ \range({\inv{\M}\A^T}) $ and denote the matrix mapping $ \dqminus $ to $ \dqplus $ as $ {\T{Q}(\q)\in\Real^{n\times n}} $.

The impact map \eqref{eq: el impact model rigid} has been applied in the literature to a broad range of robotic systems.
Commonly, a fully inelastic case, i.e. $ e=0 $, is assumed \cite{Aouaj2020, Yang2021, Kirner2024}.

\subsubsection{Remarks on the applicability to generalized robotic systems}\label{sec:model_for_generalized}

Common \textit{torque-controlled robots} can be considered as flexible joint robots with comparably stiff joints. In \cite{Aouaj2020, Arias2024} it has been shown that the map \eqref{eq: el impact model rigid} can still predict post-impact behavior of this class of robotic systems for the considered case of $ e = 0 $. In \cite{Arias2024} it was further shown that the predictions can be improved, if the apparent motor inertia rendered by the inner torque control loop is added upon the link-side inertia matrix.

\textit{Kinematically-controlled robots: }The map \eqref{eq: el impact model rigid} treats the robot as a jointed multibody system at the instant of the impact. While the previously mentioned works show that it is accurate in experiments with backdrivable robotic systems, the works \cite{Wang2021, Wang2022} suggest that the model is less suitable to predict post-impact motions of kinematically-controlled systems.

\textit{Fixed and floating base systems:} It is important to note that the model derives for an arbitrary set of generalized coordinates without any assumptions on a fixed or floating base. Also, the degree of underactuation does not affect the derivation. Thus, the model can be directly applied to robotic systems with either type of base.

%% file: model2.tex
\section{Non-smooth impact directions in generalized velocity space and task velocity space}\label{sec:model}
In this section, we perform a detailed analysis of how the system velocities are affected by the impact modeled by \eqref{eq: el impact model rigid} and introduce the nonsmooth impact direction (NSID). The analysis is performed in configuration space in Sec.~\ref{sec:model JointSpace} and in task space in Sec.~\ref{sec:model TaskSpace}.

\subsection{Analysis of the Impact Model in Configuration Space}\label{sec:model JointSpace}
Consider the matrix $ \elMap $ from \eqref{eq: el impact model rigid}. It is evident that for $ e = 0 $ it becomes a projector on $ \ker(\A) $.
In the following, we analyze the eigenspaces for the general case of $ e \in [0,1] $ dividing the admissible configuration space in an impact invariant space \cite{Yang2021} and a nonsmooth impact direction (NSID), which generalizes the non-slippage impact direction introduced in~\cite{Kirner2024}.

\subsubsection{Impact Invariant Velocities}
\begin{theorem}[\textbf{impact invariant generalized velocities}]\label{thm:dqiv}
	For a pre-impact velocity $ \dqminus = \dq\iv $ with an arbitrary $ {\dq\iv \in \ker(\A)} $, the post-impact velocity is obtained as $ \dqplus = \dqminus $.
\end{theorem}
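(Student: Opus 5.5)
The plan is to substitute $\dqminus = \dq\iv$ directly into the impact map \eqref{eq: el impact model rigid} and use the structure of the projector $\Pma = \pseudoInv{A}{\T M}\A$. Since $\dq\iv \in \ker(\A)$, we have $\A\dq\iv = 0$. Consequently $\Pma\dq\iv = \pseudoInv{A}{\T M}(\A\dq\iv) = \T 0$, and therefore
\begin{equation*}
	\dqplus = \elMap\dqminus = \dq\iv - (1+e)\Pma\dq\iv = \dq\iv = \dqminus .
\end{equation*}
Equivalently, by \Cref{tab:eig P} with $\T U = \A$ and $\T W = \M$, the set $\ker(\A)$ is the eigenspace of $\Pma$ for eigenvalue $0$. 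It is thus an eigenspace of $\elMap = \T I - (1+e)\Pma$ for eigenvalue $1$, for every $e\in[0,1]$.

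For a physical cross-check, I would also note via \eqref{eq:Lambda} that the impulsive force vanishes, $\Lambda = -(1+e)\inv{(\A\inv{\M}\A^T)}\A\dq\iv = 0$. Then \eqref{eq:impact_eq rigid} directly gives $\M(\dqplus-\dqminus)=\T 0$. Because $\M$ is positive definite, this yields $\dqplus=\dqminus$. The scalar $\A\inv{\M}\A^T$ is invertible because $\A\neq\T 0$ by \Cref{a:non singular basic} and $\inv{\M}$ is positive definite.

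There is no real obstacle here. The only subtlety I would remark on is interpretive: a velocity in $\ker(\A)$ has zero approach speed $\dot\phi^- = 0$, so strictly no impact occurs. The statement should therefore be read as a property of the linear map $\elMap$ on the subspace $\ker(\A)$. In the decomposition developed afterwards, this is the invariant component of a general pre-impact velocity.
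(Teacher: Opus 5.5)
Your proposal is correct and takes essentially the same route as the paper: $\dq\iv \in \ker(\A)$ gives $\Pma\dq\iv = \T 0$, so $\elMap$ acts as the identity on $\dq\iv$. Your cross-check via $\Lambda = 0$ and your remark that $\dot\phi^- = 0$ means no genuine impact occurs are both valid, but the proof does not need them.
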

\begin{proof}
	The velocity $ \dq\iv $ is in the null space of $ \Pma $. Thus, it holds $ \dqplus = (\T I - (1+e)\Pma)\dqminus = \dqminus $.
\end{proof}
Theorem~\ref{thm:dqiv} can be reformulated as follows: 
\begin{itemize}
	\item The matrix $ \elMap $ has the $ (n-1) $-fold eigenvalue of $ 1 $ where the corresponding eigenspace is $ \ker(\A) $.
	\item All $ {\dq\iv \in \ker(\A)} $ are \textit{impact invariant} forming the impact invariant subspace.
\end{itemize}
Following \cite{Yang2021}, we refer to velocities that are not affected by the impact as \note{\textit{impact invariant}.} 
Since the invariant velocities $ \dq\iv $ do not involve a velocity in constraint direction (i.e. $ {\dot{\phi} = \A\dq\iv = 0} $) an additional contribution in constraint direction is required for an impact.

\subsubsection{Nonsmooth Impact Direction}
\begin{theorem}[\textbf{NSID in configuration space}]\label{thm:dqimp}
	For a pre-impact velocity $ \dqminus = \dq\imp(\nu,\q) $, in the form
	\begin{equation}\label{eq:dqimp}
		\dq\imp(\nu,\q) := \nu\inv{\M}(\q)\A^T(\q),
	\end{equation}
	with an arbitrary scalar $ \nu < 0 $, it holds $ \dqplus = - e \dqminus $. 
\end{theorem}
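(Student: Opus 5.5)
The claim follows by direct substitution of $\dq\imp$ into the impact map \eqref{eq: el impact model rigid}. The key observation is that $\dq\imp(\nu,\q)=\nu\inv{\M}\A^T$ lies in $\range(\inv{\M}\A^T)$. By \Cref{tab:eig P} (with $\T U=\A$, $\T W=\M$, $b=1$), this is exactly the eigenspace of the projector $\Pma=\pseudoInv{A}{\T M}\A$ for the eigenvalue $1$. Hence $\Pma\dq\imp=\dq\imp$, and therefore
\begin{equation*}
\dqplus=\elMap\dqminus=\left(\T I-(1+e)\Pma\right)\dq\imp=\dq\imp-(1+e)\dq\imp=-e\,\dqminus .
\end{equation*}

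I will also verify the eigenvalue claim explicitly rather than only citing the appendix, since the computation is one line. Writing $\Pma\dq\imp=\nu\inv{\M}\A^T\inv{\left(\A\inv{\M}\A^T\right)}\A\inv{\M}\A^T$, the scalar $\A\inv{\M}\A^T$ cancels against its inverse. This inverse exists and is positive by \Cref{a:non singular basic}, since $\A\neq\T 0$ and $\M$ is positive definite. What remains is $\nu\inv{\M}\A^T$, as required.

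The only remaining point is the role of $\nu<0$. It does not enter the algebra. It is needed for consistency with the impact setting: $\dot\phi^-=\A\dq\imp=\nu\,\A\inv{\M}\A^T<0$, so the system actually approaches the constraint and an impact occurs. I will note this, and also that the impulse from \eqref{eq:Lambda} becomes $\Lambda=-(1+e)\nu\geq 0$, which is physically admissible. I expect no real obstacle; the proof is essentially an eigenvector check for $\elMap$ with eigenvalue $-e$, complementing the $(n-1)$-fold eigenvalue $1$ from \Cref{thm:dqiv}.
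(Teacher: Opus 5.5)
Your proposal is correct and follows essentially the same route as the paper. Both arguments note that $\dq\imp\in\range(\Pma)$, so $\Pma\dq\imp=\dq\imp$ and the impact map yields $\dqplus=-e\dqminus$, with $\nu<0$ used only to guarantee $\dot\phi^-=\nu\A\inv{\M}\A^T<0$. Your explicit cancellation of the scalar $\A\inv{\M}\A^T$ and your remark that $\Lambda=-(1+e)\nu$ (in fact strictly positive) are harmless additions beyond what the paper writes.
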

\begin{proof}
	The velocity $ \dq\imp(\nu,\q) $ is in the range of $ \Pma(\q) $ for an arbitrary $ \nu\in\Real $. Thus, for $ \dqminus = \dq\imp $ it holds $ {\Pma\dqminus = \dqminus} $ and \eqref{eq: el impact model rigid} evaluates as $ {\dqplus = - e \dqminus} $. The restriction of the scalar $ \nu $ to $ \nu < 0 $ ensures an approach of the inequality constraint from the admissible side, as it then holds $ {\dot\phi^-(\q) = \nu\A\inv{\M}\A^T  < 0 } $.
\end{proof}
Theorem~\ref{thm:dqimp} directly implies that $ \elMap $ has the onefold eigenvalue of $ -e $, where the corresponding eigenspace is spanned by $ \dq\imp $. Velocities from this onedimensional space are \textit{nonsmooth} at the \textit{impact}. It is important to note that only \textit{one direction} of the subspace spanned by $ \dq\imp $ is admissible, which is reflected in the condition $ \nu < 0 $. Thus, we will refer to $ \dq\imp $ as the \textit{nonsmooth impact direction} (NSID) \note{in the following.}

\subsubsection{General Impact Velocities}\label{sec:general dq}
Based on the results from the previous subsections, we summarize the $ n $ eigenvalues and corresponding eigenspaces of $ \elMap $ in \Cref{tab:eig Maps}(A). 
\begin{table}
	\vspace{2mm}
	\begin{center}\small
		\setlength\extrarowheight{3pt}
		\centering
		(A) configuration space\\
		\vspace{1mm}
		\begin{tabular}{ c | c c }
			\hline
			eigenvalue & dimension of eigenspace & eigenspace \\ 
			\hline
			$ -e $ & $ 1 $ & $ \range(\inv{\M}\A^T) $ \\
			$ 1 $ & $ n-1 $ & $ \ker(\A) $ \\  
			\hline 
		\end{tabular}
		
	\end{center}%
	\vspace{1mm}
	
	\begin{center}\small
		\setlength\extrarowheight{3pt}
		\centering
		(B) task space\\
		\vspace{1mm}
		\begin{tabular}{ c | c c }
			\hline
			eigenvalue & dimension of eigenspace & eigenspace \\ 
			\hline
			$ -e $ & $ 1 $ & $ \range(\inv{\M_x}\Ax^T) $ \\
			$ 1 $ & $ m-1 $ & $ \ker(\Ax) $ \\  
			\hline  
		\end{tabular}
		
	\end{center}%
	\vspace{1mm}
	\caption{Eigenspaces of $ \elMap $ (a) and $ \taskMap $ (b)}
	\label{tab:eig Maps}
\end{table}
Given these, we can decompose any admissible pre-impact velocity $ \dqminus $ as
\begin{equation}\label{eq:dqminus zerlegt}
	\dqminus = \underbrace{\Pma \dqminus}_{\dq\imp} + \underbrace{(\T I - \Pma)\dqminus}_{\dq\iv},
\end{equation}
where the first contribution $ \dq\imp(\nu,\q) $ aligns with the NSID. The specific scaling $ \nu<0 $ is uniquely defined by $ \dqminus $. The second contribution $ \dq\iv $ is impact invariant. An impact under \eqref{eq:dqminus zerlegt} yields the impulsive force from \eqref{eq:Lambda} as $ {\Lambda = -(1+e)\nu} $. For the post-impact velocities it holds \begin{equation}\label{eq:dqplus}
	\dqplus = -e\dq\imp +\dq\iv.
\end{equation}

\subsubsection{Inverse Impact Model}
The presented analysis straightforwardly allows to invert the impact model \eqref{eq: el impact model rigid}.
\begin{corollary}
	Given a \note{feasible}, desired post-impact velocity $ \dqplus $ in generalized velocity space and a configuration $ \q $, the pre-impact velocities realizing the desired post-impact behavior can be computed as 
\note{	\begin{equation}\label{eq: inverse model dq}
		\dqminus = \begin{cases}
			 \left(\T I - \left(1+\frac{1}{e}\right)\Pma \right)\dqplus & \text{for } 0 < e \leq 1\\
			\dqplus + \nu\inv{\M}\A^T & \text{otherwise},
		\end{cases}
	\end{equation}
with arbitrary $ \nu < 0 $.
}
\end{corollary}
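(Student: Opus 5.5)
The plan is to invert the map $\elMap = \T I - (1+e)\Pma$ from \eqref{eq: el impact model rigid} eigenspace by eigenspace, using the decomposition \eqref{eq:dqminus zerlegt} and the spectral picture in \Cref{tab:eig Maps}(A). By Theorems~\ref{thm:dqiv} and \ref{thm:dqimp}, $\elMap$ acts as the identity on $\ker(\A)$ and as multiplication by $-e$ on $\range(\inv{\M}\A^T)$. These two subspaces are complementary: $\Pma$ is a projector with range $\range(\inv{\M}\A^T)$ and kernel $\ker(\A)$ (\Cref{app:projectors}). Hence $\elMap$ is diagonalizable with eigenvalues $1$ and $-e$, and it is invertible exactly when $e>0$.

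\textbf{Case $0<e\leq 1$.} I would show directly that
\[
\T R \coloneq \T I - \left(1+\tfrac{1}{e}\right)\Pma
\]
is the inverse of $\elMap$. Expanding the product and using idempotence $\Pma^2=\Pma$ gives
\[
\elMap\T R = \T I - \Big[(1+e) + \left(1+\tfrac1e\right) - (1+e)\left(1+\tfrac1e\right)\Big]\Pma .
\]
The bracket equals $(1+e)+(1+\tfrac1e)-(2+e+\tfrac1e)=0$, so $\elMap\T R = \T I$. Equivalently, $\T R$ is the identity on $\ker(\A)$ and multiplies by $-1/e$ on $\range(\inv{\M}\A^T)$. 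Therefore $\dqminus = \T R\dqplus$ is the unique pre-impact velocity mapping to $\dqplus$.

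Feasibility of $\dqplus$ is then used to guarantee admissibility of this $\dqminus$. Here I would interpret feasibility as $\A\dqplus>0$, i.e.\ a post-impact velocity that separates from the constraint. Using $\A\Pma=\A$,
\[
\A\dqminus = \A\dqplus - \left(1+\tfrac1e\right)\A\dqplus = -\tfrac1e\A\dqplus < 0 ,
\]
which confirms an approach from the admissible side, consistent with \eqref{eq:el assumption}.

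\textbf{Case $e=0$.} Now $\elMap = \T I - \Pma$ is a projector onto $\ker(\A)$ along $\range(\inv{\M}\A^T)$, so it is singular.
\begin{itemize}
\item A desired $\dqplus$ is feasible only if $\A\dqplus = 0$, i.e.\ $\dqplus\in\ker(\A)$.
\item For such $\dqplus$, the preimage is $\dqplus + \range(\inv{\M}\A^T)$. Indeed, $(\T I-\Pma)(\dqplus+\nu\inv{\M}\A^T) = \dqplus$, because $\Pma\dqplus=0$ and $\Pma\inv{\M}\A^T=\inv{\M}\A^T$.
\item Admissibility of the approach requires $\A\dqminus = \nu\A\inv{\M}\A^T<0$. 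Since $\A\inv{\M}\A^T>0$ by \Cref{a:non singular basic} and positive definiteness of $\M$, this is equivalent to $\nu<0$, exactly as in the proof of Theorem~\ref{thm:dqimp}.
\end{itemize}

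I expect the main obstacle to be making "feasible" precise rather than any of the algebra. Feasibility must mean $\A\dqplus=0$ when $e=0$ and $\A\dqplus>0$ when $e>0$. With that reading, the two branches of \eqref{eq: inverse model dq} follow from the eigen-decomposition alone, and uniqueness holds in the first branch while the second branch is a one-parameter family.
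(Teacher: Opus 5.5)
Your proposal is correct and takes essentially the same route as the paper. For $0<e\leq 1$ you use idempotence of $\Pma$ to verify that $\T I - (1+\frac{1}{e})\Pma$ inverts $\elMap$, and for $e=0$ you use that feasible post-impact velocities satisfy $\Pma\dqplus = \T 0$. Beyond the paper's sketch, you make feasibility explicit ($\A\dqplus>0$ resp.\ $\A\dqplus=0$), check admissibility $\A\dqminus<0$, and show that for $e=0$ the family $\dqplus+\nu\inv{\M}\A^T$ exhausts all preimages; the paper checks both compositions explicitly, which for square matrices is equivalent to your one-sided check.
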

\begin{proof}
	\note{First consider $ 0 < e \leq 1 $. Plugging \eqref{eq: inverse model dq} in \eqref{eq: el impact model rigid} and utilizing idempotence of the projector $ \T P_q $, we obtain $ {\dqplus = \T I \dqplus} $. Moreover, plugging \eqref{eq: el impact model rigid} in \eqref{eq: inverse model dq}, yields $ {\dqminus =  \T I \dqminus} $.}
	\note{The same can be shown for $ e = 0 $ considering that feasible post-impact velocities satisfy $ \dqplus = (\T I - \Pma)\dqplus $.}
\end{proof}
\note{Note that, for $ e = 0 $, feasible $ \dqplus $ can thus be rendered by infinitely many $ \dqminus $, each of them aligning the NSID.}

We summarize the results of this analysis in the configuration space in Fig.~\ref{fig:Q}.

\begin{figure}
	\centering
	\includegraphics[width = 0.49\textwidth]{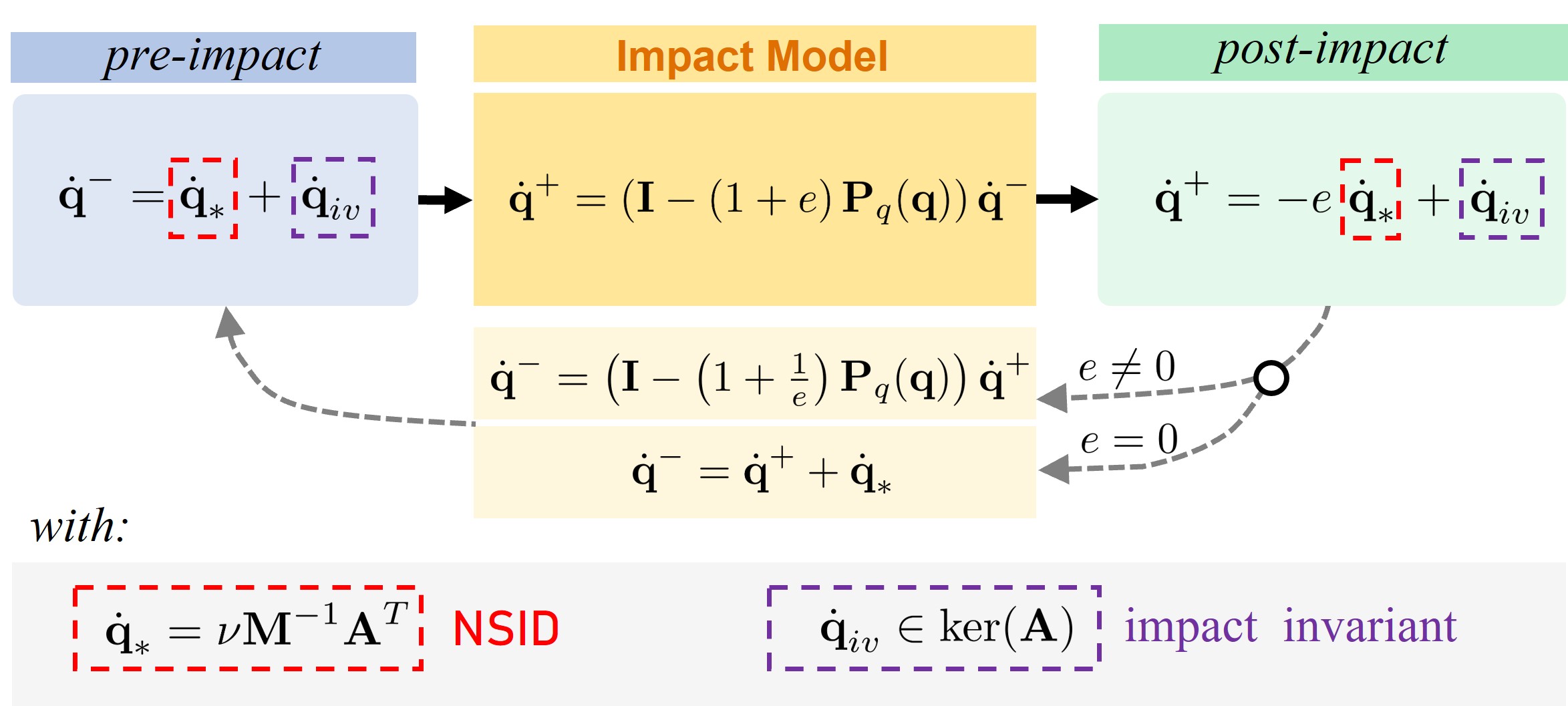}
	\caption{\note{Relation between pre- and post-impact velocities in configuration space. A contribution to $ \dqminus $ aligning with the NSID is scaled by the impact, whereas impact invariant contributions are preserved.}}
	\label{fig:Q}
\end{figure}

\subsection{A Task Space Impact Model}\label{sec:model TaskSpace}
\begin{theorem}[\textbf{task space impact map}]\label{thm:task map} 
	The unique relation
	\begin{align}\label{eq:task impact model}
		\dxplus = \underbrace{\left(\T I - (1+e)\Px\right)}_{\coloneq\taskMap(\q)}\dxminus
	\end{align}
	holds between the post-impact task velocities and the pre-impact ones, with the projector
	\begin{align}\label{eq:Px}
		\Px &= \pseudoInv{A}{\T M_x}_x\Ax
	\end{align} 
	and the mapping $ \taskMap(\q) \in \Real^{m\times m} $.
\end{theorem}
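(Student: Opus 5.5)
We map the configuration-space impact map \eqref{eq: el impact model rigid} into task space by left-multiplying with $\J$. Since the configuration is constant over the impact (\Cref{a:instantaneous}), $\dxplus = \J\dqplus$ and $\dxminus = \J\dqminus$, so
\begin{equation*}
	\dxplus = \J\dqminus - (1+e)\J\inv{\M}\A^T\inv{\left(\A\inv{\M}\A^T\right)}\A\dqminus .
\end{equation*}
The goal is to rewrite the second term as a function of $\dxminus$ alone, using the factorization $\A = \Ax\J$ from Sec.~\ref{sec:fundamentals}.

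Substituting $\A = \Ax\J$ gives $\A\inv{\M}\A^T = \Ax\J\inv{\M}\J^T\Ax^T = \Ax\inv{\M_x}\Ax^T$, by the definition $\M_x = \inv{(\J\inv{\M}\J^T)}$. This is a positive scalar by \Cref{a:non singular basic}, since $\Ax\neq\T 0$ and $\inv{\M_x}$ is positive definite. Likewise $\J\inv{\M}\A^T = \J\inv{\M}\J^T\Ax^T = \inv{\M_x}\Ax^T$, and $\A\dqminus = \Ax\J\dqminus = \Ax\dxminus$. Inserting these three identities yields
\begin{equation*}
	\dxplus = \left(\T I - (1+e)\inv{\M_x}\Ax^T\inv{\left(\Ax\inv{\M_x}\Ax^T\right)}\Ax\right)\dxminus ,
\end{equation*}
which is \eqref{eq:task impact model} with $\Px = \pseudoInv{A}{\T M_x}_x\Ax$ as in \eqref{eq:pseudoInv}. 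Because $\Px$ has the form \eqref{eq:projector}, idempotence follows directly from \Cref{app:projectors}.

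The main subtlety is that $\M_x$ must be well defined. This needs $\J$ to have full row rank $m$, which I would state as a standing assumption, since Cartesian task coordinates with $m\le n$ are used. If $\J$ were rank deficient, one could still use $\J\inv{\M}\J^T$ directly in place of $\inv{\M_x}$, because only that product appears in the derivation.

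Uniqueness means the task-space post-impact velocity is determined by $\dxminus$ alone, independent of the particular $\dqminus$ with $\J\dqminus = \dxminus$. This holds because every $\dqminus$-dependence in the final expression enters only through $\J\dqminus$. I would remark on this explicitly: the component of $\dqminus$ in $\ker(\J)$ affects $\dqplus$ only through $\A\dqminus = \Ax\J\dqminus$, which vanishes on that kernel, so it never reaches task space.
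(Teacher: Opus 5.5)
Your proof is correct and is the paper's own argument: premultiply the configuration-space map \eqref{eq: el impact model rigid} by $\J$ and substitute $\A = \Ax\J$, so that $\J\inv{\M}\A^T$, $\A\inv{\M}\A^T$ and $\A\dqminus$ become $\inv{\M_x}\Ax^T$, $\Ax\inv{\M_x}\Ax^T$ and $\Ax\dxminus$. Your extra remarks on the full row rank of $\J$ and on uniqueness are sound and make explicit what the paper leaves implicit. One small wording point: the $\ker(\J)$ part of $\dqminus$ does survive in $\dqplus$ through the identity term, but $\J$ annihilates it there, and it contributes nothing to the jump because $\A\dqminus$ depends only on $\J\dqminus$.
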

\begin{proof}
	The map \eqref{eq:task impact model} can be straightforwardly derived by premultiplying \eqref{eq: el impact model rigid} with the task Jacobian and substituting $ \A = \Ax\J$.	
\end{proof}
Moreover, the impulsive force $ \Lambda $ from \eqref{eq:Lambda} can be uniquely obtained from the task velocities $ \dxminus $ as
\begin{equation}\label{eq:Lambda task}
	\Lambda = -(1+e)\inv{\left(\Ax\inv{\M_x}\Ax^T\right)}\Ax\dxminus.
\end{equation}
\Cref{thm:task map} provides a direct mapping between $ \dxminus $ and $ \dxplus $, which is valid also for redundant robots, does not require knowledge about null space velocities, and derives from \eqref{eq: el impact model rigid} without any further assumptions. This is due to the fact that the constraint acts on the task coordinate. Indeed, we can show the following.
\begin{lemma}[\textbf{impact invariance of null space velocities}]\label{thm: dqn}
	Any $ \dq $ in the null space of the task, i.e. a $ \dqminus \neq \T 0 $ fulfilling $ {\J\dq = \T 0} $, is impact invariant.
\end{lemma}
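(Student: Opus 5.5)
The plan is to reduce the claim to \Cref{thm:dqiv}: it suffices to show that any velocity in the null space of the task Jacobian lies in $\ker(\A)$. That theorem already shows every element of $\ker(\A)$ is mapped to itself by the impact map \eqref{eq: el impact model rigid}, for arbitrary $e\in[0,1]$.

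The key step uses the factorization of the configuration-space constraint Jacobian stated in the inequality-constraint paragraph. Since $\phi = \varphi\circ\T f$, the chain rule gives $\A(\q) = \Ax(\T f(\q))\J(\q)$. Hence for any $\dqminus$ with $\J\dqminus = \T 0$,
\begin{equation*}
	\A\dqminus = \Ax\J\dqminus = \Ax\T 0 = 0 ,
\end{equation*}
so $\ker(\J)\subseteq\ker(\A)$. In particular $\Pma\dqminus = \pseudoInv{A}{\T M}\A\dqminus = \T 0$. Substituting into \eqref{eq: el impact model rigid} then yields $\dqplus = (\T I-(1+e)\Pma)\dqminus = \dqminus$, which is precisely the definition of impact invariance used in \Cref{thm:dqiv}.

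As a consistency check, I would also note the task-space consequence. Since $\dxminus = \J\dqminus = \T 0$, \eqref{eq:Lambda task} gives $\Lambda = 0$, and \Cref{thm:task map} gives $\dxplus = \T 0$. This is compatible with the configuration-space result and explains why the task map does not require knowledge of null-space velocities.

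There is no real obstacle here. The only point needing care is that the inclusion $\ker(\J)\subseteq\ker(\A)$ holds even when $\J$ is not full row rank, because it follows from the factorization alone. \Cref{a:non singular basic} is only needed so that $\Pma$ is well defined, i.e., so that $\A\inv{\M}\A^T \neq 0$.
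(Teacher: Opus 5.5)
Your proof is correct and follows the paper's own argument. Both show $\ker(\J)\subseteq\ker(\A)$ from the fact that the constraint acts on the task coordinate; you make this step explicit via the chain-rule factorization $\A=\Ax\J$. Both then invoke $\ker(\A)$ as the eigenvalue-$1$ eigenspace of $\elMap$, as established in \Cref{thm:dqiv}.
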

\begin{proof}
	\note{We consider a constraint on the task coordinate $ \x $, which implies that $ \ker(\J) \subseteq \ker(\A) $.  $ \ker(\A) $ is equivalent to the eigenspace of $ \elMap $, corresponding to the eigenvalue of $ 1 $. Thus, a $ \dq_n \in \ker(\J)$ is mapped to itself by $ \elMap $, i.e. it is impact invariant.}
\end{proof}

\subsubsection{Impact Invariant and Nonsmooth Task Velocities}
Given the map \eqref{eq:task impact model}, we can separate admissible task velocities in impact invariant task velocities and a task space NSID, equivalently as in the configuration space.
\begin{theorem}[\textbf{impact invariant task velocities}]\label{thm:task invariant}
	Any pre-impact task velocity $ \dxminus = \dx\iv $ with $ {\dx\iv \in \ker(\Ax)} $ is impact-invariant, i.e. it holds $ {\dxplus = \dxminus = \dx\iv} $.
\end{theorem}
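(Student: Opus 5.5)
The plan mirrors the proof of \Cref{thm:dqiv}, now carried out with the task space map of \Cref{thm:task map} instead of the configuration space map \eqref{eq: el impact model rigid}. By \Cref{thm:task map}, every pre-impact task velocity is mapped as $\dxplus = \taskMap\dxminus = \left(\T I - (1+e)\Px\right)\dxminus$. So it suffices to show that $\Px\dx\iv = \T 0$ for every $\dx\iv \in \ker(\Ax)$.

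For this we use the explicit form \eqref{eq:Px} of the projector, $\Px = \pseudoInv{A}{\T M_x}_x\Ax = \inv{\M_x}\Ax^T\inv{\left(\Ax\inv{\M_x}\Ax^T\right)}\Ax$. The rightmost factor is $\Ax$, so $\Px\dx\iv = \inv{\M_x}\Ax^T\inv{\left(\Ax\inv{\M_x}\Ax^T\right)}\left(\Ax\dx\iv\right) = \T 0$. Equivalently, by \Cref{tab:eig P} in \Cref{app:projectors}, $\ker(\Px) = \ker(\Ax)$. Substituting into the map then gives $\dxplus = \dx\iv = \dxminus$.

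The only point that needs care is that $\Px$ is well defined, i.e.\ that the scalar $\Ax\inv{\M_x}\Ax^T$ is nonzero. This holds because $\M_x = \inv{(\J\inv{\M}\J^T)}$ is symmetric positive definite, assuming $\J$ has full row rank as implicitly required for $\M_x$ to exist. Together with $\Ax \neq \T 0$ from \Cref{a:non singular basic}, this gives $\Ax\inv{\M_x}\Ax^T > 0$.

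One could also argue at the configuration level. Any $\dqminus$ with $\J\dqminus = \dx\iv$ satisfies $\A\dqminus = \Ax\J\dqminus = 0$, so \Cref{thm:dqiv} gives $\dqplus = \dqminus$, and premultiplying by $\J$ yields the claim. This alternative needs such a preimage to exist, which also holds under full row rank of $\J$. The direct route through \eqref{eq:Px} avoids this and is the one I will write out. I expect no substantive obstacle beyond this well-posedness remark.
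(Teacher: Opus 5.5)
Your proof is correct and matches the paper's, which says the proof is analogous to \Cref{thm:dqiv}: $\dx\iv \in \ker(\Px) = \ker(\Ax)$, so $\taskMap$ maps it to itself. Your remark that $\Ax\inv{\M_x}\Ax^T > 0$, so that $\Px$ is well defined, is a harmless addition the paper leaves implicit.
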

\begin{theorem}[\textbf{NSID in task space}]\label{thm:dximp}
	For an approach velocity $ \dxminus = \dx\imp $ in task space, with
	\begin{equation}\label{eq: dximp}
		\dx\imp(\nu,\q) \coloneq \nu \inv{\M_x}(\q)\Ax^T(\T f(\q)) = \nu \J\inv{\M}\A^T
	\end{equation}
	and an arbitrary scalar $ \nu < 0 $, it holds $ \dxplus = -e\dxminus $. 
\end{theorem}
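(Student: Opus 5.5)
The plan is to mirror the proof of \Cref{thm:dqimp}, now working with the task space map $\taskMap$ from \Cref{thm:task map}. Two things have to be shown: that $\dx\imp$ lies in the range of the projector $\Px$, and that the two expressions for $\dx\imp$ in \eqref{eq: dximp} coincide.

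\textbf{Range of $\Px$.} From \eqref{eq:Px} and the definition \eqref{eq:pseudoInv} we have
\begin{equation*}
\Px = \inv{\M_x}\Ax^T\inv{\left(\Ax\inv{\M_x}\Ax^T\right)}\Ax .
\end{equation*}
The inverse exists because $\Ax \neq \T 0$ by \Cref{a:non singular basic} and $\inv{\M_x}$ is positive definite. By \Cref{tab:eig P} in \Cref{app:projectors}, $\range(\Px) = \range(\inv{\M_x}\Ax^T)$. One can also check this directly:
\begin{equation*}
\Px\,\nu\inv{\M_x}\Ax^T = \nu\inv{\M_x}\Ax^T\inv{\left(\Ax\inv{\M_x}\Ax^T\right)}\left(\Ax\inv{\M_x}\Ax^T\right) = \nu\inv{\M_x}\Ax^T .
\end{equation*}
Hence for $\dxminus = \dx\imp$ we obtain $\dxplus = \left(\T I-(1+e)\Px\right)\dx\imp = \dx\imp-(1+e)\dx\imp = -e\,\dx\imp$.

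\textbf{Equality of the two expressions.} Since $\M_x = \inv{(\J\inv{\M}\J^T)}$, we have $\inv{\M_x} = \J\inv{\M}\J^T$. Using $\A = \Ax\J$, i.e.\ $\A^T = \J^T\Ax^T$, this gives $\nu\inv{\M_x}\Ax^T = \nu\J\inv{\M}\J^T\Ax^T = \nu\J\inv{\M}\A^T$.

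This identity also supplies a consistency check: $\dx\imp = \J\dq\imp$, with $\dq\imp$ as in \Cref{thm:dqimp}. Premultiplying \Cref{thm:dqimp} by $\J$ therefore yields the result independently of \Cref{thm:task map}.

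\textbf{Sign of $\nu$.} As in \Cref{thm:dqimp}, the restriction $\nu<0$ is needed for admissibility. We compute $\dot\phi^- = \Ax\dx\imp = \nu\,\Ax\inv{\M_x}\Ax^T$. The factor $\Ax\inv{\M_x}\Ax^T$ is strictly positive because $\inv{\M_x}$ is positive definite and $\Ax\neq\T 0$, so $\dot\phi^-<0$ and the constraint is approached from the admissible side.

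\textbf{Main obstacle.} The only delicate point is that $\M_x$ must be well defined, which requires $\J\inv{\M}\J^T$ to be invertible, i.e.\ $\J$ to have full row rank. \Cref{a:non singular basic} does not state this. If $\J$ were rank deficient, I would rely on the right-hand form $\nu\J\inv{\M}\A^T$ together with the configuration-space argument of \Cref{thm:dqimp}, premultiplied by $\J$. That route needs only $\A\neq\T 0$, and I expect it to suffice.
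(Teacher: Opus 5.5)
Your argument is correct and is essentially the paper's own proof: the paper proves this theorem analogously to \Cref{thm:dqimp}, i.e.\ via $\dx\imp\in\range(\Px)$ so that $\taskMap\dx\imp=-e\,\dx\imp$, with $\nu<0$ ensuring $\dot\phi^-<0$; your explicit check that $\nu\inv{\M_x}\Ax^T=\nu\J\inv{\M}\A^T$ and your remark that $\M_x$ tacitly requires $\J$ of full row rank are sound additions. One minor caveat concerns your side route of premultiplying \Cref{thm:dqimp} by $\J$, which you use both as a consistency check and as the fallback for rank-deficient $\J$: it only covers the particular realization $\dqminus=\dq\imp$, so for a redundant robot you must add that any other $\dqminus$ with $\J\dqminus=\dx\imp$ differs from $\dq\imp$ by an element of $\ker(\J)\subseteq\ker(\A)$, which is impact invariant by \Cref{thm:dqiv}, before you can conclude $\dxplus=-e\,\dx\imp$ without \Cref{thm:task map}.
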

We refer to the task velocity direction represented by \eqref{eq: dximp} as the NSID in task space. 

The proofs of \Cref{thm:task invariant}~and~\ref{thm:dximp} are performed analogously as for the configuration space. From these theorems the eigenspaces of $ \taskMap $ summarized in \Cref{tab:eig Maps}(B) can be directly deduced. We can now express any admissible pre-impact velocity as the following linear combination:
\begin{equation}\label{eq:dxminus decomposed}
	\dxminus = \underbrace{\Px\dxminus}_{\dx\imp} + \underbrace{(\T{I}-\Px)\dxminus}_{\dx\iv},
\end{equation}
with an NSID contribution $ \dx\imp = \Px\dxminus $ and an impact-invariant contribution $ \dx\iv = (\T I - \Px)\dxminus $. For the post-impact velocity obtained for an impact under $ \dxminus $ from \eqref{eq:dxminus decomposed} it then holds
\begin{equation}\label{eq:dxplus}
	\dxplus = -e\dx\imp + \dx\iv.
\end{equation}

It is straightforward that, also for the task space, an inverse impact model can be formulated.
\begin{corollary}
	Given a \note{feasible,} desired post-impact velocity $ \dxplus $ in task space and a configuration $ \q $, the pre-impact velocities realizing the desired post-impact behavior can be computed as 
\note{\begin{equation}\label{eq: inverse model dx}
		\dxminus =
		\begin{cases}
			 \left(\T I - \left(1+\frac{1}{e}\right)\Px \right)\dxplus & \text{for } 0 < e \leq 1 \\
			 \dxplus + \nu\inv{\M}_x\Ax^T & \text{otherwise},
		\end{cases}
	\end{equation}
with arbitrary $ \nu < 0 $.}
\end{corollary}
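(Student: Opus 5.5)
We prove the task-space inverse impact model by mirroring the proof of the configuration-space corollary. We rely on the task-space map $\dxplus = \taskMap\dxminus$ with $\taskMap = \T I-(1+e)\Px$ from \Cref{thm:task map}. We also use the fact that $\Px = \pseudoInv{A}{\T M_x}_x\Ax$ is a projector of the form \eqref{eq:projector} with $\T U=\Ax$ and $\T W=\M_x$. By \Cref{a:non singular basic}, $\Ax$ has rank one, so this pseudoinverse exists and $\Px^2=\Px$ (\Cref{app:projectors}). Idempotence will be the only algebraic fact we need, together with the eigenspace characterization in \Cref{tab:eig Maps}(B).

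For $0<e\le 1$ we verify that the candidate is a two-sided inverse of $\taskMap$. Write $\T X_e^{-} \coloneq \T I-(1+\tfrac1e)\Px$ for the candidate matrix. Expanding the product and using $\Px^2=\Px$ gives
\begin{align*}
\taskMap\,\T X_e^{-} &= \T I - \Bigl(1+e+1+\tfrac1e - (1+e)\bigl(1+\tfrac1e\bigr)\Bigr)\Px .
\end{align*}
The coefficient is $2+e+\tfrac1e-(1+e+\tfrac1e+1)=0$, so $\taskMap\,\T X_e^{-}=\T I$. The product in the other order is identical because both matrices are polynomials in $\Px$ and therefore commute. Hence $\taskMap$ is invertible with inverse $\T X_e^{-}$, and $\dxminus = \T X_e^{-}\dxplus$ is the unique pre-impact velocity. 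Equivalently, via the decomposition \eqref{eq:dxminus decomposed}, we can argue on eigenspaces: the eigenvalue $-e$ on $\range(\inv{\M_x}\Ax^T)$ inverts to $-1/e = 1-(1+\tfrac1e)$, and the eigenvalue $1$ on $\ker(\Ax)$ stays $1$.

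For $e=0$, $\taskMap=\T I-\Px$ is a projector, so it is singular, and we must use feasibility. A desired $\dxplus$ is feasible exactly when it lies in $\range(\T I-\Px)=\ker(\Ax)$, i.e.\ $\Px\dxplus=\T 0$ and $\dxplus=(\T I-\Px)\dxplus$. Now take $\dxminus=\dxplus+\nu\inv{\M_x}\Ax^T$. The second term lies in $\range(\Px)=\ker(\T I-\Px)$ by \Cref{thm:dximp} and \Cref{tab:eig Maps}(B). Therefore $(\T I-\Px)\dxminus=\dxplus$, as required. The condition $\nu<0$ makes the approach admissible: $\Ax\dxminus=\nu\Ax\inv{\M_x}\Ax^T<0$, since $\Ax\dxplus=0$ and $\inv{\M_x}$ is positive definite. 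Conversely, every preimage differs from $\dxplus$ by an element of $\ker(\T I-\Px)=\range(\inv{\M_x}\Ax^T)$, so the family given is complete.

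The only delicate step is the $e=0$ case: the feasibility condition and the sign restriction on $\nu$ have to be made precise, because invertibility fails there. The $0<e\le1$ case is routine algebra.
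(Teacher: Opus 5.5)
Your proof is correct and follows the paper's route. The paper only writes out the configuration-space version: for $0<e\le 1$ it substitutes each map into the other and uses idempotence of the projector (your two-sided inverse check), and for $e=0$ it uses that feasible post-impact velocities satisfy $\dxplus = (\T I - \Px)\dxplus$. Your completeness argument and the sign check on $\nu$ in the $e=0$ case make precise what the paper only states informally.
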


We summarize the results of this subsection in Fig.~\ref{fig:X}.

\begin{figure}
	\centering
	\includegraphics[width = 0.49\textwidth]{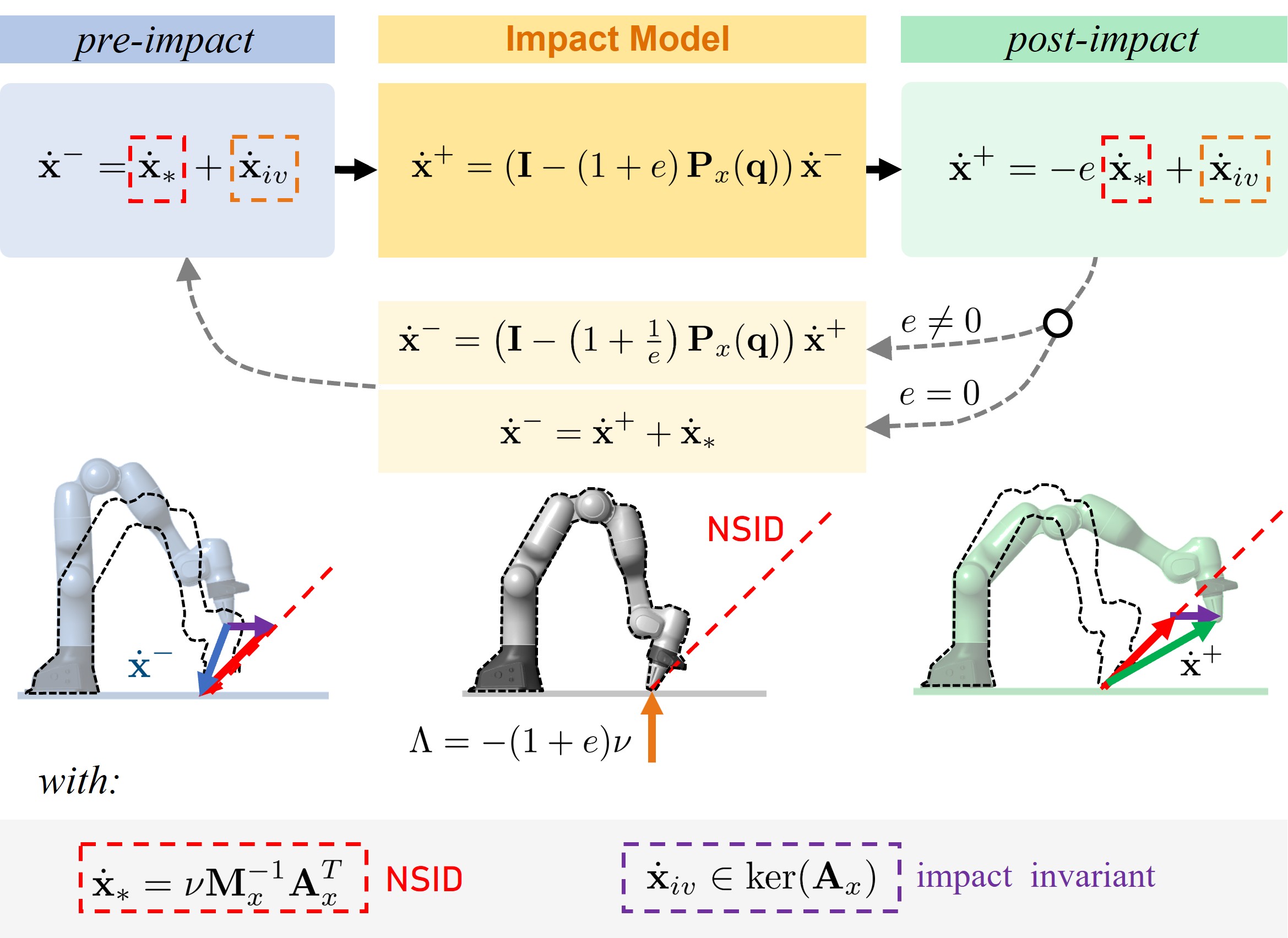}
	\caption{\note{Relations between pre- and post-impact task velocities as established by the map \eqref{eq:task impact model}. Contributions aligning the NSID are scaled by $ -e $, whereas impact invariant contributions are preserved by the impact.}}
	\label{fig:X}
\end{figure}

\textit{Remark:} It is important to note that the task space NSID can contain both translational and angular components. In the schematic visualizations of this work, we will often depict the translational component only, as it can be illustrated nicely. Also, note that, even though Fig.~\ref{fig:X} as well as the following figures, serve as conceptual illustrations, they always depict the actual translational NSID computed for the shown robotic system in the respective configuration. 

%% file: interpretations.tex
\section{Interpretations of the NSID}\label{sec:NSID discussion}
The presented results already suggest the NSID as a characteristic direction describing a robotic impact scenario. The remainder of this work focuses on this particular direction. \note{In this section, we discuss different interpretations.}

\subsection{NSID as the Unique Approach Direction Aligning Pre- and Post-Impact Velocities}\label{sec:model:NSID aligning}
According to Theorems~\ref{thm:dqimp} and \ref{thm:dximp}, the NSID is the unique, admissible approach direction of a configuration on the constraint surface, under which pre- and post-impact velocities align in the respective space. The scaling of the post-impact value with respect to the pre-impact one is provided by the negative restitution factor $ -e $ and thus the contact elasticity.

For a non-redundant robotic system, it is evident that a reversal of the direction of the generalized velocity upon an impact implies the one of the task velocity and vice versa. For a redundant robotic system, only the first implication is fulfilled, in general. An approach under $ \dxminus = \dx\imp $ can then be rendered by infinitely many $ \dqminus $, which can all be obtained as
\begin{align}\label{eq:all dqminus}
	\dqminus &= \pseudoInv{J}{\M}\dx\imp + \T Z^T\T v^- = \dq\imp + \T Z^T\T v^-.
\end{align}
Therein $ \T Z \in \Real^{(m-n)\times n} $ denotes an arbitrary null space base such that it holds $ \J \T Z^T= \T 0 $. The vector $ \T{v}^- \in \Real^{n-m} $ represents an arbitrary pre-impact null space velocity. We utilized the particular choice of the weighting matrix $ \T W  = \M $, to express $ \dqminus $ as a linear combination of the NSID in the configuration space and a null space contribution. Note that the scale $ \nu $ in $ \dq\imp $ is given by $ \dxminus $. For $ \dqplus $ it then straightforwardly holds $ \dqplus =  -e\dq\imp + \T{Z}^T\T{v}^- $, i.e. the contributions in the null space of the task as well as the ones\footnote{Note, that $ \dq\imp $ does contain null space contributions, in general.} included in $ \dq\imp $ are preserved over the impact. 

\note{With respect to the impact elasticity, the two corner cases $e = 1$ and $ e = 0$ are particularly interesting. A fully elastic impact, i.e. $ e = 1 $, under the NSID reverses the approach velocities, as displayed in Fig.~\ref{fig:el concept}. We dedicate the full Sec.~\ref{sec:friction} to the case of $ e = 0 $.} 

\begin{figure}
	\centering
	\includegraphics[width = 0.47\textwidth]{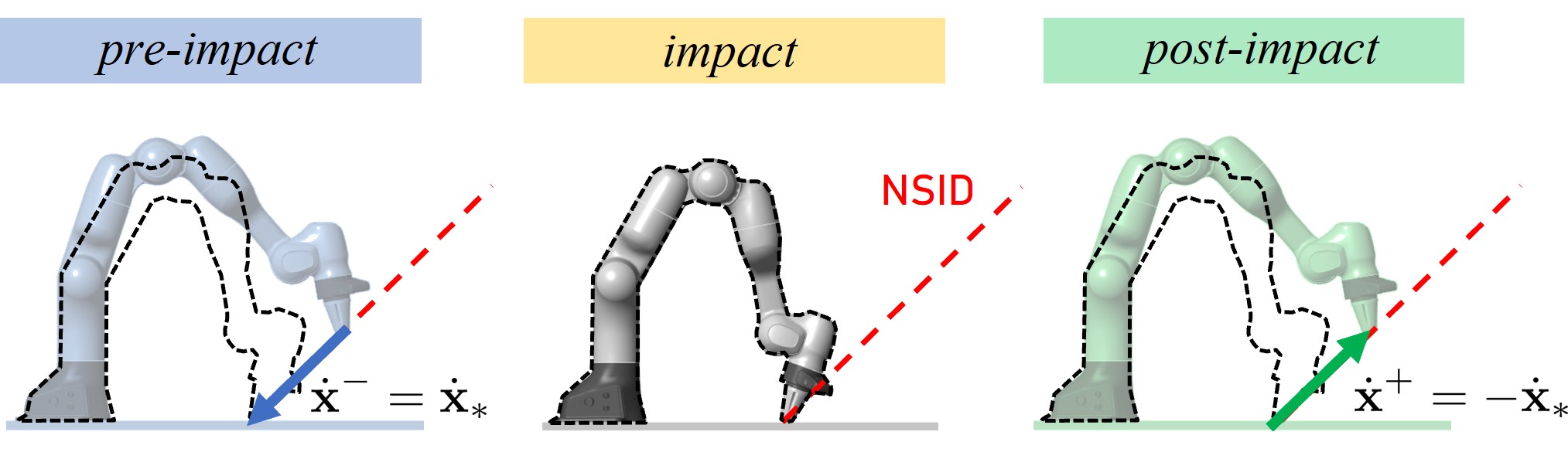}
	\caption{For a fully elastic impact under the NSID, the post-impact velocity is the reversed pre-impact velocity.}
	\label{fig:el concept}
\end{figure}

\subsection{NSID as the Direction of the Velocity Jump}
Consider \eqref{eq:impact_eq rigid} mapping an impulsive force $ \Lambda $ in constraint direction to the impact induced jump $ {\Delta\dq = \dqplus - \dqminus} $ in generalized velocities. It is well known \cite{Brogliato1999} that it holds
\begin{equation}\label{eq:jump}
	\Delta\dq = \inv{\M}(\q)\A^T(\q)\Lambda,
\end{equation}
with $ \Lambda > 0 $. Thus, the following can be concluded.
\begin{corollary}\label{cor:jump}
	The direction of the velocity jump $ \Delta\dq $ in configuration space and $ \Delta\dx = \J\Delta\dq $ in task space aligns with the negative NSID in the respective space.
\end{corollary}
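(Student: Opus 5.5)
The plan is to read the corollary off the impact equation \eqref{eq:impact_eq rigid} and compare it with the definitions \eqref{eq:dqimp} and \eqref{eq: dximp}. First, since $\M$ is positive definite, solving \eqref{eq:impact_eq rigid} for the jump gives $\Delta\dq = \inv{\M}\A^T\Lambda$, which is \eqref{eq:jump}. Comparing with \eqref{eq:dqimp}, this is exactly $\dq\imp(\nu,\q)$ with $\nu = \Lambda$. Thus $\Delta\dq$ lies in $\range(\inv{\M}\A^T)$, the one-dimensional eigenspace of $\elMap$ for the eigenvalue $-e$ in \Cref{tab:eig Maps}(A). What remains is the orientation: the jump points along $-\dq\imp(\nu,\q)$ for an admissible $\nu<0$.

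To show this, I establish $\Lambda>0$ from \eqref{eq:Lambda}. The scalar $\A\inv{\M}\A^T$ is strictly positive, because $\inv{\M}$ is positive definite and $\A\neq\T 0$ by \Cref{a:non singular basic}. An impact means the constraint is approached from the admissible side at nonzero speed, so $\dot\phi^- = \A\dqminus<0$. Since $e\in[0,1]$ gives $1+e>0$, it follows that $\Lambda = -(1+e)\inv{(\A\inv{\M}\A^T)}\dot\phi^- > 0$.

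Equivalently, I use the decomposition \eqref{eq:dqminus zerlegt}: with $\dq\imp = \nu\inv{\M}\A^T$ and $\nu<0$, \eqref{eq:dqplus} gives $\Delta\dq = -(1+e)\dq\imp = -(1+e)\nu\,\inv{\M}\A^T$. Since $-(1+e)\nu>0$, this is a positive multiple of $-\dq\imp$, and it is consistent with $\Lambda=-(1+e)\nu$. This establishes the configuration-space claim.

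For the task-space claim, I premultiply by $\J$: $\Delta\dx = \J\Delta\dq = \Lambda\J\inv{\M}\A^T$. By the second equality in \eqref{eq: dximp}, this equals $\Lambda\inv{\M_x}\Ax^T$. Indeed, $\J\inv{\M}\J^T\Ax^T = \inv{\M_x}\Ax^T$ after substituting $\A=\Ax\J$. Since $\Lambda>0$, $\Delta\dx$ is a positive multiple of $-\dx\imp$. Alternatively, \eqref{eq:dxplus} gives $\Delta\dx = -(1+e)\dx\imp$ directly.

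The only real obstacle is the sign of $\Lambda$. It depends on the admissibility of the approach, $\A\dqminus<0$, and on the strict positivity of $\A\inv{\M}\A^T$. I will state both explicitly rather than just citing "$\Lambda>0$".
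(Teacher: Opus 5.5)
Your argument is correct and is essentially the paper's. You solve \eqref{eq:impact_eq rigid} for $\Delta\dq = \inv{\M}\A^T\Lambda$, compare it with the NSID definitions using $\Lambda>0$, and premultiply by $\J$ (with $\J\inv{\M}\A^T=\inv{\M_x}\Ax^T$) for the task-space claim. The only difference is where $\Lambda>0$ comes from. You derive it from \eqref{eq:Lambda}, i.e.\ via Newton's restitution law (\Cref{a:restitution}), whereas the paper simply takes $\Lambda>0$ as the integral of the positive contact force $\lambda$. That is why the paper can remark right after the corollary that only \Cref{a:instantaneous} is needed. To keep that generality, note that $\A\dqminus<0$ together with non-penetration $\A\dqplus\geq 0$ already gives $(\A\inv{\M}\A^T)\Lambda=\A\Delta\dq>0$, hence $\Lambda>0$, without invoking a restitution law.
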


Note that \eqref{eq:jump} is derived solely from integrating the robot dynamics \eqref{eq:RD rigid} over the infinitesimal impact. The interpretation of the NSID as the direction of the velocity jump thus only depends on \Cref{a:instantaneous} of an instantaneous impact. No contact model like the restitution model of \Cref{a:restitution} is needed. Moreover, we can also partly obtain the first interpretation of the NSID as the unique direction aligning pre- and post-impact velocity directions without utilizing a contact model: reformulating \eqref{eq:jump}, we obtain
\begin{align}\label{eq:dqplus from jump}
	\dqplus &= \Delta\dq + \dqminus= \inv{\M}\A^T\Lambda + \dqminus.
\end{align}
With that, it can be concluded without relying on \Cref{a:restitution} that only for a $ \dqminus $ aligning $ \Delta\dq $ and thus the NSID, a $ \dqplus $ aligning $ \dqminus $ is obtained. Only the scaling of $ \dqplus $ w.r.t. $ \dqminus $ after an impact under the NSID depends on the impulsive force and thus the contact.
This emphasizes that the NSID, as the unique nonsmooth direction, is a characteristic direction of a robotic impact scenario, determined by the robot's kinematic and inertial properties, the constraint, and the robot configuration, but not \note{the impact elasticity}. 

\subsection{NSID and the Inertia Ellipsoid}\label{sec:model:ellipsoid}
\begin{theorem}[\textbf{perpendicular NSID}]\label{cor:ellipsoid}
	The task space NSID aligns with the Euclidean normal of the constraint surface, whenever an eigenvector of the task space inertia matrix aligns with the task space constraint Jacobian and thus the surface normal as well.	
\end{theorem}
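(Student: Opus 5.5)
The plan is a short eigenvector computation based on the task-space NSID from \Cref{thm:dximp}, $\dx\imp = \nu\inv{\M_x}\Ax^T$. The Euclidean normal of the constraint surface $\varphi(\x)=0$ at the contact point is $\nabla\varphi = \Ax^T$, which is nonzero by \Cref{a:non singular basic}. The theorem therefore reduces to showing that $\inv{\M_x}\Ax^T$ is parallel to $\Ax^T$ whenever $\Ax^T$ is an eigenvector of $\M_x$.

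First, I note that $\M_x = \inv{(\J\inv{\M}\J^T)}$ is symmetric and positive definite. It is therefore invertible, and every eigenpair $(\mu,\T v)$ of $\M_x$ has $\mu>0$ and satisfies $\inv{\M_x}\T v = \mu^{-1}\T v$.

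Now suppose an eigenvector $\T v$ of $\M_x$ aligns with $\Ax^T$, so $\Ax^T = \alpha\T v$ with $\alpha\neq 0$. This gives
\begin{equation*}
\dx\imp = \nu\inv{\M_x}\Ax^T = \nu\alpha\mu^{-1}\T v = \frac{\nu}{\mu}\Ax^T .
\end{equation*}
Since $\nu<0$ and $\mu>0$, the NSID points along $-\Ax^T$. This is the inward Euclidean normal, directed into the surface, which is the admissible approach direction, consistent with $\dot\phi^- = \nu\Ax\inv{\M_x}\Ax^T<0$.

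I would also remark on the converse: $\inv{\M_x}\Ax^T = c\Ax^T$ makes $\Ax^T$ an eigenvector of $\inv{\M_x}$, and hence of $\M_x$. This fits the phrase ``and thus the surface normal as well'' and the geometric reading via the inertia ellipsoid, whose principal axes are exactly the eigenvectors of $\M_x$.

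There is no real obstacle here. The only points needing care are the requirement that $\M_x$ be invertible, which holds because $\J$ has full row rank (implicitly assumed when $\M_x$ is defined), and the sign bookkeeping that makes the NSID the inward rather than the outward normal.
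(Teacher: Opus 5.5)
Your proof is correct and follows essentially the same route as the paper. The paper also substitutes the eigenvector relation into $\dx\imp = \nu\inv{\M_x}\Ax^T$ to obtain $\dx\imp = \nu\alpha_i^{-1}\Ax^T$ (your $\nu\mu^{-1}\Ax^T$), and it additionally notes, via $\Delta\dx = \inv{\M_x}\Ax^T\Lambda$, that the velocity jump aligns with $\Ax^T$ and that $\Px$ reduces to the orthogonal projector $\Ax^T\inv{(\Ax\Ax^T)}\Ax$. Your remarks on the converse and on the sign of the normal agree with the paper's subsequent ``iff'' statement about the inertia ellipsoid.
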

\begin{proof}
	First, note that the task space constraint Jacobian $ \Ax^T $ is perpendicular to the constraint by definition. Premultiplying \eqref{eq:jump} with the task Jacobian $ \J $, we obtain
	$	\Delta\dx = \inv{\M_x}\Ax^T\Lambda $. Consider the special case of an impact, for which the impulsive wrench $ {\T F = \Ax^T\Lambda \in \Real^m} $ and thus $ \Ax^T $ aligns with the $ i $-th eigenvector of the task space inertia matrix $ \M_x $ corresponding to the eigenvalue $ \alpha_i $. Then, it holds:
	\begin{itemize}
		\item $ \Delta\dx $ aligns $ \Ax^T $,
		\item the NSID \eqref{eq: dximp} takes the form $ \dx\imp = \nu \alpha_i^{-1}\Ax^T  $, i.e. it also aligns $ \Ax^T $, and
		\item the projector $ \Px $ becomes the orthogonal projector $ {\Px = \Ax^T\inv{\left(\Ax\Ax^T\right)}\Ax} $.
	\end{itemize}
	For an impact under the NSID further
	\begin{itemize}
		\item $ \dxplus = - e \nu \alpha_i^{-1} \Ax^T  $ holds, i.e. $ \dxplus $ and $ \dxminus $ align $ \Ax^T $.
	\end{itemize}
\end{proof}

Consequently, an NSID approach perpendicular to the surface always comes either with a maximization or minimization of the transmitted impulsive force for the particular configuration and absolute value of the approach velocity. The relation can nicely be visualized with an inertia ellipsoid\footnote{Note that the ellipsoid corresponds to the generalized impact ellipsoid \cite{Walker1994}, which is equivalent to the inverse generalized inertia ellipsoid \cite{Asada1983}.} defined as $ \left\{\T{F}\in\Real^m : \T{F}^T\inv{\M_x}\T{F}\leq 1 \right\} $. The NSID is perpendicular on the constraint surface iff a principal axis of the ellipsoid is perpendicular to the surface as well.

Notably, these results match an initial assumption in \cite{Khurana2024}. There, it was assumed that the robot's reflected inertia in impact direction is sufficiently high, such that the pre-impact momentum in other directions do not induce a significant deviation of the post-impact motion from the impact direction. We can reformulate that as follows: the impact is assumed to be performed under a configuration with almost perpendicular NSID.

\subsubsection*{Examples} Figure~\ref{fig:ellipsoid} compares pre- with post-impact task velocities for two configurations of the Franka Research 3 robot impacting the horizontal ground with the end-effector. To be able to nicely depict the relations, we treat planar, translational task space motions, without loss of generality.
We consider multiple pre-impact task velocities, depicted as arrows in the figure. They feature the same magnitude but different impact angles. The resulting post-impact task velocities are evaluated exemplary for $ e = 0.3 $. They are scaled and rotated with respect to the pre-impact velocities, as expected. The only exception is the NSID approach under which pre- and post-impact velocities align.

\begin{figure}
	\centering
	\vspace{0.75mm}
	\includegraphics[width = 0.45\textwidth]{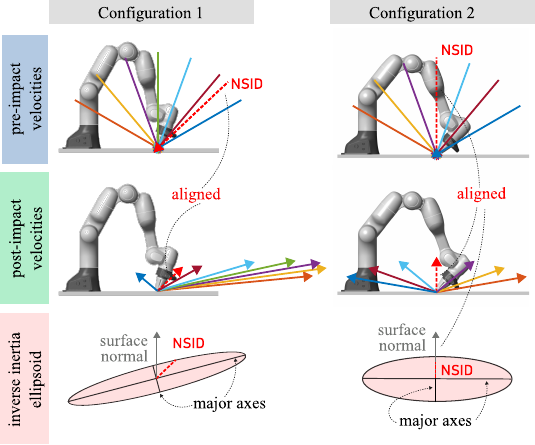}
	\caption{The NSID (here: in task space) is the unique approach direction, for which pre- and post-impact approach velocities are aligned. In the special case when a major axis of the inverse generalized inertia ellipsoid aligns with the normal of the constraint surface, the NSID also aligns with them. It is then perpendicular to the constraint surface.}
	\label{fig:ellipsoid}
\end{figure}

In configuration~1, the surface normal does not align with a principal axis of the inertia ellipsoid. Therefore, the NSID is not perpendicular on the constraint surface. Conversely, configuration~2 has been chosen such that the NSID is perpendicular to the surface (cf. config.~$\perp$ from \cite{Kirner2024} and its computation there). In that case, also a principal axis of the ellipsoid aligns with the NSID, which matches with the presented theoretic results.

As opposed to a robotic system, the translational inertia ellipsoid of one single rigid body is always spherical. Indeed, one can show that the translational NSID of single rigid bodies is always perpendicular to the surface, which matches the intuition. The existence of a rotational component depends on the location of the center of mass (CoM) with respect to the contact point.

%% file: friction_analysis_NEW2.tex
\section{\note{NSID and Fully Inelastic Impacts}}\label{sec:friction}
\note{Several works on impact-aware robotics assume fully inelastic impacts. This implies that the robot remains in contact with the surface after the impact and does not bounce. It holds $ e = 0 $ and thus \eqref{eq:el assumption} reduces to $ \dot\phi^+ = 0 $. However, sliding motions along the contact surface are feasible. Given the previous results, it is straightforward to recover the main result from \cite{Kirner2024}:}

\begin{corollary}\label{cor:non-slippage}
	\note{The NSID in the (configuration / task) space is the unique approach direction of a configuration $ \q $, under which post-impact velocities in the respective space vanish for the case of a fully inelastic, frictionless impact.}
\end{corollary}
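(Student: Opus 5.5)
We read the claim as: with $e=0$, the post-impact velocity vanishes exactly when $\dqminus$ (resp. $\dxminus$) lies along the NSID with $\nu<0$. We prove both directions using the eigenstructure of $\elMap$ and $\taskMap$ from \Cref{tab:eig Maps}.

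\emph{Sufficiency.} Take $\dqminus=\dq\imp(\nu,\q)$ with $\nu<0$. \Cref{thm:dqimp} gives $\dqplus=-e\dqminus$, which is $\T 0$ for $e=0$. In task space, \Cref{thm:dximp} gives $\dxplus=-e\dxminus=\T 0$ for $\dxminus=\dx\imp$.

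\emph{Uniqueness in configuration space.} For $e=0$ the map becomes $\dqplus=(\T I-\Pma)\dqminus$, where $\T I-\Pma$ is a projector with kernel $\range(\Pma)=\range(\inv{\M}\A^T)$ (\Cref{app:projectors}). Hence $\dqplus=\T 0$ iff $\dqminus\in\range(\inv{\M}\A^T)$, i.e. $\dqminus=\nu\inv{\M}\A^T$ for some $\nu\in\Real$. Equivalently, in the decomposition \eqref{eq:dqminus zerlegt} with \eqref{eq:dqplus}, vanishing of $\dqplus$ forces $\dq\iv=\T 0$. The sign is then fixed by admissibility: an impact requires $\dot\phi^-=\nu\A\inv{\M}\A^T<0$. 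By \Cref{a:non singular basic}, $\A\neq\T 0$, and since $\M$ is positive definite, $\A\inv{\M}\A^T>0$. Thus $\nu<0$, and the direction is unique, not merely the line.

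\emph{Uniqueness in task space.} The same argument applies to $\taskMap=\T I-\Px$ from \Cref{thm:task map}. Its kernel is $\range(\inv{\M_x}\Ax^T)$, and $\Ax\inv{\M_x}\Ax^T=\A\inv{\M}\A^T>0$ fixes $\nu<0$.

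Redundancy needs care here. In configuration space, a task-space approach along $\dx\imp$ realized with a null-space component $\T Z^T\T v^-\neq\T 0$ (cf. \eqref{eq:all dqminus}) gives $\dxplus=\T 0$ but $\dqplus=\T Z^T\T v^-\neq\T 0$. This is why the statement is read "in the respective space," and we keep the two uniqueness claims separate rather than deriving one from the other.

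The main point requiring care is that, under \Cref{a:frictionless} and the model \eqref{eq: el impact model rigid}, "vanishing post-impact velocity" means exactly the absence of tangential sliding. The frictionless assumption is what makes the kernel characterization the whole story; beyond that the argument is routine.
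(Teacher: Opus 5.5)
Your proof is correct and follows essentially the paper's route: for $e=0$ the maps $\elMap$ and $\taskMap$ reduce to the projectors $\T I-\Pma$ and $\T I-\Px$, whose kernels are exactly the NSID lines. Hence, by \eqref{eq:dqplus} and \eqref{eq:dxplus}, the post-impact velocity vanishes iff the impact-invariant part is zero, with $\nu<0$ fixed by admissibility, and your redundancy caveat via \eqref{eq:all dqminus} correctly explains the ``respective space'' wording. Your closing remark overstates one point: $\dxplus=\T 0$ is stronger than the absence of tangential sliding when $\x$ contains orientation, and $\dqplus=\T 0$ also rules out null-space motion, but this does not affect the proof.
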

\note{Thus, it holds
\begin{itemize}
	\item $ \dqplus = \T 0 $ iff $ \dqminus = \dq\imp $, with arbitrary $ \nu < 0 $, and
	\item $ \dxplus = \T 0 $ iff $ \dxminus = \dx\imp $, with arbitrary $ \nu < 0 $.
\end{itemize}}
\note{Indeed, for an approach under the task space NSID the \textit{targeted non-slippage impact problem} introduced and addressed in \cite{Kirner2024} is solved. There, the goal was to obtain an impact at a target $ \x $ such that it holds $ \dxplus = \T 0$. All solutions for a given $ \q $ solving $ \x = \T{f}(\q) $ have been shown to align with a unique task space approach direction, which corresponds to the NSID in task space. For $ e = 0 $, we can thus interpret the NSID as the "\textit{non-slippage impact direction}"\cite{Kirner2024} corresponding to $ \q $ (cf. Fig.~\ref{fig:non slippage concept}). }

\begin{figure}
	\centering
	\includegraphics[width = 0.47\textwidth]{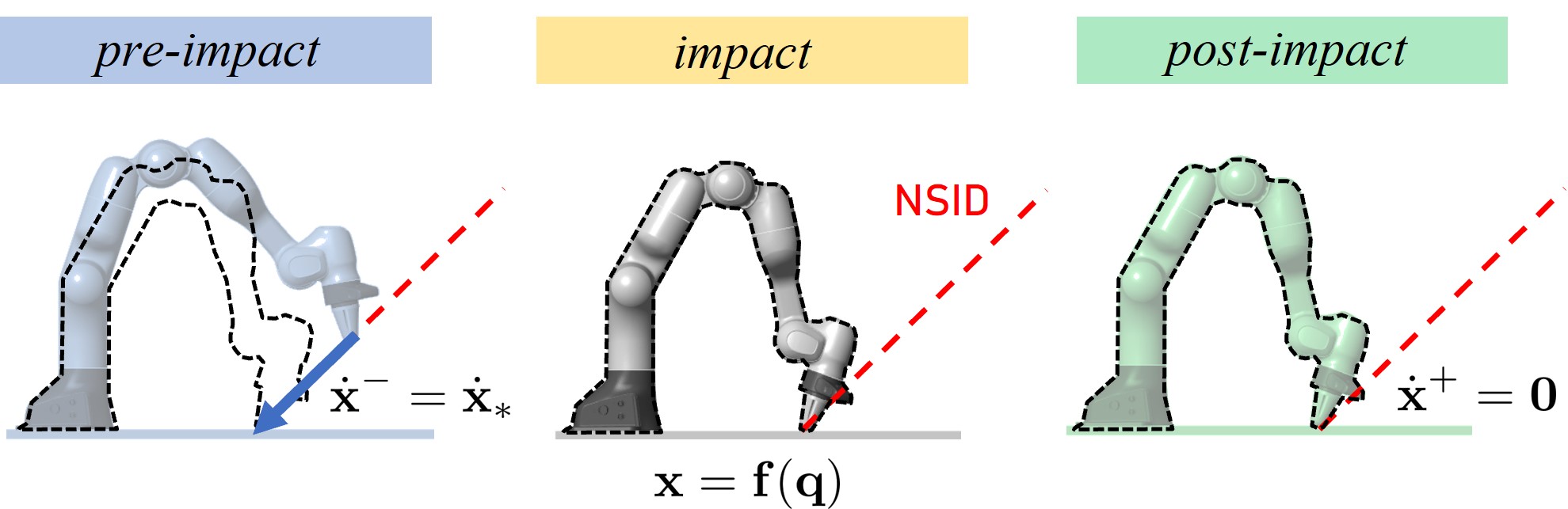}
	\caption{A targeted non-slippage impact of $ \x = \T f(\q) $ is obtained for a fully inelastic contact if $ \dxminus $ aligns the NSID \cite{Kirner2024}.}
	\label{fig:non slippage concept}
\end{figure}

\note{Intuitively, post-impact slippage along a constraint surface with $ e = 0 $ strongly depends on frictional properties of the contact pairing. However, all previous results, and thus also the NSID, have been derived under \Cref{a:frictionless} of a frictionless impact. Thus it is unclear, if the interpretation of the NSID as a "non-slippage impact direction" is also valid for contacts with significant friction. The  derivation in \cite{Kirner2024} assumes a frictionless impact, as well. There it was argued intuitively and without proof that if the NSID solves the targeted non-slippage impact problem for the worst-case scenario of a frictionless contact, slippage will also be prevented in the less critical, frictional case. In the following, we provide a theoretical analysis proving this statement. The result yields a refined interpretation of the NSID for fully inelastic contacts with arbitrary frictional properties, which we will formulate in \Cref{thm:perpendicular force}.}

\subsection{\note{Frictional, Fully Inelastic Impacts}}\label{sec:friction:model}
\note{For the remainder of this section, we assume $ e = 0 $. Further, we discard \Cref{a:frictionless} and consider contact friction acting on the constraint surface in $ l $ tangential directions.\footnote{\note{Typically, it holds $ l= 2 $. A planar case study can be obtained for $l = 1$.}} We assume that all $l$ directions are contained in the task space. The generalized contact force is then defined as
	\begin{equation}\label{eq: A fric}
		\T F_{\textup{contact}} = \begin{bmatrix}
			\Ax \\ \Axfr
		\end{bmatrix}^T \begin{bmatrix}
			\lambda \\ \T \lambda_t
		\end{bmatrix} = \Axtotal^T \T \ltotal.
	\end{equation}
	It consists of the contact force $ \lambda $ normal to the constraint surface and the tangential friction forces $ \lfr\in\Real^l $ yielding a force vector $ \ltotal \in \Real^{1+l} $. The Jacobian $ {\Axtotal\in\Real^{(1+l)\times m}} $ stacks the task space constraint Jacobian $ \Ax $ and the task space friction Jacobian $ \Axfr\in\Real^{l\times m} $, where $ \Axfr $ maps joint velocities to the end-effector velocity components tangential to the contact surface. For the generalized contact torque, it holds $ \T\tau_{\textup{contact}} = \J^T\Axtotal^T\ltotal = \Atotal^T\ltotal,$ with $ \Atotal \in \Real^{(1+l)\times n}. $}

\note{Under an impact of the frictional constraint surface, an impulsive force $ \bar{\T\Lambda} \in \Real^{l+1} $ defined as $ {\bar{\T \Lambda} = \begin{bmatrix}
		\Lambda & \Lfr
	\end{bmatrix}^T = \int_{0}^{\Delta t} \T \ltotal dt} $ with $ {\Delta t \rightarrow 0} $ is obtained. It contains a component $ \Lfr \in \Real^l $ tangential to the constraint surface on top of the previously discussed component $ \Lambda $ normal to the constraint surface. }

\subsubsection{\note{Friction Model}}
\note{We utilize Coulomb's friction model to model dry friction acting on the surface\footnote{\note{Combining Newton's restitution law with Coulombs friction model can create inconsistencies such as apparent energy generation for an arbitrary restitution factor. However, these limitations do not arise for the considered case of fully inelastic impacts \cite{Glocker2012}.}} and pose the following assumption:}\begin{assumption}\label{a:friction cone}
	\note{The Coulomb friction cone defined for contact forces also applies on impulsive forces for the considered fully inelastic case. In particular, it holds
	\begin{equation}\label{eq:impulsive cone}
		\vert\vert \Lfr \vert\vert  \leq \mu_s \Lambda
	\end{equation}
	for a sticking contact $ \Axfr\dxplus = \T 0 $ (cf., e.g., \cite{Smith1991, Brach1998}). Therein, $ \mu_s \geq 0 $ denotes the static friction coefficient.}
\end{assumption}
\note{This corresponds to a representation of the friction cone averaged over the infinitesimal duration of the impact.}

\subsubsection{\note{Impact Map for a Stictional Contact}}
\note{If \eqref{eq:impulsive cone} is fulfilled, it holds $\Atotal\dqplus = \T 0 $ and the robot can be considered as transferring from a free motion phase to a phase where an $ (1+l) $-dimensional constraint is active. Given the derivations from Sec.~\ref{sec:fundamentals} and \ref{sec:model} it can then straightforwardly be shown that it holds
\begin{equation}\label{key}
	\dqplus = \left(\T{I} - \pseudoInv{\bar{A}}{M}\Atotal\right)\dqminus
\end{equation}
and 
\begin{equation}\label{eq:task model stiction}
	\dxplus = \left(\T{I} - \pseudoInv{\bar{A}}{M_x}_{x}\Axtotal\right)\dxminus
\end{equation}
for the relationships between pre- and post-impact velocities in configuration space and task space, respectively. For $ \bar{\T\Lambda} $, then 
\begin{align}\label{key}
	\bar{\T\Lambda} &= -\inv{\left(\Atotal\inv{\M}\Atotal^T\right)}\Atotal\dqminus \\
	\label{eq:Lambda fric x}
	&= -\inv{\left(\Axtotal\inv{\M_x}\Axtotal^T\right)}\Axtotal\dxminus
\end{align}
holds.}

\subsection{\note{Solution of the Frictional Targeted Non-Slippage Impact Problem}}
\begin{theorem}[\textbf{\note{NSID for fully inelastic, frictional contacts}}]\label{thm:perpendicular force}
	\note{Consider a robotic system \eqref{eq:RD rigid} performing an instantaneous, fully inelastic, frictional impact for which \Cref{a:friction cone} is fulfilled. Among all the approach directions in task space yielding a non-slippage impact, the NSID is the unique direction that corresponds to an impulsive force perpendicular to the surface.}
\end{theorem}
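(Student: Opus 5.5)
Under \Cref{a:friction cone}, a non-slippage impact means the contact sticks after the impact. Then $\Axtotal\dxplus=\T 0$, and by \eqref{eq:task model stiction} we even have $\dxplus=\T 0$ whenever the $(1+l)$ constrained directions exhaust the relevant task directions. The plan is to characterize, among all sticking approaches, those whose impulsive force has vanishing tangential part $\Lfr=\T 0$, and to show that these are exactly the task space NSID approaches $\dxminus=\dx\imp$ with $\nu<0$. We proceed in two directions: (i) NSID $\Rightarrow$ perpendicular impulse, and (ii) perpendicular impulse $\Rightarrow$ NSID.

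For (i), set $\dxminus=\nu\inv{\M_x}\Ax^T$ and insert it into \eqref{eq:Lambda fric x}. Since $\Axtotal$ stacks $\Ax$ on top of $\Axfr$, we have $\Axtotal\inv{\M_x}\Ax^T = \Axtotal\inv{\M_x}\Axtotal^T \T e_1$, where $\T e_1\in\Real^{1+l}$ is the first unit vector. Hence
\begin{equation*}
\bar{\T\Lambda} = -\nu\inv{\left(\Axtotal\inv{\M_x}\Axtotal^T\right)}\Axtotal\inv{\M_x}\Axtotal^T\T e_1 = -\nu\T e_1,
\end{equation*}
so $\Lfr=\T 0$ and $\Lambda=-\nu>0$. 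The friction cone \eqref{eq:impulsive cone} then holds trivially for any $\mu_s\ge 0$. This shows that the sticking assumption underlying \eqref{eq:Lambda fric x} is self-consistent, so the NSID approach is non-slipping. It also matches the frictionless result of \Cref{cor:non-slippage}, since the impulse coincides with the frictionless one.

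For (ii), take any sticking approach with $\Lfr=\T 0$. The task impact equation, obtained by premultiplying the integrated dynamics with $\J$ as in \Cref{cor:jump}, reads $\Delta\dx=\inv{\M_x}\Axtotal^T\bar{\T\Lambda}=\inv{\M_x}\Ax^T\Lambda$. If $\dxplus=\T 0$, it follows that $\dxminus=-\Lambda\inv{\M_x}\Ax^T$, which is the NSID with $\nu=-\Lambda<0$. Uniqueness of the direction then follows from the injectivity of $\Lambda\mapsto \inv{\M_x}\Ax^T\Lambda$, which is guaranteed by \Cref{a:non singular basic}.

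The main obstacle is the case $m>1+l$, for example when angular task components are present. In that case stiction only enforces $\Axtotal\dxplus=\T 0$, not $\dxplus=\T 0$, and the argument in (ii) only yields $\dxminus=\dxplus-\Lambda\inv{\M_x}\Ax^T$ with $\dxplus\in\ker(\Axtotal)$. I would first try to interpret "approach direction yielding non-slippage" as a statement about the NSID component. Using \Cref{thm:task invariant}, we can decompose $\dxminus$ as in \eqref{eq:dxminus decomposed}, where the invariant part lies in $\ker(\Ax)$. It then remains to show that a nonzero invariant component with $\Axfr\dxminus\neq\T 0$ forces $\Lfr\neq\T 0$. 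To do so, I would express $\Lfr$ via the block (Schur complement) inverse of $\Axtotal\inv{\M_x}\Axtotal^T$ and check that it vanishes iff $\Axfr\dx\iv=\T 0$, i.e.\ iff the approach has no tangential component beyond the NSID. If this does not go through in full generality, the fallback is to restrict to the setting where the tangential and normal directions span the task space, as in the paper's translational examples.
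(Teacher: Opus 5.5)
Your steps (i) and (ii) are correct and amount to the paper's proof, split into its two implications. The paper notes that $\dxplus=\T 0$ under \eqref{eq:task model stiction} forces $\dxminus\in\range(\inv{\M_x}\Axtotal^T)$ and writes $\dxminus$ as in \eqref{eq:dxminus nonslip fric}. It then reads off from \eqref{eq:Lambda fric x} that the impulse is the negative of the weighting vector, so $\Lfr=\T 0$ iff $\T p=\T 0$. (The paper writes $\Lfr=\T p$; the sign is really $-\T p$, which is irrelevant for the cone.) Your step (ii) obtains the same identity $\dxminus=-\inv{\M_x}\Axtotal^T\bar{\T\Lambda}$ directly from the integrated dynamics, without presupposing the stiction map. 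Your step (i) is the case $\T p=\T 0$. There you should still say explicitly that $\Delta\dx=\inv{\M_x}\Ax^T\Lambda=-\dxminus$, i.e.\ $\dxplus=\T 0$.

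Your last paragraph, however, rests on a misreading of the definition. In this section a non-slippage impact is the targeted non-slippage problem, i.e.\ $\dxplus=\T 0$ for the full task velocity; the paper's proof searches for $\dxminus$ ``mapped to $\dxplus=\T 0$''. With that definition your step (ii) already holds verbatim for every $m\geq 1+l$, angular components included, so no restriction to $m=1+l$ is needed.

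The weaker reading $\Axtotal\dxplus=\T 0$ cannot be rescued when $m>1+l$. Since $\bar{\T\Lambda}$ depends on $\dxminus$ only through $\Axtotal\dxminus$, every $\dxminus=\nu\inv{\M_x}\Ax^T+\T w$ with $\nu<0$ and $\T w\in\ker(\Axtotal)$ sticks, gives $\bar{\T\Lambda}=-\nu\T e_1$, and leaves $\dxplus=\T w$. Your Schur-complement criterion ($\Lfr=\T 0$ iff $\Axfr\dx\iv=\T 0$) is correct, but it exhibits exactly this larger family of perpendicular-impulse approaches. It would therefore disprove, not prove, uniqueness of the direction under that reading. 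Drop that branch and the fallback, and fix the definition of non-slippage as $\dxplus=\T 0$.
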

\begin{proof}
	Assume a non-slippage impact shall be obtained at configuration $ \q $ for a frictional contact. As we assumed the tangential directions of the constraint surface to be contained in the task space, we thus search for $ \dxminus $ which \begin{itemize}
		\item result in an impulsive force satisfying \eqref{eq:impulsive cone} and
		\item are mapped to $ \dxplus = \T 0 $ by \eqref{eq:task model stiction}.
	\end{itemize}
	Given \eqref{eq:task model stiction}, in general, only pre-impact velocities of the form 
	\begin{equation}\label{eq:dxminus nonslip fric}
		\dxminus= \inv{\M_x}\Axtotal^T\begin{bmatrix}
			\nu \\\T{ p}
		\end{bmatrix}
	\end{equation}
	qualify as candidates rendering a targeted non-slippage impact, with weighting vector $ \T{p}\in\Real^l $. These result in $ \dxplus = \T 0 $, if the cone \eqref{eq:impulsive cone} is fulfilled, as well. Consider an approach under \eqref{eq:dxminus nonslip fric}, with a given, bounded $ \T p $. We aim to determine the minimal static friction coefficient $ \mu_s $ required, such that $ \T{\Lambda} $ fulfills the cone \eqref{eq:impulsive cone} and the end-effector remains in contact with the surface without translational slippage. In this limiting case, $ \T\Lambda $ can be determined from \eqref{eq:Lambda fric x} and lies exactly on the boundary of the cone \eqref{eq:impulsive cone}.
	Plugging \eqref{eq:dxminus nonslip fric} in \eqref{eq:Lambda fric x}, we obtain
	\begin{align}\label{key}
		\T\Lambda = \begin{bmatrix}
			\Lambda \\ \Lfr
		\end{bmatrix} = \begin{bmatrix}
			-\nu \\ \T{p}
		\end{bmatrix}.
	\end{align}
	This yields $ \mu_s = -\vert\vert\T p \vert \vert / \nu $ as the minimum necessary static friction coefficient to obtain $ \dxplus = \T 0 $. 
	
	An approach under the NSID, i.e. $ \T p  = \T 0 $, thus results in an impulsive force perpendicular to the constraint surface. It always fulfills the cone \eqref{eq:impulsive cone} for arbitrary $ \mu_s $. Conversely, all other candidate $ \dxminus $ come with a tangential component $ {\Lfr = \T p \neq \T 0} $ of the impulsive force.
\end{proof}
From the analysis in the proof we can further deduce the following.
\begin{corollary}\label{thm:NSID_cone}
	All $ \dxminus $ solving the targeted non-slippage impact problem of a frictional contact surface with known stiction constant $ \mu_s \geq 0 $ can be expressed as \eqref{eq:dxminus nonslip fric} with weighting vector $ \T p $ satisfying $  \vert\vert\T p \vert\vert \leq -\mu_s \nu $. 
\end{corollary}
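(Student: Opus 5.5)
The plan is to show two inclusions: every non-slippage $\dxminus$ has the form \eqref{eq:dxminus nonslip fric} with $\vert\vert\T p\vert\vert \leq -\mu_s\nu$, and every such vector does produce a non-slippage impact. Both directions rest on the stiction impact map \eqref{eq:task model stiction} and the impulse formula \eqref{eq:Lambda fric x}, as in the proof of \Cref{thm:perpendicular force}.

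\textbf{Necessity.} Let $\dxminus$ solve the targeted non-slippage problem, so $\dxplus = \T 0$.
\begin{itemize}
\item The impact is sticking, so \eqref{eq:task model stiction} applies and gives $\left(\T I - \pseudoInv{\bar{A}}{M_x}_{x}\Axtotal\right)\dxminus = \T 0$.
\item Hence $\dxminus$ lies in the range of the projector $\pseudoInv{\bar{A}}{M_x}_{x}\Axtotal$.
\item By \Cref{app:projectors}, this range equals $\range(\inv{\M_x}\Axtotal^T)$. So $\dxminus = \inv{\M_x}\Axtotal^T[\nu,\ \T p^T]^T$ for some $\nu\in\Real$ and $\T p\in\Real^l$. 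This uses full row rank of $\Axtotal$, which I would state explicitly: the normal and the tangential directions are independent and contained in the task space.
\item Substituting into \eqref{eq:Lambda fric x} gives $\bar{\T\Lambda} = -[\nu,\ \T p^T]^T$, because $\Axtotal\inv{\M_x}\Axtotal^T$ cancels with its inverse. The sign of the tangential entry is irrelevant, since only its norm enters the cone.
\item Admissibility of the approach, equivalently $\Lambda>0$, forces $\nu<0$.
\item \Cref{a:friction cone} then reads $\vert\vert\T p\vert\vert \leq -\mu_s\nu$.
\end{itemize}

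\textbf{Sufficiency.} Take any $\dxminus$ of the form \eqref{eq:dxminus nonslip fric} with $\nu<0$ and $\vert\vert\T p\vert\vert\leq-\mu_s\nu$.
\begin{itemize}
\item The candidate sticking impulse computed above lies inside the cone.
\item So the stictional impact map is consistent, and it maps $\dxminus$ to $\dxplus=\T 0$ because the projector acts as the identity on its range.
\item In particular, $\Ax\dxplus = 0$ and $\Axfr\dxplus=\T 0$ both hold, so the contact neither separates nor slips.
\end{itemize}

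\textbf{Main obstacle.} The delicate step is sufficiency. Lying inside the cone makes sticking a consistent solution, but it does not by itself exclude a sliding solution. I would argue uniqueness through the paper's modeling convention in \Cref{a:friction cone}: the contact sticks whenever the cone condition holds. The special case $\mu_s=0$ then reduces to $\T p=\T 0$, which recovers \Cref{cor:non-slippage}.
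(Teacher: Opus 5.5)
Your argument is correct and is essentially the paper's. The paper reads the corollary off the proof of \Cref{thm:perpendicular force}: it restricts the candidates via \eqref{eq:task model stiction} to $\range(\inv{\M_x}\Axtotal^T)$, evaluates \eqref{eq:Lambda fric x}, and imposes \eqref{eq:impulsive cone}. Your explicit rank condition on $\Axtotal$ and your sign $\Lfr=-\T p$ (the paper writes $+\T p$, harmlessly) are accurate refinements.

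One caution applies only to your sufficiency direction, which goes beyond the statement: $\nu<0$ is not equivalent to an admissible approach. Since $\Ax\dxminus = \nu\Ax\inv{\M_x}\Ax^T + \Ax\inv{\M_x}\Axfr^T\T p$, this can become nonnegative inside the cone when $\Ax\inv{\M_x}\Axfr^T\neq\T 0$ and $\mu_s$ is large. So in that direction you must additionally require $\Ax\dxminus<0$.
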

\note{Thus, for a frictional surface, the solutions \eqref{eq:dxminus nonslip fric} of the targeted nonslippage impact problem form a cone of approach velocities containing the NSID for $ \T p = \T 0 $ (cf. Fig.~\ref{fig:cone concept}\footnote{\note{The scalings of the depicted $ \dxminus $ are chosen such that they correspond to the same value of $ \Lambda $ normal to the surface.  Only translational directions are displayed as they can be visualized easily. However, the cone belongs to an $ m $-dimensional space in general.}}). Each velocity therein corresponds to an impulsive force contained in the impulsive cone \eqref{eq:impulsive cone}.}
\begin{figure}
	\centering
	\includegraphics[width = 0.4\textwidth]{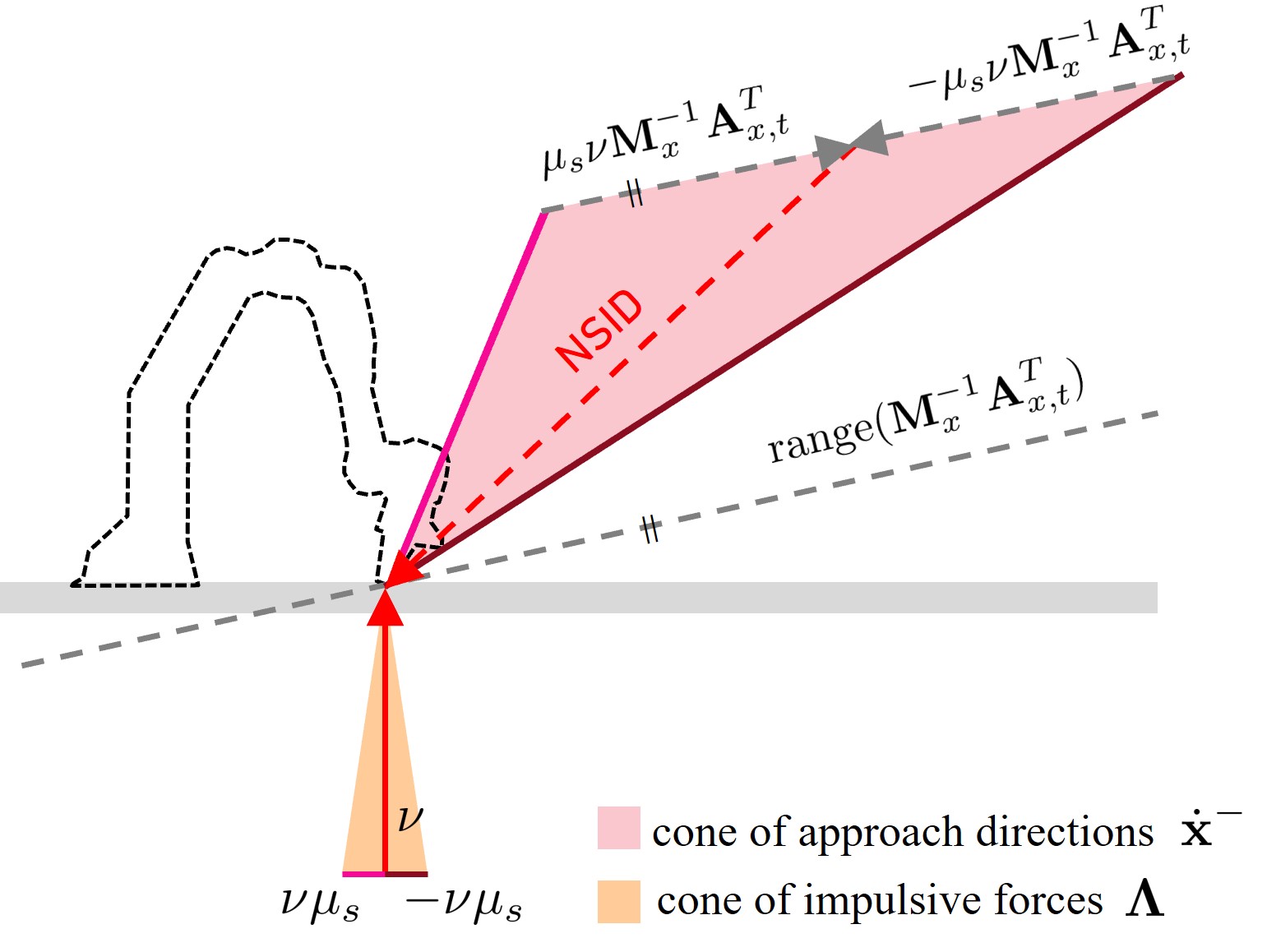}
	\caption{Under contact stiction with static friction coefficient $ \mu_s $, translational approach velocities yielding a targeted non-slippage impact are contained in the shaded, red area.}
	\label{fig:cone concept}
\end{figure}

\note{\textit{Remarks:} We interpreted the impact yielding post-impact stiction as an impact with $ (1+l) $-dimensional constraint. In that case, a $ (1+l) $-dimensional nonsmooth space exists, which is spanned by the NSID, as derived for the frictionless case, and $ \range(\inv{\M_x}\Axfr^T) $ (cf. Fig.~\ref{fig:cone concept}). However, the impulsive cone \cite{Kirner2024} needs to remain fulfilled thus restricting the viable contributions from $ \range(\inv{\M_x}\Axfr^T) $ w.r.t. the NSID contribution. Further, note that all derivations can be performed analogously in configuration space, resulting in a cone of generalized velocities containing the NSID.}

%% file: extended_models_V2.tex
\section{\note{NSID of Flexible Joint Robots}}\label{sec:el joints}
As discussed in Sec.~\ref{sec:fundamentals}, the impact model \eqref{eq: el impact model rigid} has been validated for torque-controlled robots \cite{Aouaj2020,Arias2024}, i.e., flexible-joint robots with relatively stiff joints. While all previous results apply directly to this class of robotic systems, robots with lower joint stiffness are not addressed. However, these robotic systems are expected to withstand relatively harsh impacts, as the elastic elements in the joints protect the motors and gearboxes from impulsive forces to some extent. Therefore, we will examine flexible-joint robots with arbitrary joint stiffness in detail.

\subsection{Dynamics model}
Consider the complete model
\begin{align}\label{eq:RD full}
	\nonumber
	\begin{bmatrix}
		\Ml(\ql) & 	\M_{lm}(\ql) \\ \M^T_{lm}(\ql) & \Mm
	\end{bmatrix} \begin{bmatrix}
	\ddq_l \\ \ddq_m
\end{bmatrix} + 
\note{\T{h}(\ql,\dq)}  = \\
\left(\frac{\partial U(\q)}{\partial \q}\right)^T + \begin{bmatrix}
\T \tau_{\textup{contact}} \\ \T\tau_m
\end{bmatrix}
\end{align}
for the dynamics of a robotic system with flexible joints \cite{DeLuca2008a}. The generalized coordinate vector $ \q\in{\Real}^{(n+r)} $ stacks the coordinates $ \ql\in\Real^n $ belonging to the link-side, and the motor positions $ \qm\in\Real^r $ of the $ r \geq 1 $ motors. The link-side dynamics with link-side inertia matrix $ \Ml\in\Real^{n\times n} $ is coupled to the motor-side dynamics via generalized elastic torques $ \left(\frac{\partial U(\q)}{\partial \q}\right)^T $ with elastic potential function $ U(\q) $, and via inertial couplings, where we denote the inertial coupling matrix as $ \M_{lm}\in\Real^{n\times r} $.
The matrix $ {\Mm \in \Real^{r\times r}} $ denotes the constant diagonal rotor inertia matrix. The motor torques are stacked in the vector $ \T\tau_m \in \Real^r$.

The inertial coupling via $ \M_{lm} $ appears, if the link-side movement of the robot induces a rotation of some motor boxes around their rotor axis \cite{DeLuca2008a}. In that case, the rotational kinetic energy of the rotor depends on both its own spinning with respect to the motor box and the angular velocity of the motor box induced by $ \dq_l $. An example is displayed in Fig.~\ref{fig:Mc}. For large gear reductions, it is often assumed that the inertial coupling is negligible. The model \eqref{eq:RD full} with $ \M_{lm} = \T 0 $ is then referred to as \textit{reduced model} \cite{DeLuca2008a, Spong1987}.

\begin{figure}
	\centering
	\includegraphics[width = 0.4\textwidth]{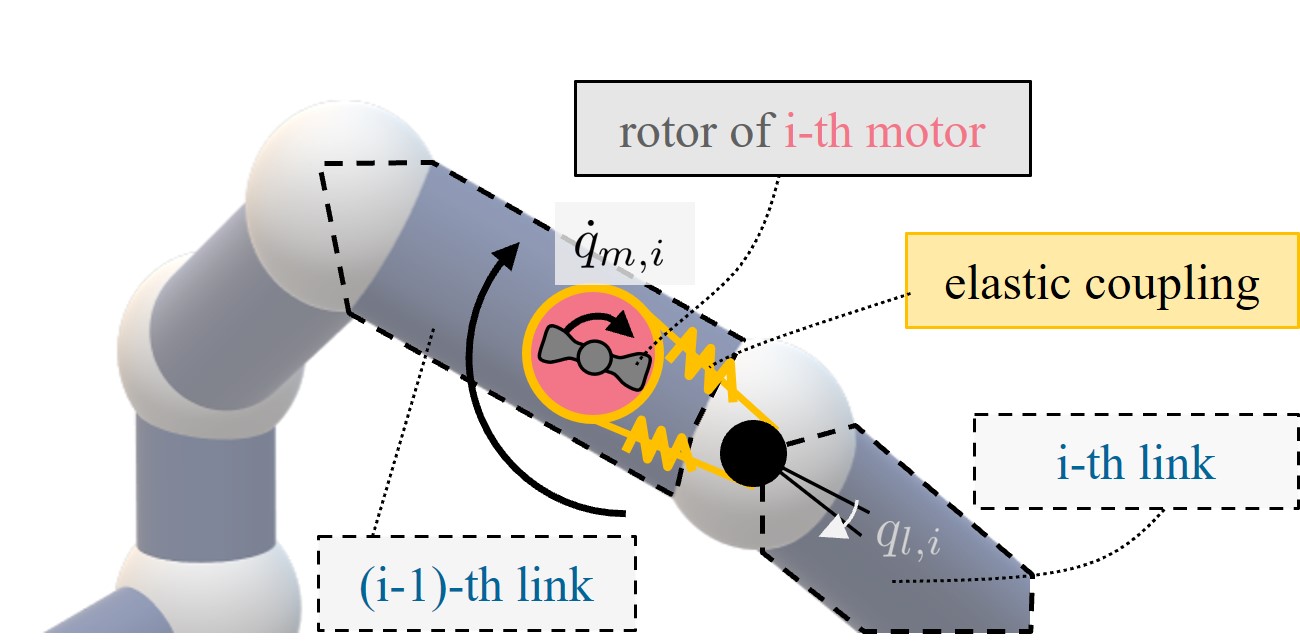}
	\caption{Conceptual sketch of a system where an inertial coupling $ \M_{lm} \neq \T 0 $ exists. The movement of the (i-1)-th link may induce a motion of the i-th motor box around its axis of rotation. Then, the rotational kinetic energy of the rotor also depends on this link-side movement.}
	\label{fig:Mc}
\end{figure}

The task coordinate $ \x = \T f(\ql) $, and thus also the inequality constraint imposed on the task coordinate as defined in Sec.~\ref{sec:fundamentals}, are functions of the link-side coordinates $ \ql $ only. In the following, we omit the last $ r $ vanishing columns of the task and constraint Jacobian and use $ \J(\ql) = \frac{\partial \T f(\ql)}{\partial \ql}$ and $ \A(\ql) = \frac{\partial \phi(\ql)}{\partial \ql}$, respectively, with a slight abuse of notation.

\subsection{General Impact Model}
Integrating \eqref{eq:RD full} over the infinitesimal impact duration, the impact equations \begin{subequations}\label{eq:impact_eq elastic}
	\begin{align}
		\Ml\left(\dqplus_l -\dqminus_l\right) +\M_{lm}\left(\dqplus_m-\dqminus_m\right) &= \T A^T\Lambda\\
		\M^T_{lm}\left(\dqplus_l -\dqminus_l\right) + \Mm\left(\dqplus_m-\dqminus_m\right)  &= \T 0
	\end{align}
\end{subequations}
are obtained for a frictionless impact \cite{Brogliato1999}. Combining them with the restitution model from \Cref{a:restitution}, the impact model for flexible joint robots is straightforwardly derived as
\begin{subequations}\label{eq: el impact model el joint}
	\begin{align}\label{eq:el joint impact link}
		\dqplus_l &= \left(\T I - \left(1+e\right)\pseudoInv{\A}{\bar\M}\A \right) \dqminus_l\\
		\label{eq:el joint impact motor}
		\dqplus_m &= \dqminus_m + \left(1+e\right) \Mm^{-1}\M_{lm}^T\pseudoInv{\A}{\bar\M}\A\dqminus_l
	\end{align}
\end{subequations}
following \cite{Brogliato1999}. 
The matrix $ {\bar{\M}(\ql) \in \Real^{n\times n}}$ is defined as \begin{equation}\label{eq:Mbar}
	\bar\M(\ql) = \Ml(\ql) - \M_{lm}(\ql)\inv{\Mm}\M_{lm}^T(\ql).
\end{equation}

\subsection{Nonsmooth and Impact Invariant Velocities}
\note{Given the impact model \eqref{eq: el impact model el joint}, nonsmooth and impact invariant velocities can be identified for the link and motor side.}
\subsubsection{\note{Link side}}
\begin{lemma}[\textbf{Link-side NSID and impact-invariant velocities}]\label{cor:el joint iv}
	Consider a flexible joint robotic system \eqref{eq:RD full} undergoing an impact satisfying Assumptions~\ref{a:non singular basic}-\ref{a:restitution}. All results from Sec.~\ref{sec:model} can be applied to the link-side impact behavior using $ {\M = \bar{\M}} $, $ \A(\ql) = \frac{\partial \phi(\ql)}{\partial \ql} $, and $ \J(\ql) = \frac{\partial \T f(\ql)}{\partial \ql}$. In particular, a NSID in configuration and task space can be calculated as defined in Theorems~\ref{thm:dqimp}~and~\ref{thm:dximp}. Impact invariant velocities can be obtained from Theorems~\ref{thm:dqiv}~and~\ref{thm:task invariant}.
\end{lemma}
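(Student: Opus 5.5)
The plan is to show that the link-side impact map \eqref{eq:el joint impact link} has exactly the structure of the rigid-body map \eqref{eq: el impact model rigid}, with $\M$ replaced by $\bar\M$. Once this is established, every result of Sec.~\ref{sec:model} transfers verbatim, because those proofs use only three facts: $\M$ is symmetric positive definite, $\A\neq\T 0$, and $\Pma=\pseudoInv{A}{\M}\A$ is a projector with the eigenspaces listed in \Cref{tab:eig P}.

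First, I derive \eqref{eq:el joint impact link} from \eqref{eq:impact_eq elastic}. The second block row gives $\dqplus_m-\dqminus_m=-\inv{\Mm}\M_{lm}^T(\dqplus_l-\dqminus_l)$. Substituting this into the first row yields $\bar\M(\dqplus_l-\dqminus_l)=\A^T\Lambda$, with $\bar\M$ as in \eqref{eq:Mbar}. This has the same form as \eqref{eq:impact_eq rigid}. Combining it with \Cref{a:restitution} then gives $\Lambda=-(1+e)\inv{(\A\inv{\bar\M}\A^T)}\A\dqminus_l$ and the map \eqref{eq:el joint impact link}, exactly as in the derivation of \eqref{eq:Lambda}--\eqref{eq: el impact model rigid}.

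The key step, and the main obstacle, is to justify that $\bar\M$ is a valid weighting matrix. It is symmetric by construction. For positive definiteness, I would observe that $\bar\M$ is the Schur complement of the block $\Mm$ in the full inertia matrix of \eqref{eq:RD full}. The full inertia matrix is symmetric positive definite, being the kinetic energy metric, and $\Mm$ is positive definite, being a diagonal matrix of rotor inertias. By the standard Schur complement lemma, $\bar\M\succ 0$. Consequently $\A\inv{\bar\M}\A^T>0$ under \Cref{a:non singular basic}, the generalized inverse $\pseudoInv{\A}{\bar\M}$ exists, and $\pseudoInv{\A}{\bar\M}\A$ is a projector onto $\range(\inv{\bar\M}\A^T)$ with kernel $\ker(\A)$, as shown in \Cref{app:projectors}.

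With this in place, I would invoke the earlier results in turn.
\begin{itemize}
\item Theorems~\ref{thm:dqiv} and~\ref{thm:dqimp} follow with the same one-line proofs. This gives impact invariance of $\ker(\A)$ and the link-side NSID $\dq\imp=\nu\inv{\bar\M}\A^T$ with $\nu<0$. Admissibility follows from $\A\inv{\bar\M}\A^T>0$.
\item For task space, premultiply \eqref{eq:el joint impact link} by $\J(\ql)$ and use $\A=\Ax\J$, as in \Cref{thm:task map}. This gives the map with $\M_x$ replaced by $\bar\M_x=\inv{(\J\inv{\bar\M}\J^T)}$. That matrix is positive definite wherever $\J$ has full row rank, which is the same nonsingularity requirement as in the rigid case. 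Theorems~\ref{thm:task invariant} and~\ref{thm:dximp} then follow.
\end{itemize}

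The motor-side equation \eqref{eq:el joint impact motor} plays no role in these statements, since the link-side map is decoupled from $\dqminus_m$. I would note this explicitly to close the argument.
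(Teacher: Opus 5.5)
Your proposal is correct and follows essentially the paper's route. The paper obtains the link-side map \eqref{eq:el joint impact link} from \eqref{eq:impact_eq elastic} by eliminating the motor velocity jump, observes that it has exactly the structure of \eqref{eq: el impact model rigid} with $\M$ replaced by $\bar{\M}$, and states the lemma without a separate proof.

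Your Schur-complement argument that $\bar{\M}$ is symmetric positive definite is a detail the paper leaves implicit. It is what guarantees that $\pseudoInv{A}{\bar{\M}}$ exists, that $\nu<0$ is an admissible approach, and that $\bar{\M}_x=\inv{(\J\inv{\bar{\M}}\J^T)}$ is well defined, so it is a useful addition.
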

We denote the link-side NSID as $ \dq_{l,\ast} $ in the following.

\subsubsection{Nonsmooth Motor Velocities}
\begin{figure*}
	\centering
	\includegraphics[width = 0.95\textwidth]{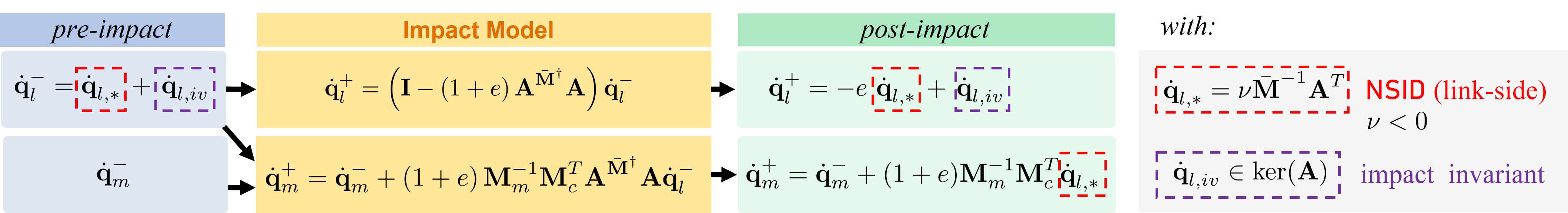}
	\caption{Impact model of a flexible joint robot modeled via \eqref{eq:RD full} undergoing an impact. Besides the link-side velocities, also the motor velocities may become nonsmooth. The amount of the motor velocity jump is then proportional to the NSID contribution $ \dq_{l,\ast} $ to the link-side pre-impact velocity $ \dqminus_l $.}
	\label{fig:el joint concept}
\end{figure*}
While flexible joint robots may seem well suited for impact tasks, \eqref{eq:el joint impact motor} reveals, that, in general, they do not fully protect the motors from impulsive torques \cite{Brogliato1999}. One obtains
\begin{equation}\label{key}
	\Mm\left(\dqplus_m-\dqminus_m\right)  = - \M_{lm}^T\inv{\bar{\M}}\A^T\Lambda,
\end{equation}
where an impulsive torque $ \T T = - \M_{lm}^T\inv{\bar{\M}}\A^T\Lambda \in \Real^r $ is acting on the motor side, inducing a motor velocity jump. From \eqref{eq:el joint impact motor} it becomes immediately clear that the jump only depends on pre-impact link-side velocities $ \dqminus_l $ and not on pre-impact motor velocities $ \dqminus_m $. This is due to the fact that the inequality constraint only restricts link-side motions. 

\begin{theorem}[\textbf{nonsmooth motor velocities}]\label{thm: motor side}
	Consider a flexible joint robotic system \eqref{eq:RD full} undergoing an impact satisfying Assumptions~\ref{a:non singular basic}-\ref{a:restitution}. If it holds $ {\M_{lm}^T\dq_{l,\ast} \neq \T 0} $, i.e. if the NSID $ {\dq_{l,\ast} = \nu\inv{\bar{\M}}\A^T} $ in the configuration space is not in the null space of the inertial coupling matrix $ \M_{lm}^T $, the impact induces nonsmooth motor velocities. The discontinuity solely depends on the NSID contribution to the pre-impact link-side velocity.
\end{theorem}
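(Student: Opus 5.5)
We start from the motor-side impact map \eqref{eq:el joint impact motor} and rewrite the motor velocity jump in terms of the link-side decomposition of Lemma~\ref{cor:el joint iv}. By that lemma, all results of Sec.~\ref{sec:model} hold with $\M$ replaced by $\bar\M$. This substitution is legitimate because $\bar\M$ in \eqref{eq:Mbar} is the Schur complement of $\Mm$ in the positive definite block inertia matrix of \eqref{eq:RD full}. Hence $\bar\M$ is symmetric positive definite, and $\T P_{q,l} := \pseudoInv{A}{\bar\M}\A$ is a well-defined projector onto $\range(\inv{\bar\M}\A^T)$ with kernel $\ker(\A)$. Analogously to \eqref{eq:dqminus zerlegt}, we then split any pre-impact link velocity as
\begin{equation*}
\dqminus_l = \T P_{q,l}\dqminus_l + (\T I - \T P_{q,l})\dqminus_l .
\end{equation*}
The first term is the NSID contribution $\dq_{l,\ast} = \nu\inv{\bar\M}\A^T$, and the second is an impact-invariant contribution in $\ker(\A)$.

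Substituting this split into \eqref{eq:el joint impact motor} and using idempotence, $\T P_{q,l}\dqminus_l = \T P_{q,l}\dq_{l,\ast} = \dq_{l,\ast}$. We therefore obtain
\begin{equation*}
\dqplus_m - \dqminus_m = (1+e)\inv{\Mm}\M_{lm}^T\dq_{l,\ast}.
\end{equation*}
This shows directly that the jump depends only on the NSID contribution. It is independent of the impact-invariant link-side part, since that part lies in $\ker(\T P_{q,l})$, and independent of $\dqminus_m$, which is the second claim. As a consistency check, we use $\nu$ from the decomposition, $\nu = \inv{(\A\inv{\bar\M}\A^T)}\A\dqminus_l$, together with the link-side analogue of \eqref{eq:Lambda}, $\Lambda = -(1+e)\nu$. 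These reproduce the impulsive motor torque $\T T = -\M_{lm}^T\inv{\bar\M}\A^T\Lambda$ stated before the theorem.

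For the first claim, we argue nonvanishing of the jump. Since $e\in[0,1]$, we have $1+e\geq 1 > 0$, and $\inv{\Mm}$ is invertible (a diagonal positive rotor inertia). Hence $\dqplus_m \neq \dqminus_m$ if and only if $\M_{lm}^T\dq_{l,\ast}\neq\T 0$. Because $\dq_{l,\ast} = \nu\inv{\bar\M}\A^T$, this is equivalent, for $\nu\neq 0$, to $\M_{lm}^T\inv{\bar\M}\A^T\neq\T 0$, which is exactly the hypothesis of the theorem.

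We also need $\nu\neq 0$ for an actual impact. This holds because an impact requires $\dot\phi^- = \A\dqminus_l = \nu\A\inv{\bar\M}\A^T < 0$, and $\A\inv{\bar\M}\A^T > 0$ by Assumption~\ref{a:non singular basic}. The only delicate point is this positivity of $\bar\M$, together with ensuring that a genuine impact ($\nu<0$) is meant. Everything else is a direct substitution.
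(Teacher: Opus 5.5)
Your proposal is correct and follows essentially the same route as the paper. You split $\dqminus_l$ into its NSID and impact-invariant parts via \Cref{cor:el joint iv}, substitute into \eqref{eq:el joint impact motor} to obtain the jump $(1+e)\inv{\Mm}\M_{lm}^T\dq_{l,\ast}$, and conclude it is nonzero iff $\M_{lm}^T\dq_{l,\ast}\neq\T 0$. Your extra justifications are also correct and make explicit what the paper leaves implicit: positive definiteness of $\bar\M$ as a Schur complement, $\nu\neq 0$ from $\dot\phi^-<0$, and the consistency check with $\T T$.
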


\begin{proof}
	Given \Cref{cor:el joint iv}, evaluating \eqref{eq:el joint impact motor} for the general pre-impact velocity $ {\dqminus_l = \dq_{l,\ast} + \dq_{l,\textup{iv}}} $ yields 
	\begin{align}
		\dqplus_m & = \dqminus_m + \left(1+e\right) \Mm^{-1}\M_{lm}^T\dq_{l,\ast}\\
		\label{eq: dqplus 2}
		& = \dqminus_m + \left(1+e\right)\nu \Mm^{-1}\M_{lm}^T\inv{\bar{\M}}\A^T.
	\end{align}
	The second term therein only depends on the NSID contribution and is nonzero iff $ \M_{lm}^T\dq_{l,\ast} \neq \T 0 $.
\end{proof}
Note that plugging $ \A = \Ax\J $ and $ \dxminus = \dx\imp +\dx\iv $ in \eqref{eq:el joint impact motor} and using \eqref{eq: dximp}, one can reconstruct \eqref{eq: dqplus 2}. Thus, the post-impact motor velocities can also directly be predicted from $ \dxminus $.

According to \Cref{thm: motor side}, an impact of a flexible joint robotic system with inertial couplings in a configuration for which $ {\M_{lm}^T\dq_{l,\ast} \neq \T 0} $ is fulfilled always yields nonsmooth motor velocities. The practical relevance of this effect depends on the specific robotic system. When building or choosing flexible joint robotic hardware for impact applications, the possible harmful effects of inertial couplings on motors should be evaluated. We summarize results of this section with respect to the link-side generalized velocities in Fig.~\ref{fig:el joint concept}.

\section{\note{NSID of Constrained Robotic Systems}}\label{sec:constraint}
Consider the robotic system \eqref{eq:RD rigid} \note{or the flexible joint robot \eqref{eq:RD full}}. Besides the inequality constraint \eqref{eq:constraint}, we now assume it is also subject to a {$ p $-dimensional} equality constraint $ \T \phi\eq(\q) = \T 0 $ on the generalized coordinate $ \q $. Thus, the generalized contact torque 
\begin{equation}\label{eq:eq constraint tau contact}
	 \T\tau_{\textup{contact}} = \A^T\lambda + \A\eq^T\T \lambda\eq 
\end{equation}
includes a contribution induced by the generalized forces $ {\T\lambda\eq \in\Real^p} $ in direction of the equality constraint. It holds $ {\A\eq = \frac{\partial \T \phi\eq(\q)}{\partial \q}} $ for the Jacobian $ {\A\eq \in \Real^{p\times n}} $ of the equality constraint. In the following, we address situations satisfying the assumption below.
\begin{assumption}\label{a:constraint}
	The equality constraint Jacobian $ \A\eq $ is of full rank. Further, all rows of $ \A\eq $ are linearly independent from $ \A $, i.e. it holds $ \textup{rank}\left(\left[\A\eq^T, \A^T\right]\right)=p+1 $.
\end{assumption}
\note{The following analysis is performed for the robotic system \eqref{eq:RD rigid} with non-elastic joints for the sake of readability. However, all results can be applied mutatis mutandis for flexible joint robots.}

\subsection{\note{Impact Model}} 
\begin{lemma}[\note{Impact Model of a constrained robotic system}]\label{thm:constrained model}
	\note{Consider a robotic system \eqref{eq:RD rigid} subject to a $ p $-dimensional equality constraint $ \T{\phi}_c(\q) = \T 0 $ and the one-dimensional inequality constraint \eqref{eq:constraint}. Under an impact satisfying Assumptions~\ref{a:non singular basic}-\ref{a:constraint}, the post-impact generalized velocities are obtained from the pre-impact state as follows:
	\begin{align}\label{eq:impact map constrained}
		\dqplus =  \left( \T I - \left(1+e\right)\T P_{q,c}\right)  \dqminus.
	\end{align}
	The projector $ \T P_{q,c} $ therein can be formulated as
	\begin{equation}\label{eq:Pqc 2}
		\T P_{q,c} = \T W_{q,c} \A^T\inv{\left(\A\T W_{q,c} \A^T\right)}\A
	\end{equation}
	with $ \T{W}_{q,c} = (\T I -\T P_c)\inv{\M}(\T I -\T P_c^T) \in \Real^{n\times n}$ and $ {\T P _c = \pseudoInv{\A}{\M}_c\A_c} \in \Real^{n\times n} $.}
	\end{lemma}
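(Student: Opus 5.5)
We need to derive the impact map for the constrained system and then bring its projector into the symmetric-weight form claimed in the lemma.

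\textbf{Step 1 (eliminate the equality-constraint force).} Keeping the equality constraint satisfied requires $\A\eq\dq = \T 0$ and $\dot{\A}\eq\dq + \A\eq\ddq = \T 0$ at all times. Substituting $\ddq$ from \eqref{eq:RD rigid} with \eqref{eq:eq constraint tau contact}, and solving with the invertible matrix $\A\eq\inv{\M}\A\eq^T$ (full rank by \Cref{a:constraint}), gives $\T\lambda\eq$ as an affine function of $\T\tau + \A^T\lambda - \T h$ and $\dot{\A}\eq\dq$.

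\textbf{Step 2 (integrate over the impact).} Under \Cref{a:instantaneous}, integrate both the dynamics and this expression over the vanishing impact interval. Only the impulsive term $\A^T\lambda$ survives, since the bounded terms $\T\tau$, $\T h$ and $\dot{\A}\eq\dq$ integrate to zero. This yields
\[
\T\Lambda\eq = -\big(\pseudoInv{A}{\M}\eq\big)^T\A^T\Lambda ,
\]
and therefore
\[
\M(\dqplus-\dqminus) = \Peq\A^T\Lambda, \qquad \T P\eq = \pseudoInv{A}{\M}\eq\A\eq .
\]

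\textbf{Step 3 (use the restitution law).} Premultiply the previous equation by $\A\inv{\M}$ and apply \Cref{a:restitution}, $\A\dqplus = -e\A\dqminus$. This gives
\[
\Lambda = -(1+e)\big(\A\inv{\M}\Peq\A^T\big)^{-1}\A\dqminus ,
\]
and hence the map \eqref{eq:impact map constrained} with
\[
\T P_{q,c}=\inv{\M}\Peq\A^T\big(\A\inv{\M}\Peq\A^T\big)^{-1}\A .
\]

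\textbf{Step 4 (invertibility, the main obstacle).} We must show the scalar $\A\inv{\M}\Peq\A^T$ is nonzero. First, verify the intertwining identity $\inv{\M}\Peq = (\T I-\T P\eq)\inv{\M}$ by direct expansion of $\T P\eq$. Together with idempotence of $\T I-\T P\eq$, this gives
\[
\A\inv{\M}\Peq\A^T = \A(\T I-\T P\eq)\inv{\M}(\T I-\T P\eq^T)\A^T = \A\T W_{q,c}\A^T .
\]
This is a quadratic form in the positive definite $\inv{\M}$, evaluated at $(\T I-\T P\eq^T)\A^T$. That vector vanishes only if $\A^T\in\range(\T P\eq^T)=\range(\A\eq^T)$. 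The rank condition in \Cref{a:constraint} excludes this, so the scalar is strictly positive.

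\textbf{Step 5 (rewrite in the claimed form).} Applying the same identity to the prefactor $\inv{\M}\Peq$ in $\T P_{q,c}$ converts it into $\T W_{q,c}\A^T(\A\T W_{q,c}\A^T)^{-1}\A$, which is \eqref{eq:Pqc 2}. Idempotence of $\T P_{q,c}$ is then immediate, since $\A\T P_{q,c}=\A$.
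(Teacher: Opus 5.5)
Your proposal is correct and follows the paper's appendix proof essentially step by step. You eliminate $\T\lambda_c$ via the acceleration-level constraint, integrate to obtain $\M(\dqplus-\dqminus) = (\T I - \T P_c^T)\A^T\Lambda$, apply the restitution law, and rewrite using $\inv{\M}(\T I - \T P_c^T) = (\T I - \T P_c)\inv{\M}$ together with idempotence; the one addition is your Step 4, which actually proves that $\A\T W_{q,c}\A^T > 0$ (it is a quadratic form in $\inv{\M}$ evaluated at $(\T I - \T P_c^T)\A^T \neq \T 0$, by the rank condition), whereas the paper only asserts invertibility from Assumption~\ref{a:constraint}.
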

	\begin{proof}
		\note{To derive the model, the robot dynamics \eqref{eq:RD rigid} with \eqref{eq:eq constraint tau contact} are integrated over the infinitesimal impact. Considering that it always holds $ \dq = (\T I - \T P\eq)\dq $, i.e. all generalized velocities are compatible with the equality constraint at all times, the final model is obtained. We provide the detailed derivation in \Cref{app:constrained_model}.}
	\end{proof}
	\note{ \textit{Remarks:} The inverse in \eqref{eq:Pqc 2} exists due to \Cref{a:constraint}. Further, note that the projector $ \T P_{q,c} $ is in a similar form as the one from \eqref{eq:projector} used throughout this paper. }
\subsection{\note{NSID and Impact Invariant Velocities}}
\begin{theorem}[\textbf{NSID of a constrained system in configuration space}]\label{thm:constraint configuration}
	\note{Consider a constrained robotic system undergoing an impact, modeled by \Cref{thm:constrained model}.} The constrained NSID $ \dq_{*,c} $ in configuration space is obtained as the dynamically consistent projection of the unconstrained NSID $ \dq\imp $ from \Cref{thm:dqimp} on the equality constraint, i.e.
	\begin{equation}\label{key}
		\dq_{*,c} = (\T I -\T P\eq)\dq\imp = \nu(\T I -\T P\eq)\inv{\M}\A^T,
	\end{equation} with the projector $ \T P\eq = \pseudoInv{A}{\M}\eq\A\eq $, and $ \nu < 0 $. \note{The feasible impact-invariant space is $ (n-p-1) $-dimensional.}
\end{theorem}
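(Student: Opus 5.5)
We will mirror the unconstrained analysis of Theorems~\ref{thm:dqiv} and~\ref{thm:dqimp} and work with the eigenstructure of the constrained impact map $\T I - (1+e)\T P_{q,c}$ from \Cref{thm:constrained model}. The first step is to verify that $\T P_{q,c}$ is a projector. Using $\T P_{q,c} = \T W_{q,c}\A^T\inv{(\A\T W_{q,c}\A^T)}\A$, idempotence is immediate, since the middle factor $\A\T W_{q,c}\A^T\inv{(\A\T W_{q,c}\A^T)} = 1$ cancels. The inverse exists by \Cref{a:constraint}. Although $\T W_{q,c}$ is only positive semidefinite, the reasoning of \Cref{app:projectors} still applies and gives
\[
\range(\T P_{q,c}) = \range(\T W_{q,c}\A^T), \qquad \ker(\T P_{q,c}) = \ker(\A).
\]
Hence the map has eigenvalue $-e$ on $\range(\T W_{q,c}\A^T)$ and eigenvalue $1$ on $\ker(\A)$.

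The second step is to identify the range direction with the claimed vector. Writing $\T W_{q,c}\A^T = (\T I - \T P\eq)\inv{\M}(\T I - \T P\eq^T)\A^T$, we use \eqref{eq: MP} to pull $(\T I-\T P\eq^T)$ through $\inv{\M}$:
\[
\inv{\M}(\T I - \T P\eq^T) = (\T I - \T P\eq)\inv{\M}.
\]
Idempotence of $\T I - \T P\eq$ then gives $\T W_{q,c}\A^T = (\T I - \T P\eq)\inv{\M}\A^T$. This vector is nonzero: otherwise $\inv{\M}\A^T \in \range(\T P\eq) = \range(\inv{\M}\A\eq^T)$, which would make $\A^T \in \range(\A\eq^T)$ and contradict \Cref{a:constraint}. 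Therefore $\dqplus = -e\dqminus$ holds exactly for $\dqminus = \nu(\T I - \T P\eq)\inv{\M}\A^T = (\T I-\T P\eq)\dq\imp$.

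Admissibility requires $\nu<0$. For this we check the sign of
\[
\A(\T I-\T P\eq)\inv{\M}\A^T = \A\,\T W_{q,c}\A^T,
\]
using the same identity as above. This quantity is $>0$ because $\T W_{q,c}$ is positive semidefinite and the previous step showed $(\T I-\T P\eq^T)\A^T\neq\T 0$. Consequently $\dot\phi^-<0$ iff $\nu<0$. We will also note that the NSID is compatible with the equality constraint, since $\A\eq(\T I-\T P\eq)=\T 0$, as required by \eqref{eq:compatible vel}.

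The last step is the dimension count of the feasible impact-invariant space. Feasible velocities lie in $\ker(\A\eq)$, which has dimension $n-p$ since $\A\eq$ has full rank. The feasible invariant velocities form $\ker(\A)\cap\ker(\A\eq) = \ker([\A\eq^T,\A^T]^T)$. By the rank condition of \Cref{a:constraint} this space has dimension $n-p-1$. Together with the one-dimensional NSID it spans $\ker(\A\eq)$, so the decomposition is complete within the feasible space.

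The main obstacle is that $\T W_{q,c}$ is singular, so the projector facts from \Cref{app:projectors}, which assume positive definite weighting, cannot be quoted verbatim. We handle this by proving directly the two facts actually used: that $\A\T W_{q,c}\A^T>0$, and that $\T P_{q,c}$ reproduces vectors in $\range(\T W_{q,c}\A^T)$ and annihilates $\ker(\A)$.
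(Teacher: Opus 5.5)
Your proposal is correct and follows the paper's route. You read the NSID off $\range(\T P_{q,c}) = \range(\T W_{q,c}\A^T)$ and reduce it to $(\T I-\T P\eq)\inv{\M}\A^T$ via \eqref{eq: MP} and idempotence of $\T I-\T P\eq$, exactly as the paper does. Your dimension count through $\ker(\A)\cap\ker(\A\eq)$ and the rank condition of \Cref{a:constraint} is a cleaner version of the paper's argument, which removes the infeasible $p$-dimensional $\ker(\T I-\T P\eq)$ from $\ker(\A(\T I-\T P\eq))$. Your checks that $(\T I-\T P\eq^T)\A^T\neq\T 0$ and $\A\T W_{q,c}\A^T>0$ justify $\nu<0$ and the projector facts despite the singular weighting $\T W_{q,c}$; the paper leaves these details implicit.
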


\begin{proof}
	The NSID is straightforwardly identified from the range of the projector $ \T P_{q,c} $ in \eqref{eq:Pqc 2} as $ {\dq_{*,c} = \nu\T W_{q,c}\A^T} $. By definition of $ \T W_{q,c}$, \eqref{eq: MP} and idempotence of $ (\T I - \T P\eq) $, we obtain $ {\dq_{*,c} = \nu(\T I -\T P\eq)\inv{\M}\A^T = (\T I -\T P\eq)\dq\imp} $. By definition of $ (\T I -\T P\eq) $, this corresponds to a dynamically consistent projection \cite{Khatib1995} of $ \dq\imp $ on the constraint.
	
For the impact-invariant space, we analyze the null space of $ \T{P}_{q,c} $, which is $ (n-1) $-dimensional, in general. However, $ \dqminus $ from the $ p $-dimensional subspace defined as $ {\ker(\T I - \T P\eq)} $ are not compatible with the equality constraint and thus unfeasible. \note{Given \Cref{a:constraint}, the impact-invariant subspace can thus be defined as the orthogonal complement to $ \ker(\T I - \T P\eq)  $ in $ \ker(\A(\T I - \T P\eq)) $. 
Hence, it is ${( n-p-1)} $-dimensional.}
\end{proof}

Utilizing $ \A = \Ax\J $, the task space map
\begin{equation}\label{eq:dxplus constrained}
	\dxplus = \left(\T I - \left(1+e\right) \T{P}_{x,c} \right)\dxminus
\end{equation}
can be directly obtained from \eqref{eq:impact map constrained}, with the projector
\begin{equation}\label{eq:Pxc}
	\T{P}_{x,c} = \T{W}_{x,c}\Ax^T\inv{\left(\Ax\T{W}_{x,c}\Ax^T\right)}\Ax
\end{equation}
and $ \T W_{x,c} = \J(\T I -\T P\eq)\inv{\M}\Peq\J^T $.\footnote{The inverse in \eqref{eq:Pxc} is invertible due to \Cref{a:constraint}.} 
Thus, we conclude the following:
\begin{corollary}\label{thm:constraint task}
	For the task space NSID $ \dx_{\ast,c} $ of a constrained robotic system it holds
	\begin{equation}\label{key}
		\dx_{\ast,c} = \nu\T{W}_{x,c}\Ax^T = \J\dq_{\ast,c} = \J(\T I -\T P\eq)\dq\imp,
	\end{equation}
	with the constrained NSID $ \dq_{\ast,c} $ in configuration space, the unconstrained NSID $ \dq\imp $ in configuration space, and $ \nu < 0 $. The impact invariant task space $ {\ker(\Ax) \cap \range(\J(\T I - \T P\eq))} $ contains all feasible task velocities, which are in the null space of the inequality constraint.
\end{corollary}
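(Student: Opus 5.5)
We obtain the task-space NSID of the constrained system from the task-space map \eqref{eq:dxplus constrained}, in the same way \Cref{thm:dximp} followed from \Cref{thm:dqimp}. Three steps are needed.

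\textbf{Step 1: verify the task-space map and its projector.} Premultiply \eqref{eq:impact map constrained} by $\J$ and use $\A = \Ax\J$ together with the form \eqref{eq:Pqc 2} of $\T P_{q,c}$. This gives
\[
\J\T P_{q,c}\dqminus = \J\T W_{q,c}\J^T\Ax^T\inv{\left(\Ax\J\T W_{q,c}\J^T\Ax^T\right)}\Ax\dxminus .
\]
Since $\J\T W_{q,c}\J^T = \T W_{x,c}$, this equals $\T P_{x,c}\dxminus$, and \eqref{eq:dxplus constrained} follows. The inverse exists because $\Ax\T W_{x,c}\Ax^T = \A\T W_{q,c}\A^T$, which is the nonzero scalar already used in \Cref{thm:constrained model}.

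\textbf{Step 2: identify the NSID as the range of $\T P_{x,c}$.} The matrix $\T P_{x,c}$ is idempotent, since $\Ax\T W_{x,c}\Ax^T\inv{(\cdot)}=1$. Its range is one-dimensional and spanned by $\T W_{x,c}\Ax^T$. Hence $\dxminus = \nu\T W_{x,c}\Ax^T$ is mapped to $-e\dxminus$, exactly as in the proof of \Cref{thm:dqimp}. The restriction $\nu<0$ makes the approach admissible, because $\dot\phi^- = \nu\Ax\T W_{x,c}\Ax^T<0$. The scalar $\Ax\T W_{x,c}\Ax^T$ is positive: $\T W_{x,c}$ is a congruence of $\inv{\M}$ and therefore positive semidefinite, and the scalar is nonzero by Step 1.

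\textbf{Step 3: relate to the configuration-space NSIDs.} Write
\[
\nu\T W_{x,c}\Ax^T = \nu\J(\T I-\T P\eq)\inv{\M}\Peq\J^T\Ax^T .
\]
Using \eqref{eq: MP} and idempotence of $\T I-\T P\eq$, collapse this to $\nu\J(\T I-\T P\eq)\inv{\M}\A^T$. By \Cref{thm:constraint configuration} this equals $\J\dq_{\ast,c}$, and by \Cref{thm:dqimp} it equals $\J(\T I-\T P\eq)\dq\imp$.

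\textbf{Impact-invariant set.} Feasible task velocities are $\dx=\J\dq$ with $\dq=(\T I-\T P\eq)\dq$, so they range over $\range(\J(\T I-\T P\eq))$. Among these, the invariant ones are those fixed by the map. Because $\T P_{x,c}\dx=0$ if and only if $\Ax\dx=0$ (the scalar in Step 1 is nonzero), the invariant ones are exactly those with $\Ax\dx=0$. Hence the invariant set is $\ker(\Ax)\cap\range(\J(\T I-\T P\eq))$.

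\textbf{Main obstacle.} Everything else is routine linear algebra; the main subtlety is the feasibility argument. One must check that the NSID $\J\dq_{\ast,c}$ itself lies in the feasible range, which is immediate because $\dq_{\ast,c}=(\T I-\T P\eq)\dq_{\ast,c}$. One must also check that the decomposition of a feasible $\dxminus$ into NSID and invariant parts stays inside that range. This holds because $\T P_{x,c}\dxminus$ is a multiple of the feasible NSID, so the remainder is a difference of feasible vectors and therefore also feasible.
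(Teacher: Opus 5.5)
Your proposal is correct and follows essentially the paper's route. You premultiply \eqref{eq:impact map constrained} by $\J$ and substitute $\A=\Ax\J$ to get \eqref{eq:dxplus constrained} with $\T P_{x,c}$, read off the NSID $\nu\T W_{x,c}\Ax^T$ as the range of this projector, and collapse it to $\J(\T I-\T P\eq)\dq\imp=\J\dq_{\ast,c}$ via \eqref{eq: MP} and idempotence of $\T I-\T P\eq$, as in the proof of \Cref{thm:constraint configuration}. Your extra checks (positivity of $\Ax\T W_{x,c}\Ax^T$ under \Cref{a:constraint}, and that the NSID/invariant split stays within $\range(\J(\T I-\T P\eq))$) only make explicit what the paper leaves implicit.
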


%% file: experiments.tex
\section{Experiments \note{with a passive Mechanism}}\label{sec:experiments}
\note{
	The remainder of this work aims at experimentally evaluating the key properties of the NSID. To validate the calculation of the NSID independent of a controller, we start with a passive multibody sytem in this section. In particular, we will evaluate the following result:}
\note{\textit{Only an impact under the NSID aligns the rebound path with the approach path independent of the restitution factor.}}
	
\note{Using the passive system is motivated by the fact that, due to \Cref{a:instantaneous}, the impact model \eqref{eq: el impact model rigid} from \cite{Walker1994} equates the motion of an actuated robot during an impact to that of its underlying passive multibody system. By employing such a passive system for our initial experiments, we demonstrate that (i) the NSID is indeed an inherent property of passive multibody systems, and (ii) the alignment of approach and rebound paths is triggered solely by the approach direction, rather than by motor dynamics or any control strategy.}
\note{We will then go on to analyze impacts with a torque-controlled robot in Sec.~\ref{sec:franka_exp}.}

\subsection{Passive Experimental Setup}\label{sec:exp:passive_setup}
\note{The utilized jointed multi-body system 3D-printed from PLA is depicted in Fig~\ref{fig:exp_setup}. 
It comprises three revolute joints with parallel axes. The joint angles are stacked in the vector $ \q\in\Real^3 $ of generalized coordinates. The motion of the system is restricted to the $ yz $-plane of the world frame, located in the first joint. We utilize the end-effector position $ \x = \begin{bmatrix}
	y & z 
\end{bmatrix}^T \in \Real^2 $ of the center of the spherical tool tip in that frame as a task coordinate. \note{It holds $ l_1 = l_2 = \SI{0.165}{\metre}$ and $ l_3 = \SI{0.128}{\metre} $ for the lengths of the three links and $ m_1 = \SI{0.299}{\kilogram} $, $ m_2 =\SI{0.208}{\kilogram} $ and $ m_3 = \SI{0.272}{\kilogram} $ for their masses.} The mechanism is placed above a horizontal surface, which is impacted with the end-effector in the experiments. To evaluate effects of different contact pairings, we utilize both a coated chipboard and an aluminum profile as a contact surface.}

\begin{figure}
	\centering
	\includegraphics[width = 0.4\textwidth]{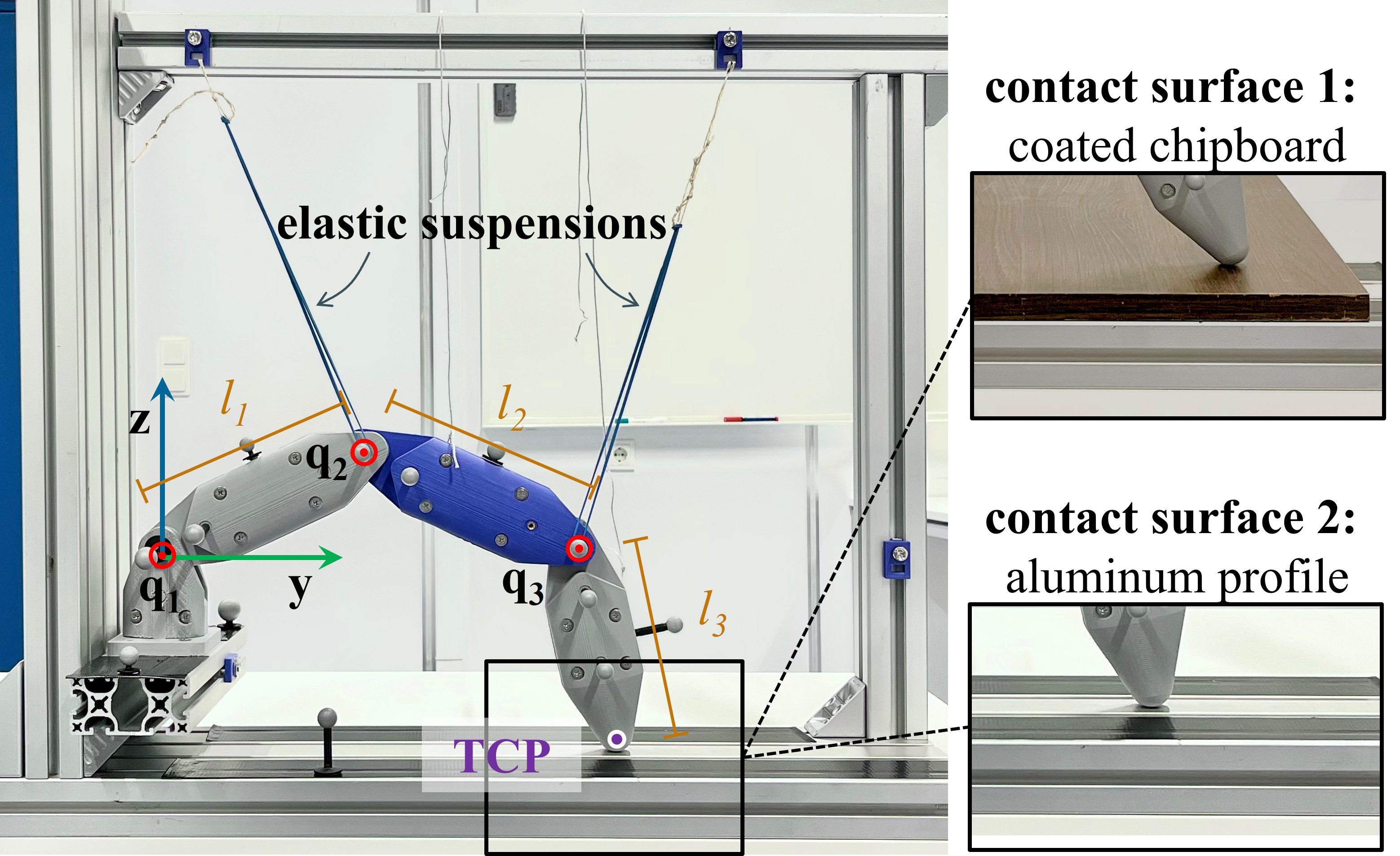}
	\caption{\note{Passive, 3 DoF, jointed multibody system and contact surfaces utilized for the experimental validation.}}
	\label{fig:exp_setup}
\end{figure}

Gravity acts along the negative $ z $-axis of the world frame and is utilized to generate the impact motions. Given elastic suspensions made from soft rubber bands, the system favors elbow-up configurations throughout the experiments. This ensures that only the end-effector collides with the surface and increases repeatability of the tests without affecting the impact dynamics.  

For every trial, the mechanism was moved manually via attached guiding strings, such that the end-effector lifted from the contact surface. Upon release, the mechanism accelerated under the influence of gravity, causing the end-effector to collide with the surface. \note{The motion of each link during the impact tests was recorded using an OptiTrack motion capture system at a frequency of $ \SI{1}{\kilo\hertz} $.} During post-processing, moreover, the time evolution of the joint angles $ \q(t) $ was computed from the data using kinematics of the system. \note{The raw and unfiltered measurement data were employed in the subsequent evaluation so as not to distort the effects of the impact dynamics.}

\subsection{Results}
Before analyzing effects of the approach direction on the rebound direction, let us consider one trial per contact in detail and define a measure for the impact timing.

\subsubsection{Definition of the impact and elastic properties}
\begin{figure}
	\centering
	\includegraphics[width = 0.42\textwidth]{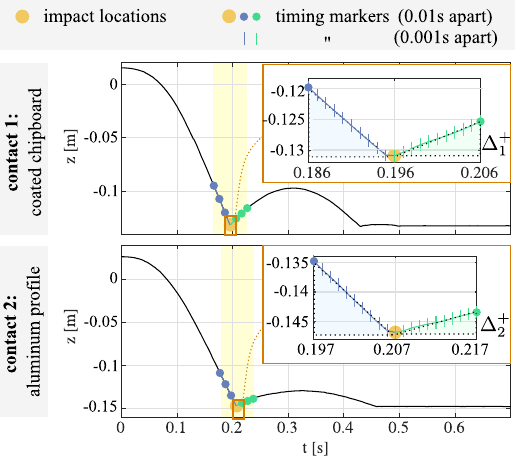}
	\caption{\note{Vertical end-effector position $ z(t) $ over time for one impact experiment on contact 1 and 2 each. We identify the impact duration to be in the range of {1-2ms}, and define the impact as the minimum of z(t) for the first collision. The velocity $ \dot{z} $ is approximately $ \SI{1.2}{\metre/\second} $ before the impact in both cases.}}
	\label{fig:res_t}
\end{figure}
\begin{figure*}[ht]
	\centering
	\includegraphics[width = 0.93\textwidth]{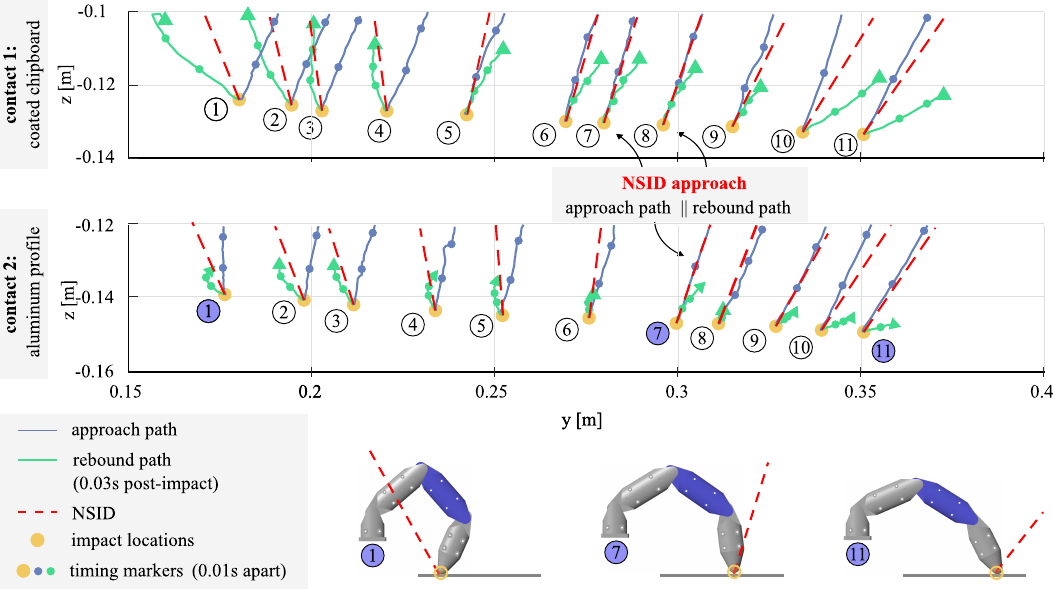}
	\caption{\note{Measured pathes of the translational endeffector position in eleven impact experiments per contact surface as well as visualizations of the configurations at the impact time for three examples. Only if the approach path aligns the NSID, the rebound path aligns with the approach path as well. In all other cases, the angle between NSID and end-effector path changes its sign upon the impact.}}
	\label{fig:res}
\end{figure*}

Figure~\ref{fig:res_t} depicts the evolution of the $ z $-coordinate of the end-effector over time for one exemplary trial per contact surface. We observe a sharp edge at the first minimum comprising 2-3~time \note{steps. This indicates} an impact duration roughly in the range of the sampling time of the motion capture system. 
We will refer to the impact as the local minimum of $ z(t) $ obtained under the first collision of the end-effector with the surface in the following. We will indicate it with a yellow marker in the plots, separating the approach phase from the rebound phase. The analysis presented in this work is valid for infinitesimal times around an impact. However, to provide more context, we will depict time windows of \SI{0.03}{\second} pre- and post-impact, in the following (cf. the sequence shaded in yellow for the two experiments of Fig.~\ref{fig:res_t}).

The selected contact surfaces have distinct elastic properties. While only slight bouncing is observed for the experiments on the aluminum surface, the end-effector clearly rebounds from the coated chipboard. This is confirmed with the zoomed plots of Fig.~\ref{fig:res_t}. While the tangents of the approach path are parallel at the impact for both surfaces, the tangent of the rebound path has a larger inclination for the coated chipboard. With \cref{a:restitution}, this implies that the restitution factor for impacts on the coated chipboard is \note{significantly higher.}

\subsubsection{Varying approach angles with respect to the NSID with a passive mechanism}
\note{We will focus on eleven trials per contact surface in the following. These were selected based on the horizontal location of the impact, such that the same range of impact locations is covered approximately equidistantly. Figure~\ref{fig:res} depicts the corresponding approach and rebound paths of the end-effector in the $ yz $-plane around the first collision. } Waypoints with a timely distance of \SI{0.01}{\second} are marked with dots.

It is important to note that all approach paths (plotted in blue) are roughly parallel. This effect is visible throughout the data set. Thus, the reconfigurations necessary to obtain the different impact locations have negligible effect on the direction of the end-effector path, which is dictated by gravity and the elastic suspensions. However, in our setup, these configuration changes come with a noticeable variation in the NSID in task space. For the presented results, the NSID, as computed at the time of the impact, tilts approximately \SI{60}{\degree} around the negative $ x $-axis. With increasing $ y $-values of the impact location, it reduces its angle to the approach path, crosses it, and then increases the relative angle again. This property is utilized to analyze NSID impacts, as well as impacts under positive and negative angles with respect to the NSID of the passive mechanism in the following.

\subsubsection{Relation between the approach direction, the rebound direction, and the NSID}
Consider the impacts on the aluminum profile depicted in the lower plot of Fig.~\ref{fig:res}. In experiment~7 the approach path in task space is tangential to the NSID in task velocity space. In this case, the rebound path aligns the approach path immediately after the impact, which matches with the presented results. With increasing angle between the NSID and the approach path, also the angular deviation of the rebound path from the approach path directly after the impact increases. We observe that an approach with a positive angle between the NSID and the approach path results in a negative angle between the NSID and the rebound path and vice versa. This is predicted by the presented theory as well: when splitting up the pre-impact velocities in an NSID contribution and an impact-invariant contribution following \eqref{eq:dxminus decomposed}, the impact-invariant contribution, which corresponds to horizontal motions here, is preserved over the impact (cf. Fig.~\ref{fig:X}). \note{Experiment 9 appears to be an outlier. Eventhough the impact seems to have occured under the NSID, a deflection of the rebound path is observed. This could, e.g. be due to measurement inaccuracies or backlash in the proposed mechanism.}

Now consider impacts on the coated chipboard as reported in the upper plot of Fig.~\ref{fig:res}. First, note that the rebound paths, which are always depicted for \SI{0.03}{\second}, are longer than for the impacts on the aluminum surface. This aligns with the earlier observation that the impacts on the coated chipboard are more elastic. Nevertheless, the same conclusions can be drawn from the experiments. The approach path roughly aligns the NSID in experiments~7 and 8 implying a rebound along the approach path. In all other cases, the rebound direction deviates from the approach direction in the expected manner.

\subsection{Conclusion}
\note{The experiments confirm that impacts under the NSID align the rebound path with the approach path for impacts with arbitrary and possibly unknown restitution factors. In utilizing a passive mechanism as a minimal example, we highlight that the NSID is a physical property of a multi-body system.}

\section{\note{Experiments with a torque-controlled robot}}\label{sec:franka_exp}
\note{In the following, we will focus on experiments with a torque-controlled robot. Besides evaluating rebound behaviour under various approach directions and controllers, we will also analyze contact force measurements in presence of contact friction.}

\subsection{\note{Experimental Setup}}\label{sec:franka_setup}
\note{We utilize a Franka robot (FR3) under various control modes, approach directions and endeffectors.}
\begin{figure}
	\centering
	\includegraphics[width = 0.48\textwidth]{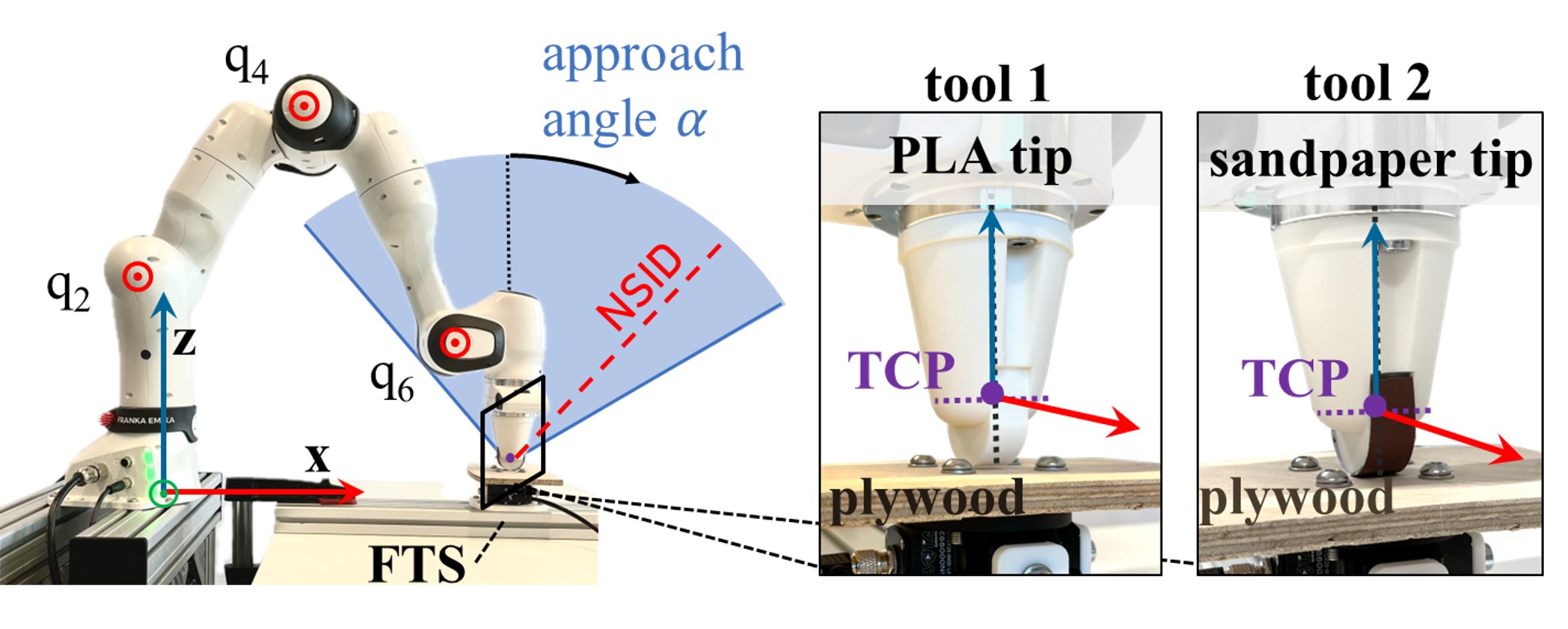}
	\caption{\note{Hardware setup for the experiments with a torque-controlled robot. The FR3 impacts on a plywood, mounted on a FTS. Two tools with identical geometry and inertial properties but distinct coating are utilized.}}
	\label{fig:exp_setup_franka}
\end{figure}
\subsubsection{\note{Hardware Setup}}
\note{The experimental setup is depicted in Fig.~\ref{fig:exp_setup_franka}. The robot is placed in front of a horizontal plywood screwed on a SensONE 6-axis force-torque sensor (FTS) by Bota Systems AG. In order to allow for clearer graphical representation of the end-effector paths, we restrict the experiments to planar motions. Hence, we only consider joints 2, 4, and 6 for the following analysis. A stiff joint PD-controller regulates the remaining joints in their initial position.}

\note{We utilize two tools attached to the robot's flange. They are 3D printed from PLA, and share the same geometry, and inertial properties. However, we attached sandpaper on the tip of tool~2 (cf. Fig.~\ref{fig:exp_setup_franka}). We define the TCP at the center of the cylindrical tip. The task coordinate $ \x $ stacks the translational position $ \begin{bmatrix}
		x & z
	\end{bmatrix}^T $ of the TCP and its orientation, in the following.
}

\subsubsection{\note{Impact Configuration}}
\note{All experiments are performed at the desired impact configuration $ \q_{\textup{imp}} $ depicted in Fig.~\ref{fig:exp_setup_franka}.  It holds $ {\x_{\textup{imp}} = \T{f}(\q_{\textup{imp}}) = \begin{bmatrix}
		\SI{0.546}{\metre} & \SI{0.07}{\metre} & \SI{0}{\radian}
	\end{bmatrix}^T}$. The NSID is evaluated as $ {\dx\imp(\q_{\textup{imp}},\nu) \approx \nu\begin{bmatrix}
		0.15 & 0.16 & -0.54
	\end{bmatrix}^T}$, which corresponds to a translational approach angle of ${\alpha \approx 43.5^\circ}$ (cf. Fig.~\ref{fig:exp_setup_franka}). The configuration has been chosen such that impacts under a broad range of approach angles can be performed without triggering safety thresholds of the FR3. The FTS is placed, such that the tool tip contacts the plywood above its center.
}

\subsubsection{\note{Approach Trajectories}}
\note{The experiments are performed using straight, translational taskspace paths. The approach paths lead toward the impact configuration, and the return paths start from it. Each path is uniquely defined by its (approach) angle~$\alpha$ (cf. Fig.~\ref{fig:exp_setup_franka}), which we vary within the range $\alpha \in [-40^\circ, 60^\circ]$.}
\note{A time parametrization is chosen, such that all impacts occur at a constant desired normal contact velocity of $ \dot{\varphi}_d = \SI{0.1}{\metre/\second}  $. The endeffector orientation is kept constant.}

\subsubsection{\note{Control Modes}}
\note{The robot is operated in three control modes.}
\note{The first one is a \textit{PD+ type Cartesian compliance controller}. We apply the control law
	\begin{equation}\label{eq:PD}
		\T \tau = \J^T\left(\M_x\ddx_d  + \T\mu\dx_d - \T{D}_{PD_+}\dot{\tilde{\mathbf{x}}} - \T{K}_{PD_+}\tilde{\x} \right) +\T g,
	\end{equation}
	with the position $ \tilde{\x} = \x - \x_d $ and velocity errors $ \dot{\tilde{\mathbf{x}}} = \dx - \dx_d $. Furthermore, it holds
	$ \T\mu(\q,\dq) = \M_x\left(\J\inv{\M}_x\T{C}-\dot{\J}\right)\inv{\J} $, with the Coriolis- and centrifugal matrix $ \T{C}(\q,\dq) \in \Real^{n\times n}$. The gravitational torques are stacked in $ \T g(\q) \in \Real^n$. The gain matrix 
	$ \T K_{PD_+} \in \Real^{3\times 3}$ is diagonal with entries $ \begin{bmatrix}
		\SI{1500}{\newton/\metre}  & \SI{1500}{\newton/\metre} & \SI{150}{\newton\metre/\radian}
	\end{bmatrix} $. A state dependent damping is chosen based on double-diagonalization \cite{AlbuSchäffer2003} with damping factor $ \xi = 0.65 $.}

\note{Secondly, a \textit{computed torque (CT) controller} in task space is employed with control law
	\begin{equation}\label{eq:CT}
		\T \tau = \J^T\M_x\left(\ddx_d  + \T\mu\dq - \T{D}_{CT}\dot{\tilde{\mathbf{x}}} - \T{K}_{CT}\tilde{\x} \right) +\T g.
	\end{equation} The diagonal gain matrices are chosen as $ \T K_{CT} = \operatorname{diag}(\SI{650}{\newton/\metre} ,\SI{650}{\newton/\metre},\SI{1000}{\newton\metre/\radian}) $ and $ {\T D_{CT} = 2\xi\operatorname{diag}(\sqrt{650},\sqrt{650},\sqrt{1000})} $, respectively.}

\note{Finally, we also utilize the robot in \textit{gravity compensation} mode, i.e. $ \T \tau = \T g $.
}

\subsubsection{\note{Impact Detection}}
\note{We utilize an online impact detection strategy to switch between pre- and post-impact trajectories and control paths. It employs a customly tuned threshold for the vertical acceleration.
}

\subsection{\note{Relation between Approach and Rebound Paths}}\label{sec:franka_pathes}
\note{Given our theoretic analysis and the experimental results with the passive system, we pose the following hypothesis to be evaluated:}
\note{\textit{Only for an impact under the NSID, an alignment of the approach path with the rebound path immediately after the impact can be achieved for a torque-controlled robot, independent of the applied rigid-body controller.}}

\subsubsection{\note{Experiments}} \note{We evaluate impacts with the PLA tool~1 under three approach angles ${ \alpha \in \{0^\circ, 20^\circ, 43.5^\circ\}}$, where the last one corresponds to the NSID. For every task space tracking controller (i.e. the PD+ controller \eqref{eq:PD} and CT controller \eqref{eq:CT}), and approach direction, we perform two experiments. They are distinct in the post-impact strategy. For the "\textit{move back}" experiments, the trajectory is reversed upon impact detection. Thus, ideally, the end-effector moves in and out of the contact along the same path. Conversely, for the "\textit{free motion}" experiments, the control mode is switched to gravity compensation upon impact detection as a baseline.}

\subsubsection{\note{Results}}
\begin{figure}[tb]
	\vspace{1mm}
	\centering
	\includegraphics[width=0.45\textwidth]{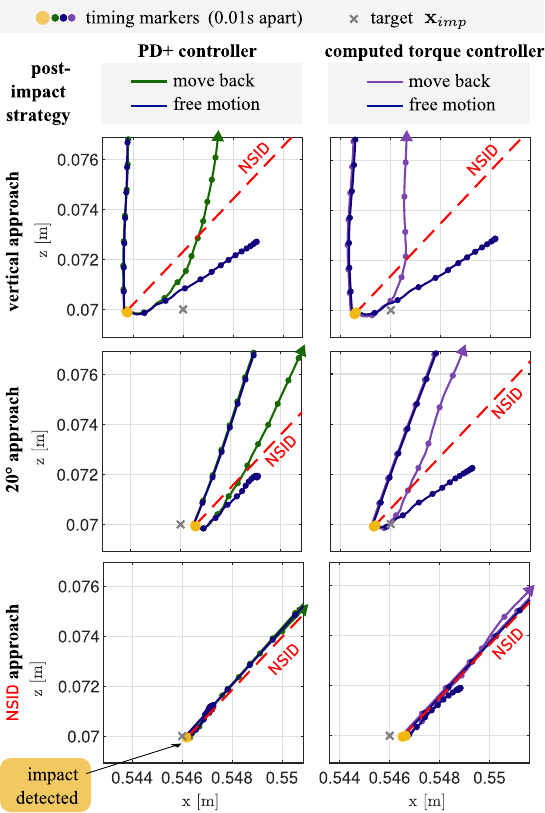}
	\caption{\note{Translational end-effector paths obtained for impacts with the FR3. For impacts under the NSID, the rebound paths align the approach path, also directly after the impact. Conversely, the end-effector is deflected from the path by non-NSID impacts. This immediate deviation cannot be prevented by the controller, but must be recovered over time.}}
	\label{fig:franka_res}
\end{figure}
\note{Figure~\ref{fig:franka_res} shows the translational end-effector paths around the contact for all twelve experiments. The approaches on the left are performed under the PD+ controller. Results for the CT controller are shown on the right.} 

\note{First, consider the top plots belonging to the vertical approach. For the "\textit{free motion}" post-impact strategy, the end-effector is deflected from the vertical approach path in positive $x$-direction. It comes to a full stop approximately $\SI{3}{\milli\metre}$ above the surface. As opposed to these baseline experiments the end-effector motion of the "\textit{move back}" experiments is controlled post-impact. However, directly after the impact the rebound paths do not align with the vertical approach paths, but with the \textit{"free motion"} paths. Thus, the impact introduces the same initial deflection for the controlled robot under two different control approaches and the uncontrolled robot in gravity compensation mode. As expected, the controllers compensate the occured error over time. The depicted section of Fig.~\ref{fig:franka_res} shows the initiated motion towards the desired vertical path.
}
\note{For the approach under the second non-NSID approach at $\alpha = 20^\circ $, the same observations hold as for the vertical approach. }

\note{Conversely, consider the NSID approach under $\alpha = 43.5^\circ $ as depicted in the bottom plots of Fig.~\ref{fig:franka_res}. The rebound path in "\textit{free motion}" aligns the approach path obtained under both control approaches, before the endeffector stops. Moreover, also for the "\textit{move back}" strategy, no significant deviation from the desired path is introduced by the impact.}

\subsubsection{\note{Discussion}}\label{sec:franka:d_paths}\note{The results confirm that an NSID approach aligns the rebound path with the approach path. In the considered experiments with a torque-controlled robot, this was the case under a CT controller, a Cartesian compliance (PD+) controller, and in free motion. Moreover, non-NSID approaches yielded a deviation from the approach paths independent from the rigid-body controller. 
Thus, overall the results suggest that the posed hthesis is valid.}

\note{It is notable that the desired impact target was not perfectly met in all experiments. For the four vertical impacts, we observe a maximum deviation of approximately \SI{2}{\milli\metre} in the $x$-direction, which also implies slight variations in the impact configurations. Figure~\ref{fig:franka_uncertainties}(a) analyzes the sensitivity of the NSID to such configuration uncertainties. In the considered setting, angular deviations of up to $1^\circ$ result in translational end-effector shifts of roughly $\pm\SI{15}{\milli\metre}$, thus covering the observed variations. Yet, they induce only minor deviations in the NSID angle with offsets within $[-1.3^\circ, 1.1^\circ]$. Thus, the small errors in the impact configuration do not affect the NSID considerably and the experiments remain comparable. Figure~\ref{fig:franka_uncertainties}(b) and (c) further confirms that the NSID is not considerably sensitive with respect to expectable uncertainties in both the inertial parameters and the constraint orientation.
}
\begin{figure}[tb]
	\vspace{1mm}
	\centering
	\includegraphics[width=0.49\textwidth]{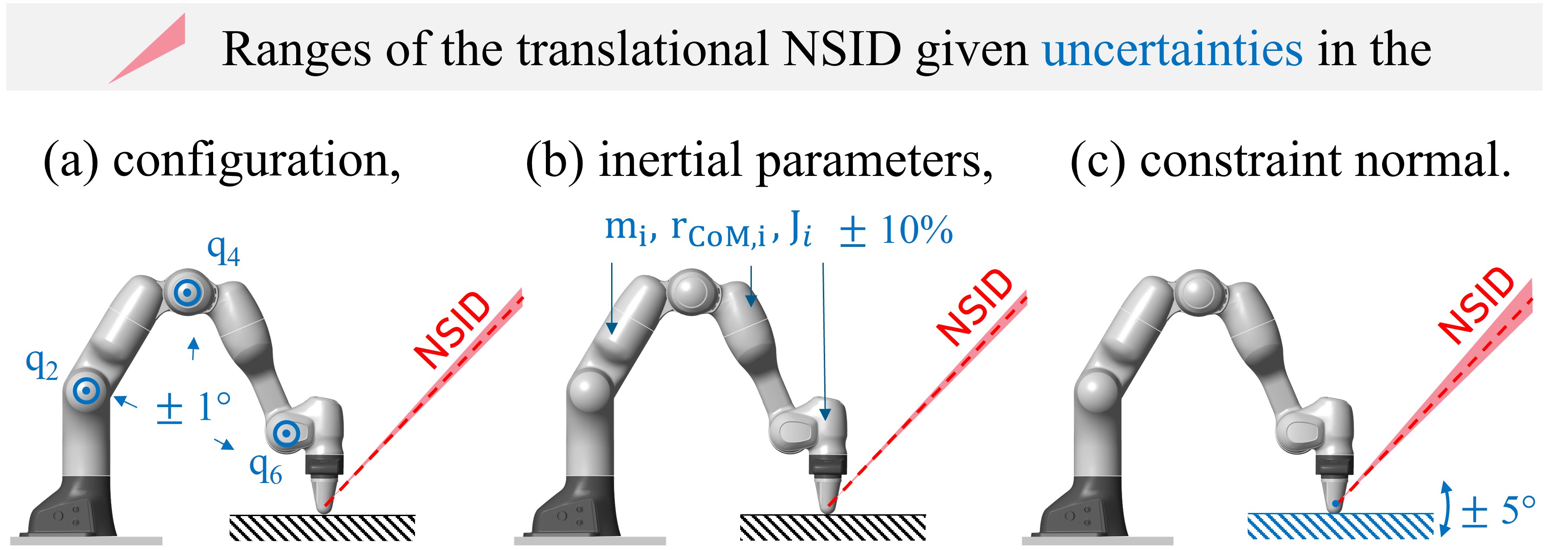}
	\caption{\note{Sensitivity of the translational NSID with respect to uncertainties evaluated for the target configuration in the experiments. Only slight deviations are introduced by expectable mismatches in configuration (a), inertial parameters (b), and constraint normal (c).}}
	\label{fig:franka_uncertainties}
\end{figure}

\note{The experiments were conducted at relatively low approach velocities of $\dot\varphi = -\SI{0.1}{\metre/\second}$ to avoid damaging the torque sensors and gears of the robot. It is plausible that induced errors would increase at higher approach velocities using a system designed to withstand stronger impacts.}

\subsection{\note{Relation Between the Approach Direction and Total Momentum Transfer}}\label{sec:franka_FTS}
\note{Our analysis in Sec.~\ref{sec:friction} revealed that post-impact slippage on a frictional, fully inelastic contact surface can be prevented by choosing the approach directions from a friction-dependent cone. Among those, the NSID is the unique direction for which the impulsive force becomes orthogonal to the contact. In the following, we aim to validate this conclusion in a realistic setting, where several of the underlying assumptions are not perfectly met. First, the previous experiments show a slight post-impact bouncing of the endeffector in free motion, which indicates slight impact elasticity. Moreover, the impact is not instantaneous. Hence, contact forces cannot become infinitely large and momentum is transferred over a small but finite time interval. In that sense, we hypothesize:}
\note{\textit{For an impact under the task space NSID, the momentum change is transferred along the constraint normal at the contact point.}}

\note{We will evaluate this utilizing the FTS readings at the impact for the two tools with different frictional properties.}

\subsubsection{\note{Experiments}}\note{Impact experiments are conducted with both tools using the CT controller pre- and the “\textit{free-motion}’’ strategy post-impact. The approach angle $ \alpha $ is varied from $ -40^\circ $ to $ 60^\circ $. We use a finer resolution in the vicinity of the NSID angle $ \alpha \approx 43.5^\circ $ and larger increments at greater deviations. From the FTS readings, we obtain the forces $ F_x $ and $ F_z $ applied on the endeffector in $ x $- and $ z $-directions, respectively, at $ \SI{2}{\kilo\hertz} $. For the total momentum transferred on the end-effector during the contact in direction $ x $, it holds \begin{equation}\label{eq:momentum_x}
		p_z = \int_{t_0}^{t_e} F_z(t) \; dt,
	\end{equation} where the times $ t_0 $ and $ t_e $ correspond to the beginning and the end of the contact, respectively.\footnote{\note{Note that, for an idealized, instantaneous impact with $ t_e -t_0 \rightarrow 0 $, the momentum $ p_z $ corresponds to the impulsive force $ \Lambda $.}} The momentum $ p_x $ in $ x $-direction is obtained from $ F_x(t) $ accordingly.
	During post-processing, the total momentum transferred during the contact in each direction was computed numerically for every trial. }

\subsubsection{\note{Results}}\note{Figure~\ref{fig:franka_FTS} shows the fraction $ p_x/p_z $ over the approach angle for both tools. Moreover, the underlying forces $ F_x(t) $ and $ F_z(t) $ are provided over time for two characteristic approach angles.}
\begin{figure}[tb]
	\vspace{1mm}
	\centering
	\includegraphics[width=0.46\textwidth]{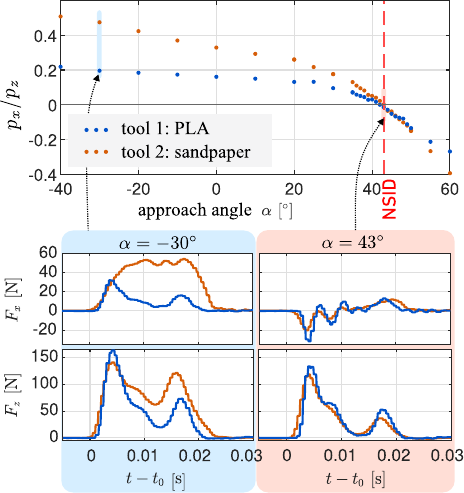}
	\caption{\note{Ratio of momentum transfer in $ x $- and $ z $-direction at the end-effector as calculated from the FTS measurments $ F_x $ and $ F_z $ for impacts under various approach directions. An NSID impact corresponds to vanishing momentum transfer in $ x $-direction for tools with distinct frictional properties.}}
	\label{fig:franka_FTS}
\end{figure}
\note{The absolute values of $ p_x/p_z $ are larger for the sandpaper tool as compared to the PLA tool. This is as expected, as the sandpaper increases the friction between the endeffector and the plywood. Thus, larger forces in $ x $- direction can be transferred given the same normal force. This becomes also visible in the force plot $ F_x(t) $ for $ \alpha = -30^\circ $. There, a force of approximately $ \SI{50}{\newton} $ is transmitted over about $ \SI{0.01}{\second} $ for the impact with the sandpaper tool. Conversely, the force only shows a short peak of up to $ \SI{35}{\newton} $ for the PLA tool. For both tools, the momentum ratio is decreasing with increasing approach angle. The curves intersect approximately at their zero crossing. This approximately corresponds to an approach angle which equals the NSID angle.}

\note{While $ p_x $ vanishes for impacts under the NSID, there are still forces transmitted in $ x $-direction. The corresponding force profile shows oscillatory forces around $ F_x = 0 $, which integrate to zero. Notably, the profile of $ F_x $ is very similar for both tools, as opposed to the approach at $ -30^\circ $. Thus, the differing contact friction seems to have limited effects under the NSID.}

\subsubsection{\note{Discussion}}\note{Our theory predicts that only under an NSID approach of a frictional, inelastic surface, the \textit{impulsive force} becomes perpendicular to the surface under the instantaneous impact. The provided experiments show that in the considered scenario of a slightly elastic, finite-time impact, the same conclusion holds for the \textit{momentum transferred over the finite impact time}. Thus, the posed hypothesis applies in the considered case. This can be an indication that \Cref{thm:perpendicular force} is also meaningful for not perfectly inelastic contacts.}

\note{Future work can focus on analyzing the general case of frictional, elastic impacts. To the best of the authors’ knowledge, suitable instantaneous impact models are still under debate \cite{Gilardi2002, Jia2017}. Consequently, a systematic experimental validation of such models within a robotic context appears necessary.}

\section{Conclusion and Outlook}\label{sec: discussion}
\note{This paper derived the NSID in configuration and task space from a projection-based analysis of the common instantaneous impact model. We performed a comprehensive theoretical analysis of this quantity, highlighting that it is characteristic to a robotic impact scenario. We provided interpretations and covered broad classes of robotic systems and contact properties.}

\note{While the analysis relies on several idealizing assumptions, our experiments show that the results are relevant in practice. In particular, we verified that the approach direction w.r.t. the NSID affects i) the post-impact rebound direction and ii) the direction of momentum transfer at the contact for a frictional, inelastic surface. The NSID was shown to be independent from the contact's restitution factor. Moreover, we proved that it is an inherent property of passive multi-body systems, which prevails for actuated systems and cannot be shaped by rigid body controllers.}

\note{Thus, overall, we are confident that our findings can support future impact-aware planning and control approaches. Impacts performed along the NSID rebound in the same direction, independent from the restitution factor. This ensures that, for example, an impact tool will not damage the specimen after an impact. Rather, it will follow its approach path backward until the controller affects its motion again. In a hammer-and-nail example, moreover, the transfer of momentum strictly along the contact normal prevents the nail from bending. Furthermore, in the inelastic case, NSID impacts prevent slippage, which could, e.g., benefit locomotion.}

\note{Moreover, the presented analysis provides intuitive insights in robotic impact effects, in general. This can inform the design of model-based impact strategies beyond the discussed applications benefitting from an NSID impact. Because our analysis covers a wide range of systems, including fixed-base floating base and constrained systems, torque-controlled robots and impact-robust elastic robots, we believe that the results are accessible to a broad range of domains.}

\note{Future works can also focus on analyzing the NSID for frictional (partially) elastic impacts or systems under multiple inequality constraints.}